\newcommand\RR{\mathbb{R}}
\newcommand\CC{\mathbb{C}}
\newcommand\EE{\mathbb{E}}
\newcommand\cN{\mathcal{N}}
\newcommand\cW{\mathcal{W}}
\newcommand\opn[1]{\operatorname{#1}}
\newcommand\AS{\mathcal{A}}
\newcommand\binaryAS{\AS}
\newcommand\sub[1]{_{\opn{#1}}}
\newcommand\submax{\sub{max}}
\newcommand\rank{\opn{rank}}
\newcommand\sign{\opn{sign}}
\newcommand\Real{\opn{Re}}
\newcommand\del{\backslash}
\newcommand\prob{P}
\newcommand\tailsum[1]{\mathfrak{T}_{\isoF{#1}}}
\newcommand\activation{\tau}
\newcommand\xfunc{[\mathbf x]}
\newcommand\conv{*}
\newcommand\texttype[1]{\mathrm{#1}}
\newcommand\entwise{\odot} 
\newcommand\entp{\entwise} 
\newcommand\Wdistr{\cW}
\newcommand\Xdistr{\rho}
\newcommand\Xnorm[1]{\|#1\|_{\Xdistr}}
\newcommand\XnormN[1]{\|#1\|_{\cN}}
\newcommand\bracket[2]{\langle#1|#2\rangle_{\Xdistr}}
\newcommand\bracketN[2]{\langle#1|#2\rangle_{\cN}}
\newcommand\LP[2]{{#1}^{\texttype{LP}(#2)}}
\newcommand\HP[2]{{#1}^{\texttype{HP}(#2)}}
\newcommand\lp{\ell}
\newcommand\lpf[1]{\lp^{(#1)}}
\newcommand\hpf[1]{\hp^{(#1)}}
\newcommand\hp{h}
\newcommand\Ddd{\opn{D}_\Xdistr}
\newcommand\DddN{\opn{D}_\cN}
\newcommand\Lt{\mathcal{L}^2}
\newcommand\Ltd{\Lt(\RR^d;\Xdistr)}
\newcommand\Ltnd{\Lt(\RR^{nd};\Xdistr_n)}
\newcommand\cxh{\tilde c}
\newcommand\isoF[1]{\widehat{#1}} 
\newcommand\nnF[2]{\mathfrak F_{#2}{#1}} 
\newcommand\nnf[1]{\nnF{#1}{}} 
\newcommand\Dtt[1]{D_{\Xdistr}^{(#1)}}
\newcommand\DttN[1]{D_{\cN}^{(#1)}}
\newcommand\Xtail{\epsilon_\Xdistr}
\newcommand\thmspace{\vspace{1em}}
\newcommand{\rev}[1]{#1}
\newenvironment{revs}{}{\ignorespacesafterend}
\begin{document}

\title{Efficient anti-symmetrization of a neural network layer by taming the sign problem}

\author[1,2]{
Nilin Abrahamsen
	\thanks{
	Corresponding author.
{\tt nilin@berkeley.edu}.
	}
}
\author[1,3]{
Lin Lin
	\thanks{
		{\tt linlin@math.berkeley.edu}.
	}
}

\affil[1]{Department of Mathematics, University of California, Berkeley}
\affil[2]{The Simons Institute for the Theory of Computing}
\affil[3]{Lawrence Berkeley National Laboratory}

\begin{abstract}
Explicit antisymmetrization of a neural network is a potential candidate for a universal function approximator for generic antisymmetric functions, which are ubiquitous in quantum physics. However, this procedure is a priori factorially costly to implement, making it impractical for large numbers of particles. The strategy also suffers from a sign problem. Namely, due to near-exact cancellation of  positive and negative contributions, the magnitude of the antisymmetrized function may be significantly smaller than before antisymmetrization. 
We show that the anti-symmetric projection of a two-layer neural network can be evaluated efficiently, opening the door to using a generic antisymmetric layer as a building block in anti-symmetric neural network Ansatzes. This approximation is effective when the sign problem is controlled, and we show that this property depends crucially the choice of activation function under standard Xavier/He initialization methods. As a consequence, using a smooth activation function requires re-scaling of the neural network weights compared to standard initializations. 
\end{abstract}

%
%
%
\keywordone{Fermions,}
\keywordtwo{Sign problem, }
\keywordthree{Neural quantum states, }
\keywordfour{}
\keywordfive{}

\maketitle

\section{Introduction}



Simulation of quantum chemistry from first principles depends on the accurate modeling of \emph{fermionic} system comprised of the electrons. The Pauli exclusion principle dictates that fermionic wavefunctions must be antisymmetric with respect to particle exchange. This antisymmetry poses challenges; for instance, as the number of fermions increases, the effective parameterization of such wavefunctions becomes exceedingly complex for many systems.
The antisymmetry condition also results in near-exact cancellation between positive and negative contributions when computing observables.
This leads to the so-called fermionic sign problem (FSP), which was originally discovered in quantum Monte Carlo (QMC) simulations~\cite{SorellaBaroniCarEtAl1989,LohJrGubernatisScalettarEtAl1990,Ceperley1991}. 

Over the last decade, the scientific community has witnessed a surge in the development of methods employing neural networks (NNs) as universal function approximators. This surge is due to advancements in software tools, hardware capabilities, and algorithmic improvements. These developments have had a significant impact on the modeling of fermionic systems~\cite{LuoClark2019,HanZhangE2019,HermannSchaetzleNoe2020,PfauSpencerMatthewsEtAl2020,StokesMorenoPnevmatikakisEtAl2020,LinGoldshlagerLin_vmcnet}. 

However, constructing a universal NN representation for antisymmetric functions that does not suffer from the curse of dimensionality is still an open question. In the absence of symmetry constraints, even a simple structure such as a two-layer NN can act as a universal function approximator. In theory, one could explicitly antisymmetrize such a two-layer NN to parameterize universal antisymmetric functions. Such an explicitly antisymmetrized NN structure has been recently studied in QMC calculations, which can yield effectively the exact ground state energy for small atoms and molecules~\cite{LinGoldshlagerLin_vmcnet}. However, the computational cost of this antisymmetrization procedure appears \emph{a priori} to grow factorially with the system size.

In this paper we give a procedure to \emph{efficiently} evaluate the explicit anti-symmetrization of a two-layer neural network using a quadrature procedure. This is surprising due to the factorially many terms in the definition of the anti-symmetrization. For this statement to be meaningful we require that the sign problem is controlled, meaning that the anti-symmetrization does not make the original function vanish due to cancellations. We demonstrate that with the standard Xavier/He initialization, the sign problem is controlled when the activation function in the neural network is \emph{rough}. Examples of a rough (resp. smooth) activation function in the ReLU (resp. sigmoid). Alternatively, this statement implies that to avoid the sign problem with the sigmoid activation, the weights in the first layer need to be asymptotically larger than the standard Xavier/He initializations.


Among all activation functions, the exponential activation function (real or complex) plays a special role in our analysis. This is because antisymmetrizing a two-layer NN with an exponential activation function gives rise to a determinant (called a Slater determinant), which can be evaluated in polynomial time.
By exploring the Fourier representation of a (rough) activation function, we can approximately express the explicitly antisymmetrized two-layer NN as a linear combination of polynomially  (with respect to the system size and inverse precision) many Slater determinants. 
This overcomes the factorial scaling barrier, and gives rise to a polynomial-time algorithm for approximate evaluation of antisymmetrized two-layer neural networks (\cref{thm:efficient_multiplicative_error}).

\subsection{Related work}

The representation of anti-symmetric functions is extensively studied in physics, where a widely used class of Ansatzes for anti-symmetric functions takes the form of a \textit{sum of Slater determinants}.  
Slater determinants can span a dense subset of the anti-symmetric space but the representation is very inefficient.  
Indeed, even in the case of a finite single-particle state space $|\Omega|=O(n)$ we would require $\binom{|\Omega|}{n}$ Slater determinants to span the anti-symmetric space. 
\cite{zweig_towards_2022} finds certain anti-symmetric functions that cannot be efficiently approximated using a simple sum of Slater determinants, but can be effectively expressed using a more complex Ansatz called the Slater-Jastrow form.

In the machine learning literature there is a rich body of works related to permutation-invariant data, i.e., when the input data is a set \cite{zaheer_deep_2017,santoro_simple_2017,yarotsky_universal_2022,zweig_functional_nodate}. 
But the literature on \emph{anti}-symmetrized neural networks is sparse. \cite{abrahamsen2023antisymmetric} gave approximation bounds for the class of anti-symmetric functions in the Barron space, which can be viewed as the set of functions that can be expressed as infinite two-layer neural networks.

\section{Problem setting}
The wave function $\psi(x_1,\ldots,x_n)$ of a system of $n$ \emph{indistinguishable particles} in a $d$-dimensional space ($d=1,2,3$) satisfies permutation symmetry of $|\psi|$ under interchange of the $n$ inputs $x_i\in\RR^d$. \emph{Fermions} are indistinguishable particles which satisfy the Pauli exclusion principle and correspond to an \emph{antisymmetric} wave function $\psi$. For a permutation $\pi\in S_n$ with sign $(-1)^\pi$, we have $\pi(\psi)=(-1)^\pi\psi$ where we have defined $\pi(\psi):\RR^{nd}\mapsto\CC$ by $\pi(\psi)(x):=\psi(x_{\pi(1)},\ldots,x_{\pi(n)})$ for $x\in\RR^{nd}$.
For any  $f:\RR^{nd}\to\CC$ we can define its explicit antisymmetrization
\begin{equation}\label{ASdef}\AS f=\frac1{\sqrt{n!}}\sum_{\pi\in S_n}(-1)^\pi\pi(f).\end{equation}
As will be shown later, the prefactor $1/\sqrt{n!}$ is the natural scaling in the antisymmetrization process.

A function defined on $\RR^d$ is called a single-particle function. Let $\Xdistr$ be a fast-decaying probability density on $\mathbb R^d$ and let $\Xdistr_n=\Xdistr^{\otimes n}$ be a product of single-particle densities. For simplicity of analytic computation, we may take $\Xdistr$ to be the density of the standard Gaussian $\mathcal N(0,I_{d})$ (called a Gaussian envelope). We represent an $n$-particle fermionic wave function as
\begin{equation}\label{sqrtweight}\psi=\sqrt{\Xdistr_n}\entp\AS f=\AS(\sqrt{\Xdistr_n}\entp f),\end{equation}
where $\entp$ denotes multiplication of function values. The wave function should be normalized as $\|\psi\|^2=1$, where $\|\cdot\|$ is the $L^2$-norm on $\RR^{nd}$. \cref{sqrtweight} implies $\|\psi\|=\Xnorm{\AS f}$ where $\Xnorm{\cdot}$ is the norm induced by the inner product $\bracket{f}{g}=\int \bar f(x) g(x)d\Xdistr_n(x)$.

If $f=f_1\otimes\cdots\otimes f_n$ is a product of single-particle functions, then so is
$\phi=\phi_1\otimes\cdots\otimes \phi_n$ where $\phi_i=\sqrt{\Xdistr}\entp f_i$. In this case $\psi(x)=(\AS \phi)(x)$ is a determinant (called the Slater determinant) denoted by $\phi_1\wedge\cdots\wedge \phi_n$ and defined by $(\phi_1\wedge\cdots\wedge \phi_n)(x_1,\ldots,x_n)=\frac1{\sqrt{n!}}\det[(\phi_i(x_j))_{ij}]$.
The normalization in Eq. \eqref{ASdef} is such that if $\phi_i$ are orthonormal functions on $L^2(\RR^d)$ then $\psi$ is normalized by Pythagoras' theorem, $\|\psi\|=\Xnorm{\AS f}=1$.

By letting $f$ range over a universal class of functions on $\RR^{nd}$ we obtain a universal class of antisymmetric functions $\psi=\sqrt{\Xdistr_n}\entp\AS f$ which are not in general normalized. However, this approach has two important drawbacks \emph{a priori}:
\begin{enumerate}
    \item\label{cancitem} The procedure of normalizing $\psi$ becomes numerically unstable if cancellations in \eqref{ASdef} cause the magnitudes of $\AS f$ to be too small compared to $f$. This can be viewed as a manifestation of the fermionic sign problem in this setting.
\item The sum \eqref{ASdef} has $n!$ terms, making it in general intractable to evaluate the sum for all but small values of $n$.
\end{enumerate}

\begin{figure}\label{fig:threeactivations}
    \centering
    \includegraphics[scale=.8]{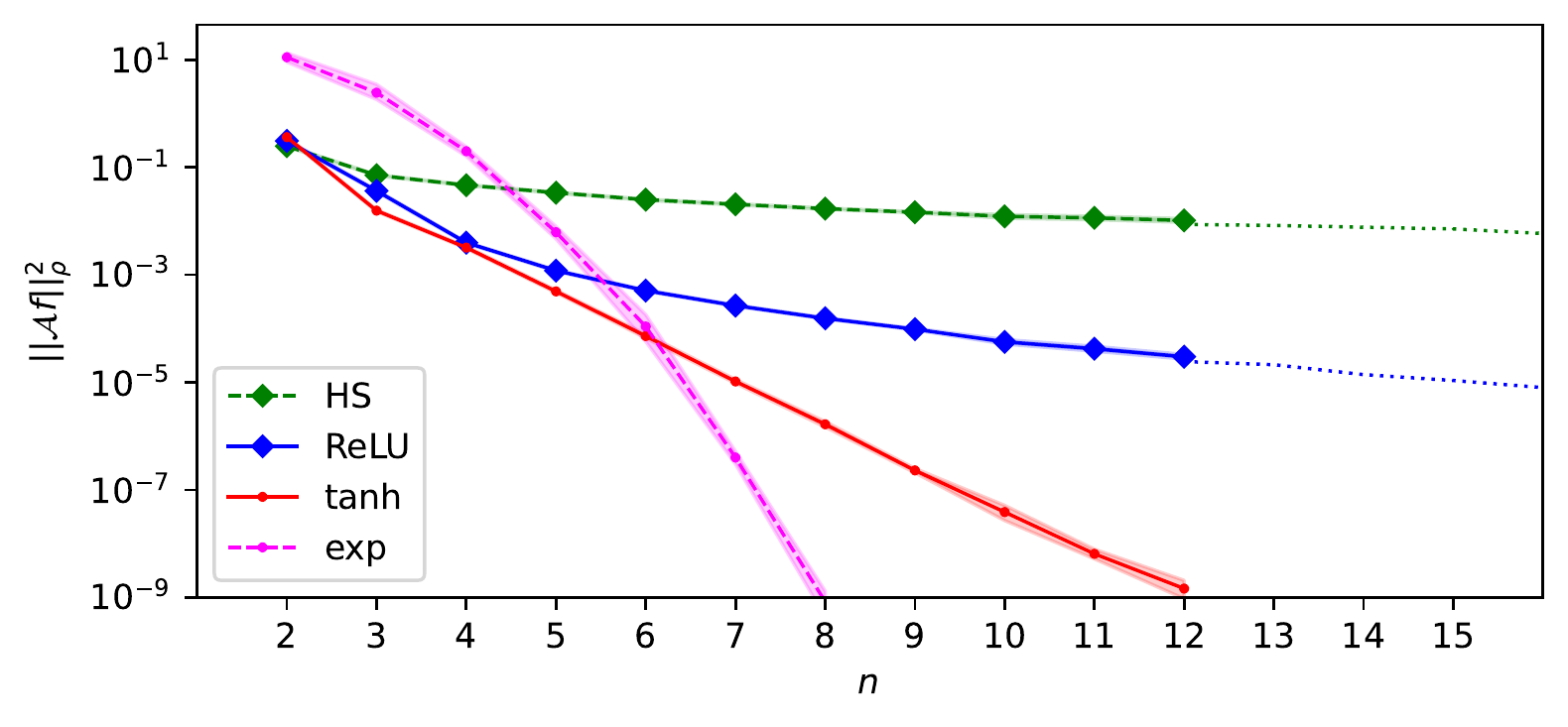}\hphantom{aaaaa}
    \caption{Log-plot of $\EE[\Xnorm{\AS f}^2]$ as a function of $n$ for different activation functions $\activation$: $\exp$ (magenta), $\tanh$ (red), ReLU (blue), and Heaviside step function (green). The weights are sampled from the Xavier initialization (\cref{def:He}) with $d=3$, $m=nd$. Shaded areas represent $90\%$ confidence regions. Values for $n\le 12$ are computed by direct antisymmetrization, and dotted lines on the right are computed from \eqref{Eiint}. }
\end{figure}

We consider the case when $f$ is given by a two-layer NN
\begin{equation}
f_{W,a,b}(x)=\sum_{k=1}^ma_k\activation(w^{(k)}\cdot x+b_k),
\label{NN}\end{equation}
where $\activation:\mathbb R\to\mathbb R$ is some activation function, $w^{(k)}\in\mathbb R^{nd}$, and $b_k,a_k\in\mathbb R$ for each $k=1,\ldots,m$.

To illustrate the sign problem, \cref{fig:threeactivations} shows the magnitude $\Xnorm{\AS f}^2$ for four activation functions $\exp$, $\tanh$, ReLU ($\activation_{\mathrm{ReLU}}(y)=\max\{0,y\}$), and the Heaviside step function ($\activation_{\mathrm{HS}}(y)=1_{y>0}$).
As the system size $n$ increases, the norm decreases with respect to $n$ for all activation functions. However, the decay rate depends on the smoothness of the activation function.
The deterioration of the sign problem is much more severe for smooth activation functions ($\exp$, $\tanh$) than for rough activation functions (ReLU, Heaviside). 
We aim to quantify this effect and investigate more precisely how the magnitude of $\AS\activation_w$ depends on $\activation$.

\section{Main results}
\label{sec:results}

We state our results in terms of the Fourier transform $\isoF\activation$ of the activation function. A typical activation function does not have finite integral over $\RR$, so its Fourier transform is not defined as a convergent integral but rather in the sense of \emph{tempered distributions} \cite{reed_i_1981}. We will not need the precise definition of $\isoF\activation$ but only that it satisfies the \emph{Fourier inversion formula} in the sense that for $0<\epsilon<1$,
\begin{equation}
\activation(y)=\frac1{\sqrt{2\pi}}\int_{|\theta|>\epsilon}\isoF\activation(\theta)e^{i\theta y}d\theta
+p(y)+C_\epsilon+O_{\epsilon\to0}(\epsilon g(y)),
\label{Fdef}
\end{equation}
where $p$ is a polynomial of bounded degree and $g$ is bounded by a polynomial. In particular we will be able to take $p,C_\epsilon\equiv0$, $g(y)=|y|$ for $\activation=\tanh$ and $p(y)=y/2$, $C_\epsilon=\frac1{\pi\epsilon}$, $g(y)=y^2$ for $\activation=\opn{ReLU}$. The integral in \eqref{Fdef} converges for these activation functions since $\int_{|\theta|>\epsilon}|\isoF\activation(\theta)|d\theta<\infty$:


\begin{figure}[H]
\caption{Fourier transforms of different activation functions. }
\label{tab:activationexamples}
\centering
{
\renewcommand{\arraystretch}{2}
\begin{tabular}{c|c|c}
$\activation(y)$&$\isoF\activation(\theta)$&Fourier tail decay $K$
\\\hline
$\opn{ReLU}(y)$&$\frac{-1}{\sqrt{2\pi}\cdot\theta^2}+\sqrt{\tfrac\pi2}i\delta'(\theta)$&$3$ (rough)
\\
$\tanh(y)$&$\frac{-i\sqrt{\pi/2}}{\sinh(\pi\theta/2)}$ &$\infty$ (smooth)
\end{tabular}
}
\end{figure}

Consider the decomposition the an activation function into low- and high-frequency parts as follows:
\thmspace
\begin{definition}\label{def:hpdef}
For $\activation:\RR\to\CC$ define its \emph{high-pass} $\HP{\activation}{t}$ at threshold $t>0$ by
\begin{equation}
\label{fourieractivation}
\HP{\activation}{t}(y)=\frac1{\sqrt{2\pi}}\int_{|\theta|>t}\isoF\activation(\theta)e^{i\theta y}d\theta.
\end{equation}
Define its \emph{low-pass} as the remainder $\LP\activation{t}=\activation-\HP\activation{t}$.
\end{definition}
\thmspace
\cref{Fdef} says that $\LP\activation\epsilon=p+C_\epsilon+O(\epsilon g)$ as $\epsilon\to0$.

\thmspace
\begin{definition}\label{def:smoothrough}
For an activation function $\activation$ define its frequency tail $\tailsum{\activation}:(0,\infty)\to[0,\infty)$ by
\begin{equation}
\tailsum{\activation}(t)=\int_{|\theta|\ge t}|\isoF\activation(\theta)|^2d\theta.
\end{equation}
We define the \emph{tail decay} $K\ge0$ of $\isoF\activation$ as the largest $K$ such that $\tailsum{\activation}(t)=O(t^{-K})$ as $t\to\infty$. More precisely, $K=\limsup_{t\to\infty}\frac{-\log \tailsum\activation(t)}{\log t}$. 
\end{definition}
\thmspace

\newcommand\defineO{
\footnote{Here $\Theta(b_t)$ is to be understood as non-negative by definition.
More generally we use the standard $O(\cdot)$-notation: Write $a_t=O(b_t)$ and $b_t=\Omega(|a_t|)$ if $|a_t|\le Cb_t$ for all $t$ and some constant $C>0$. Write $\tilde b_t=\Theta(b_t)$ if $\tilde b_t=O(b_t)$ and $\tilde b_t=\Omega(b_t)$. Write $a_t=\tilde O(b_t)$ if $a_t=O(b_t|\log b_t|^{O(1)})$. Write $a_t=o(b_t)$ and $b_t=\omega(|a_t|)$ if $|a_t|=\epsilon_t b_t$ for some $\epsilon_t$ that converges to $0$.}
}

\thmspace
\begin{definition}[Smooth and rough activation functions]
\leavevmode
\begin{enumerate}
    \item 
A function $\activation:\RR\to\CC$ is \emph{smooth} if its Fourier transform $\isoF\activation$ has tail decay $\infty$, i.e., if $\tailsum\activation(t)=t^{-\omega(1)}$ decays faster than polynomially as $t\to\infty$. 
In particular, any activation function with $\isoF\activation(\theta)=\theta^{-\omega(1)}$ is smooth.
    \item
$\activation$ is \emph{rough} if there exists $k>1$ and non-zero constants $z_+,z_-\in\CC$ such that $\isoF\activation(\theta)=z_+\Theta(|\theta|^{-k})$\defineO as $\theta\to\infty$ and $\isoF\activation(\theta)=z_-\Theta(|\theta|^{-k})$ as $\theta\to-\infty$. In this case $\activation$ has Fourier tail decay $K=2k-1$.
\end{enumerate}
\end{definition}

\thmspace
Our results below for rough activation functions hold for a more general definition of roughness which allows the Fourier transform to have varying phase. We give this definition of \emph{generalized rough} activation functions in \cref{sec:generalrough}.

We consider the two-layer network \eqref{NN} with randomly initialized weights using two standard initialization strategies. It is typical to initialize the biases to zero.
\thmspace
\begin{definition}\label{def:He}
We say that $f_{W,a}(x)=\sum_{k=1}^m a_k\activation(w^{(k)}\cdot x)$ is chosen with the \emph{Xavier initialization} or \emph{He initialization} if the $mnd+m$ weights  $W=(w^{(k)}_{ij}),a=(a_i)$ are chosen independently from $ a_k\sim\cN(0,\cxh/m)$ for each $k=1,\ldots,m$, and $w^{(k)}_{ij}\sim\cN(0,\cxh/(nd))$ for $k=1,\ldots,m$, $i=1,\ldots,n$, and $j=1,\ldots,d$. Here $\cxh=1$ corresponds to the Xavier initialization and $\cxh=2$ to the He initialization.
\end{definition}

\thmspace
\begin{restatable}[Upper bound, sign problem deteriorates super-polynomially for smooth activation functions]{theorem}{Smooththm}
\label{thm:smooththm}
Let $f_{W,a}:\RR^{nd}\to\CC$ be given by a two-layer neural network \eqref{NN} with activation function $\activation$ and let $\Xdistr=\cN(0,I_{nd})$ be the Gaussian envelope function. If $\activation$ is smooth and $f_{W,a}$ is sampled from the Xavier or He initializations, then 
with probability $1-o(1)$ over $W$, $\EE_{a|W}[\XnormN{\AS f_{W,a}}^2]=n^{-\omega(1)}$.
\end{restatable}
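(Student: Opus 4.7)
The idea is to reduce to an unconditional second-moment bound and then exploit the Slater-determinant structure that emerges from the Fourier representation of $\activation$. Because the $a_k\sim\cN(0,\cxh/m)$ are independent of $W$ with $\EE_a[a_ka_l]=(\cxh/m)\delta_{kl}$, one has $\EE_{a|W}[\XnormN{\AS f_{W,a}}^2]=\tfrac{\cxh}{m}\sum_{k=1}^m\XnormN{\AS\activation_{w^{(k)}}}^2$, where $\activation_w(x):=\activation(w\cdot x)$. Since the $w^{(k)}$ are i.i.d., a single application of Markov's inequality to this non-negative random average reduces the theorem to the unconditional bound $\EE_w[\XnormN{\AS\activation_w}^2]=n^{-\omega(1)}$.

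Using \cref{Fdef} and noting that the polynomial piece $p$ and the constant $C_\epsilon$ are annihilated by $\AS$ (a sum of single-variable polynomials of bounded degree in the $x_i$ lies in the symmetric subspace once $n$ exceeds the degree), we may write $\AS\activation_w=\tfrac{1}{\sqrt{2\pi}}\int\isoF\activation(\theta)\AS[e^{i\theta w\cdot\,\cdot}]\,d\theta$ modulo a controllable $O(\epsilon)$ error. Because $e^{i\theta w\cdot x}=\prod_i e^{i\theta u^{(i)}\cdot x_i}$ factorizes across particles, each $\AS[e^{i\theta w\cdot\,\cdot}]=\phi^\theta_1\wedge\cdots\wedge\phi^\theta_n$ is a Slater determinant of unit-norm orbitals $\phi^\theta_i(y)=e^{i\theta u^{(i)}\cdot y}$ in $L^2(\cN(0,I_d))$. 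Expanding the norm yields $\XnormN{\AS\activation_w}^2=\tfrac1{2\pi}\iint\overline{\isoF\activation(\theta)}\isoF\activation(\theta')\det M(\theta,\theta';w)\,d\theta\,d\theta'$ with $M(\theta,\theta';w)_{ij}=e^{-\frac12|\theta u^{(i)}-\theta' u^{(j)}|^2}$.

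Split the integral at a polynomial threshold $T=n^{1/3}$. For the high-frequency part, Cauchy-Schwarz in exterior algebra (with the $\phi^\theta_i$ being unit vectors in $L^2(\cN(0,I_d))$) gives $|\det M|\le 1$, so the contribution is at most $(\int_{|\theta|>T}|\isoF\activation|\,d\theta)^2$; a dyadic decomposition converts the smoothness hypothesis $\tailsum{\activation}(T)=T^{-\omega(1)}$ into $\int_{|\theta|>T}|\isoF\activation|=T^{-\omega(1)}=n^{-\omega(1)}$. The low-frequency part reduces to $\sup_{|\theta|,|\theta'|\le T}|\EE_w\det M(\theta,\theta';w)|=n^{-\omega(1)}$. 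Factor $M_{ij}=e^{-(\theta^2\|u^{(i)}\|^2+\theta'^2\|u^{(j)}\|^2)/2}\cdot G_{ij}$ with $G_{ij}=e^{\theta\theta' u^{(i)}\cdot u^{(j)}}=\sum_k\tfrac{(\theta\theta')^k}{k!}(u^{(i)}\cdot u^{(j)})^k$. The crucial low-rank feature is that $(u^{(i)}\cdot u^{(j)})^k=\sum_{|\alpha|=k}\binom{k}{\alpha}(u^{(i)})^\alpha(u^{(j)})^\alpha$ is a Gram matrix of symmetric tensor powers, hence has rank at most $\binom{d+k-1}{k}$. Truncating at $k_0=\lceil\log n\rceil$ yields $G_{\le k_0}$ of rank $(\log n)^{O(d)}<n$ (so $\det G_{\le k_0}=0$), while on the $\chi^2$-concentration event $\{\|u^{(i)}\|^2\le(\log n)^2/n\text{ for all }i\}$ (which holds with probability $1-n^{-\omega(1)}$), the Taylor tail is bounded pointwise by $(C T^2(\log n)^2/n)^{k_0}/k_0!=n^{-\Omega(\log n)}$. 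Expanding $\det(G_{\le k_0}+\mathrm{tail})$ by multilinearity in rows and applying Hadamard's inequality, only configurations with at least $n-(\log n)^{O(d)}$ rows from the tail survive, giving $|\det M|=n^{-\omega(1)}$ on the good event; combining with $|\det M|\le 1$ on the atypical event yields the desired expectation bound.

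The principal obstacle is this final estimate on $\EE_w\det M$: converting the pointwise ``low-rank plus small-tail'' decomposition of $G$ into an expectation bound that is uniform in $|\theta|,|\theta'|\le T$ and robust to the rare event that some $\|u^{(i)}\|$ is atypically large. The remaining ingredients --- absorbing the $\theta=0$ singularity of $\isoF\activation$ via the $O(\epsilon g(y))$ error in \cref{Fdef} (choosing $\epsilon$ super-polynomially small and using the crude bound $\|\AS h\|^2\le n!\,\|h\|^2$ on the error), and controlling the polylog-sized integral $\int_{|\theta|<T}|\isoF\activation|$ for smooth activations such as $\tanh$ --- are routine.
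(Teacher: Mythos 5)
Your proposal is correct and follows essentially the same route as the paper's proof: reduction to single neurons via the Gaussian second moment of $a$, the Fourier/Slater-determinant representation with overlap kernel $\det M$, a low/high frequency split at a polynomial threshold, a low-rank symmetric-tensor-power decomposition of $(e^{\theta\theta' u^{(i)}\cdot u^{(j)}})_{ij}$ for the low-pass, and the super-polynomial Fourier tail for the high-pass; the paper differs only in bookkeeping (it conditions on a typical-$W$ event and bounds $\sup_w$ over typical $w$ where you use Markov, truncates the Taylor series at $p\sim n^{1/d}$ rather than at $\log n$, and uses the diagonal concentration of the kernel to bound the high-pass by $\tailsum\activation(T)$ directly rather than by $\bigl(\int_{|\theta|>T}|\isoF\activation|\bigr)^2$). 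One small correction: the polynomial piece $p(w\cdot x)$ is annihilated by $\AS$ not because it lies in the symmetric subspace, but because low-degree polynomials have vanishing antisymmetric projection (\cref{lem:polynomial}); the fact you need is nonetheless true.
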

\thmspace

Given an activation function $\activation:\RR\to\CC$ and weight vector $w\in\RR^{nd}$ define $\activation_w:\RR^{nd}\to\CC$ by $\activation_w(x)=\activation(w\cdot x)$. Let $\lp_w=\LP\activation{t}_w$ be the low-passed part of the activation function. 
The upper bound of $\XnormN{\AS f}^2$ relies on the fact that the norm of the antisymmetrized function $\XnormN{\AS \lp_w}^2$ decreases exponentially with respect to $n$ when $t=O(\sqrt{n/\log n})$ (\cref{lem:lpbound}).
The high-passed part can be directly controlled by the tail decay of the activation function.

\thmspace
\begin{theorem}[Lower bound, sign problem deteriorates at most polynomially for rough activation functions]
\label{thm:roughthm}
Let $f_{W,a}:\RR^{nd}\to\CC$ be given by a two-layer neural network \eqref{NN} with activation function $\activation$ and weights sampled from the Xavier or He initialization, and let $\Xdistr=\cN(0,I_{nd})$ be the Gaussian envelope. If $\activation$ is rough or generalized rough with tail decay $K$, then with probability $1-o(1)$ over $W$, $\EE_{a|W}[\XnormN{\AS f_{W,a}}^2]=\tilde \Omega(n^{-(1+2/d)K})$ where $\tilde\Omega$ denotes a lower bound up to $\log$-factors. In particular the magnitude of the antisymmetrized NN decays no faster than polynomially in $n$.
\end{theorem}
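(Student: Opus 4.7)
The plan is to obtain a high-probability (over $W$) lower bound on $\EE_{a|W}[\XnormN{\AS f_{W,a}}^2]$. Since the outer weights $a_k\sim\cN(0,\cxh/m)$ are i.i.d.\ centered, averaging over $a$ conditional on $W$ kills cross-terms and gives
\begin{equation*}
\EE_{a|W}[\XnormN{\AS f_{W,a}}^2] = \frac{\cxh}{m}\sum_{k=1}^m \XnormN{\AS\activation_{w^{(k)}}}^2,
\end{equation*}
an empirical mean of i.i.d.\ copies of $\XnormN{\AS\activation_w}^2$ with $w\sim\cN(0,(\cxh/(nd))I_{nd})$. The first step is therefore a concentration argument over $k$ that reduces the problem to a high-probability lower bound on the single-neuron quantity $\XnormN{\AS\activation_w}^2$. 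Next, pick a threshold $t>0$ and decompose $\activation=\LP\activation{t}+\HP\activation{t}$: the triangle inequality gives $\XnormN{\AS\activation_w}\ge\XnormN{\AS\HP\activation{t}_w}-\XnormN{\AS\LP\activation{t}_w}$, and for $t\le c\sqrt{n/\log n}$ the low-pass term is exponentially small in $n$ by \cref{lem:lpbound}, so it suffices to lower bound $\XnormN{\AS\HP\activation{t}_w}^2$.

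The key structural observation is that, writing $w=(u_1,\ldots,u_n)$ with $u_j\in\RR^d$, the plane wave factorizes as $e^{i\theta w\cdot x}=\prod_j e^{i\theta u_j\cdot x_j}$, so $\AS e^{i\theta w\cdot(\cdot)}$ is a Slater determinant of single-particle plane waves. Using $\int e^{i\theta u\cdot x}d\cN(x)=e^{-\theta^2|u|^2/2}$, the $\cN$-inner product of two such antisymmetrized plane waves equals the Gram determinant $\bracketN{\AS e^{i\theta'w\cdot(\cdot)}}{\AS e^{i\theta w\cdot(\cdot)}}=\det[e^{-|\theta' u_i-\theta u_j|^2/2}]_{i,j}$, and substituting the Fourier expansion of $\HP\activation{t}_w$ yields
\begin{equation*}
\XnormN{\AS\HP\activation{t}_w}^2 = \frac{1}{2\pi}\int\!\!\int_{|\theta|,|\theta'|>t}\!\overline{\isoF\activation(\theta')}\isoF\activation(\theta)\det\!\bigl[e^{-|\theta'u_i-\theta u_j|^2/2}\bigr]d\theta\,d\theta',
\end{equation*}
a manifestly non-negative sesquilinear form in $\isoF\activation\cdot\mathbf 1_{|\theta|>t}$. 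On the diagonal $\theta=\theta'$ the kernel specializes to $\det[e^{-\theta^2|u_i-u_j|^2/2}]$, which is close to $1$ once $\theta^2\min_{i\ne j}|u_i-u_j|^2$ exceeds a multiple of $\log n$.

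For the quantitative bound, the plan is to first show that $\min_{i\ne j}|u_i-u_j|=\tilde\Omega(n^{-1/2-2/d})$ with probability $1-o(1)$ over $W$, via a union bound on the $\binom{n}{2}$ pairs using $u_i-u_j\sim\cN(0,(2\cxh/(nd))I_d)$. On this good event, for $\theta$ above an appropriate working threshold $t_*$ a Gershgorin-type bound makes every off-diagonal entry of the Gram matrix polynomially small, so the determinant is pointwise $\Theta(1)$ in $\theta$. Combined with the polynomial tail $|\isoF\activation(\theta)|^2=\Theta(\theta^{-(K+1)})$ for (generalized) rough $\activation$, the contribution of the diagonal band $\theta\approx\theta'$ to the double integral is of order $\int_{t_*}^\infty\theta^{-(K+1)}d\theta=\Theta(t_*^{-K})$, and optimizing the threshold against the various constraints yields the advertised $\tilde\Omega(n^{-(1+2/d)K})$. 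The hardest part will be controlling the two remaining sources of cancellation: non-identity permutations in the Leibniz expansion of the Gram determinant (each cycle of length $\ell\ge 2$ contributes a factor decaying polynomially in $\theta$, so the sum over $\pi\ne\mathrm{id}$ is summable but requires careful bookkeeping), and the off-diagonal region $\theta\ne\theta'$, where the kernel carries phase and only its expectation over $W$ has a clean Gaussian-band structure of width $O(1/\sqrt{\cxh})$ in $\theta-\theta'$. Upgrading from this in-expectation picture to a pointwise-in-$W$ lower bound requires a second-moment estimate on the random Gram determinant.
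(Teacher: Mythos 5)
Your overall route is the same as the paper's: reduce to a single neuron via the second-moment identity over $a$, split $\activation$ into low- and high-pass parts, kill the low-pass with \cref{lem:lpbound}, represent $\XnormN{\AS\HP{\activation}{t}_w}^2$ as a double frequency integral against the Slater overlap kernel $\DttN{w}(\theta,\tilde\theta)=\det[(e^{-\frac12\|\theta u_i-\tilde\theta u_j\|^2})_{ij}]$, lower-bound the pairwise separation $\delta_w=\tilde\Omega(n^{-1/2-2/d})$ by a union bound, and exploit diagonal dominance of the determinant at high frequency. Two points in your plan are genuine gaps. First, ``a manifestly non-negative sesquilinear form'' only gives the trivial bound $\ge 0$; to extract $\Omega(\mathfrak{T}_{\isoF\activation}(t_*))$ from the diagonal band you must rule out cancellation of $\Real\big(\overline{\isoF\activation(\theta)}\isoF\activation(\tilde\theta)\big)$ over the band $|\theta-\tilde\theta|=O(1/\|w\|)=O(1)$. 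For constant-phase rough activations (e.g.\ ReLU) this is immediate, but the theorem also covers \emph{generalized} rough activations with varying phase, and there the paper needs precisely \cref{it:previouslyrealcondition} of \cref{def:generalrough} (the positivity of $\Real\,\EE[\isoF\activation(\theta+|Y|)]/\isoF\activation(\theta)$) to make the near-diagonal contribution additive. Your proposal never invokes this hypothesis, so as written it does not cover the generalized-rough case.

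Second, your treatment of the off-diagonal region is both harder than you expect in one place and easier in another. The Gaussian band $e^{-\|w\|^2(\theta-\tilde\theta)^2/2}$ is \emph{not} merely an in-expectation-over-$W$ phenomenon: it is exactly the $\pi=\mathrm{id}$ term of the Leibniz expansion of $\DttN{w}$ for every fixed $w$, so no second-moment estimate on the random Gram determinant is needed — this is \cref{lem:almost_fourier_X}. Conversely, your per-cycle bookkeeping of the non-identity permutations (one factor $e^{-\Omega(\theta^2\delta_w^2)}$ per non-fixed point, so that $\theta\delta_w\gtrsim\sqrt{\log n}$ suffices) only works on the diagonal $\theta=\tilde\theta$. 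Off the diagonal, for a non-fixed point $i$ the quantity $\|\tilde\theta u_i-\theta u_{\pi(i)}\|$ can be made small for a tuned ratio $\tilde\theta/\theta$, and what the separation $\delta_w$ (with its $\pm$ signs, see \cref{lem:delta}) guarantees is only that the \emph{maximum} over $i$ of these distances is $\ge\delta_w\theta_{\max}$. This yields the cruder bound $n!\,e^{-\delta_w^2\theta_{\max}^2/2}$ and forces the infra-red threshold up to $T\sim\sqrt{n\log n}/\delta_w=\tilde\Theta(n^{1+2/d})$, which is exactly why the exponent in the theorem is $(1+2/d)K$ rather than the $(1/2+2/d)K$ your diagonal-only bookkeeping would suggest. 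With these two repairs — invoking \cref{it:previouslyrealcondition} for the band positivity and replacing the second-moment/per-cycle plan by the deterministic $\pi=\mathrm{id}$ extraction plus the $n!$-bounded remainder — your argument becomes the paper's proof.
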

\thmspace

To prove \cref{thm:roughthm} we show that for the high-passed part $\hp=\HP\activation{T}$ at sufficiently large threshold, the norm of the antisymmetized function $\XnormN{\AS\hp_w}^2$ can be approximated by the norm $\XnormN{\hp_w}^2$ before antisymmetrization (\cref{lem:highpass_A_or_not}).
The lower bound with polynomial scaling in $n$ can also be viewed as evidence that the prefactor $1/\sqrt{n!}$ in \cref{ASdef} is the appropriate scaling.

We then show that when $f$ is given by a two-layer NN with a rough activation function, $\AS f$ can be computed efficiently to any inverse-polynomial precision relative to $\sqrt{\EE[\XnormN{\AS f}^2]}$: 

\thmspace
\begin{theorem}[Deterministic polynomial time algorithm for approximate evaluation of explicitly antisymmetrized two-layer neural network]
\label{thm:efficient_multiplicative_error}
Let $f_{W,a}:\RR^{nd}\to\CC$ be given by a two-layer neural network as in \cref{thm:roughthm} with a rough or generalized rough activation function, and let $\epsilon=n^{-O(1)}$. There exists a deterministic polynomial-time algorithm $(W,a,b,x)\mapsto S_{W,a,b}(x)$ whose output is exactly antisymmetric in $x$ and such that with probability $1-o(1)$ over $W$,
\[\EE_{a|W}[\XnormN{S_{W,a,0}-\AS f_{W,a}}^2]\le \epsilon\:\EE_{a|W}[\XnormN{\AS f_{W,a}}^2].\]
\end{theorem}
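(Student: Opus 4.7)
The plan is to leverage the Fourier representation of the activation function to express $\AS f_{W,a}$ as a polynomially-sized linear combination of Slater determinants, each evaluable in $O(n^3)$ time. First I would split $\activation=\LP{\activation}{T}+\HP{\activation}{T}$ at a low-pass threshold $T=c\sqrt{n/\log n}$ (the regime of \cref{lem:lpbound}) and further truncate the high-pass Fourier integral to $T<|\theta|<T'$ for an outer cutoff $T'=\mathrm{poly}(n,1/\epsilon)$ chosen so that $\tailsum{\activation}(T')$ is a small fraction of the target magnitude $\tilde\Omega(n^{-(1+2/d)K})$ of \cref{thm:roughthm}. By linearity and independence of the $a_k$, the low-pass contribution satisfies
\[\EE_{a|W}\bigl[\XnormN{\AS\sum_k a_k(\LP{\activation}{T})_{w^{(k)}}}^2\bigr]=\frac{\cxh}{m}\sum_k\XnormN{\AS(\LP{\activation}{T})_{w^{(k)}}}^2=n^{-\omega(1)},\]
which is super-polynomially smaller than the target; the outer-tail contribution is analogously controlled by the Fourier tail decay $K$.

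On the remaining bounded frequency intervals I would apply a high-order quadrature rule (e.g.\ composite Gauss--Legendre) with $J=\mathrm{poly}(n,1/\epsilon)$ nodes to obtain $(\HP{\activation}{T})_w(x)\approx\sum_{j=1}^J c_j e^{i\theta_j w\cdot x}$, and then define the algorithm's output as
\[S_{W,a,0}(x):=\sum_{k=1}^m\sum_{j=1}^J a_k c_j\,\AS(e^{i\theta_j w^{(k)}\cdot\,\cdot})(x).\]
Splitting each $w^{(k)}\in\RR^{nd}$ into blocks $w^{(k)}_i\in\RR^d$, the factorization $e^{i\theta w^{(k)}\cdot x}=\prod_{i=1}^n e^{i\theta w^{(k)}_i\cdot x_i}$ makes each summand a product of single-particle functions, so its antisymmetrization is the Slater determinant $\phi^{(k,j)}_1\wedge\cdots\wedge\phi^{(k,j)}_n$ with $\phi^{(k,j)}_i(y)=e^{i\theta_j w^{(k)}_i\cdot y}$, evaluable at any $x$ in $O(n^3)$ time. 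The total runtime is $O(mJn^3)=\mathrm{poly}(n,1/\epsilon)$, and $S_{W,a,0}$ is exactly antisymmetric by construction as a sum of Slater determinants. Nonzero biases can be absorbed by inserting a phase $e^{i\theta_j b_k}$ in the coefficients $c_j$, giving the algorithm for general $(W,a,b,x)$.

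The main obstacle is controlling the quadrature error in the $\XnormN{\cdot}$-norm \emph{after} antisymmetrization: the naive bound $\|\AS\|_{\mathrm{op}}=\sqrt{n!}$ would force super-polynomially many nodes. The key observation I would establish is that differentiation in $\theta$ preserves the product-of-single-particle-functions structure: $\partial_\theta e^{i\theta w\cdot x}=i\sum_i(w_i\cdot x_i)\prod_j e^{i\theta w_j\cdot x_j}$ is a sum of $n$ such products, so $\AS(\partial_\theta^p e^{i\theta w\cdot\,\cdot})$ is a linear combination of $O(n^p)$ Slater determinants whose $\XnormN{\cdot}$-norms are bounded by Gram determinants that are polynomial in $n$ with high probability over $W$ under Xavier/He scaling. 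Hence $\XnormN{\partial_\theta^p\AS(e^{i\theta w\cdot\,\cdot})}$ grows only polynomially, avoiding the feared $\sqrt{n!}$ blowup. Combined with the polynomial decay of $\isoF{\activation}$ and its derivatives away from $\theta=0$ for a rough activation, standard Gauss--Legendre error estimates then yield quadrature error at most $\epsilon\,\XnormN{\AS f_{W,a}}^2$ in expectation over $a$, completing the proof.
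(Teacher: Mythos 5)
Your proposal is correct and follows essentially the same route as the paper: infrared/ultraviolet truncation of the Fourier integral controlled by \cref{lem:lpbound} and the Fourier tail decay, a polynomial-size quadrature turning the remaining integral into a sum of Slater determinants, the lower bound of \cref{thm:roughthm} to convert absolute into relative error, and phase factors to absorb the biases. The only technical difference is how the $\sqrt{n!}$ blowup in the quadrature error is avoided: you bound $\XnormN{\partial_\theta^p\AS e^{i\theta w\cdot\xfunc}}$ directly by expanding the derivative into Slater determinants and applying Hadamard's inequality, whereas the paper bounds $\|\partial_\theta\DttN{w}\|_\infty$ via cofactors of the overlap matrix (\cref{lem:gradbound}, \cref{cor:Dgradbound}) together with a second-difference estimate on each quadrature cell — these are equivalent polynomial Lipschitz bounds for the same quantity.
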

\thmspace

We consider the examples of activation functions in \cref{tab:activationexamples}.  
$\tanh$ is smooth, so $\XnormN{\AS f}^2$ decays super-polynomially with $n$ with the $\tanh$ or sigmoid activation function. 
The ReLU activation is rough with tail decay $3$. By \cref{thm:roughthm}, $\XnormN{\AS f}^2$ is of order $\tilde\Omega(n^{-(3+6/d)})$ with the ReLU activation function. By \cref{thm:efficient_multiplicative_error} there exists an efficient algorithm to compute the output of $f$ with inverse polynomial relative error when the activation function is chosen to be ReLU.

\begin{revs}
The approximation algorithm in theorem \ref{thm:efficient_multiplicative_error} involves the approximate evaluation of an integral over frequencies $\theta$.
\thmspace
\begin{remark}\label{theremark}
In the setting of the standard Xavier/He initializations, Theorems \ref{thm:smooththm}-\ref{thm:efficient_multiplicative_error} show that a rough activation function is required to avoid the sign problem. On the other hand it is still desirable to use smooth activation functions to obtain a smooth wavefunction. In the context of a smooth activation function, our results show that an initialization should be used in which the weights in the first layer are larger than those on the typical Xavier/He initializations. This re-scaling of the first layer need only be by an algebraic\footnote{This is because the re-scaling factor can be chosen as the frequency threshold 
used to prove theorem \ref{thm:roughthm}.} factor $r=n^{O(1)}$.
 In this setting the approximation algorithm of theorem \ref{thm:efficient_multiplicative_error} is unchanged except that the infra-red truncation $t$ of the integral is replaced by $t/r$.
\end{remark}
\thmspace
Using the smooth activation function $\tanh$ as an example, the modification in remark \ref{theremark} is equivalent to replacing $\tanh$ with $y\mapsto\tanh(ry)$ where $r$ grows with $n$.
\end{revs}

\section{Reduction and generic weights}
\label{sec:proofs}

We now present an outline of the proofs of the main theorems. Additional details follow in sections \ref{sec:typ}-\ref{sec:efficient}. 
We first reduce estimates of the magnitude of $\AS f$ to estimates on the magnitude of $\AS\activation_w$. 

\thmspace
\begin{restatable}{lemma}{lemmasingleneuron}
\label{lem:singleneuron_E}
For $f_{W,a}(x)$ given by the network \eqref{NN} with the He initialization (\cref{def:He}),
$ \EE[\Xnorm{\binaryAS f_{W,a}}^2\:|W]=
   \frac{\cxh}m\sum_{k=1}^m\Xnorm{\AS\activation_{w^{(k)}}}^2$.
In particular,
\begin{equation}
    \EE[\Xnorm{\binaryAS f_{W,a}}^2]=
\cxh\:\EE[\Xnorm{\AS\activation_w}^2],\qquad w\sim\cN(0,\frac\cxh{nd}I_{nd})
\end{equation}
does not depend on $m$. 
\end{restatable}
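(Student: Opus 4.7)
The plan is to exploit the linearity of the antisymmetrization operator $\AS$, followed by a standard second-moment computation using the independence and zero-mean structure of the outer weights $a_k$ under the Xavier/He initialization.

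First I would write, by linearity of $\AS$,
\[\AS f_{W,a} \;=\; \sum_{k=1}^m a_k\,\AS \activation_{w^{(k)}},\]
and then expand the squared norm as a double sum:
\[\Xnorm{\AS f_{W,a}}^2 \;=\; \sum_{k,k'=1}^m a_k a_{k'}\,\bracket{\AS\activation_{w^{(k)}}}{\AS\activation_{w^{(k')}}}.\]
Next I would condition on $W$ and take the expectation over $a$. Since $a_k \sim \cN(0,\cxh/m)$ independently for $k=1,\dots,m$ (and independent of $W$), we have $\EE[a_k a_{k'}\mid W] = \frac{\cxh}{m}\delta_{kk'}$, so all off-diagonal terms vanish and
\[\EE\!\bigl[\Xnorm{\AS f_{W,a}}^2 \,\big|\, W\bigr] \;=\; \frac{\cxh}{m}\sum_{k=1}^m \Xnorm{\AS \activation_{w^{(k)}}}^2,\]
which is the first identity. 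To obtain the unconditional version, I would take a further expectation over $W$. Under the stated initialization the rows $w^{(k)}$ are i.i.d.\ with distribution $\cN(0,\tfrac{\cxh}{nd}I_{nd})$, so each summand has the same expectation $\EE[\Xnorm{\AS\activation_w}^2]$ with $w\sim\cN(0,\tfrac{\cxh}{nd}I_{nd})$. The $m$ cancels and one is left with $\cxh\,\EE[\Xnorm{\AS\activation_w}^2]$, independent of $m$.

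There is no real obstacle here: the only mild point to verify is that the inner product $\bracket{\AS\activation_{w^{(k)}}}{\AS\activation_{w^{(k')}}}$ is a bounded function of $W$ so that Fubini applies when exchanging $\EE_{a\mid W}$ with the $k,k'$ sum; under the Gaussian envelope $\Xdistr_n$ and for the activation functions considered (polynomially bounded), this is immediate. The result genuinely reduces the analysis of the two-layer network to the study of the single-neuron quantity $\Xnorm{\AS\activation_w}^2$, which is what the subsequent sections require.
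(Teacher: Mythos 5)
Your proposal is correct and follows essentially the same route as the paper: expand the squared norm as a double sum over neurons, use $\EE[a_k a_{k'}\mid W]=\frac{\cxh}{m}\delta_{kk'}$ to kill the off-diagonal terms, and then average over $W$. The Fubini remark is harmless but unnecessary since the sum is finite.
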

\begin{proof}
Expand $
\Xnorm{\AS f_{W,a}}^2=\sum_{k,l=1}^ma_ka_l\bracket{\AS\activation_{w^{(k)}}}{\AS\activation_{w^{(l)}}}$. 
The $a_k\sim\cN(0,\cxh/m)$'s are independent so $\EE[a_ka_l]=\delta_{kl}\cxh/m$, and
\begin{equation}
\EE[\Xnorm{\AS f_{W,a}}^2\:|\:W]=\frac{\cxh}{m}\sum_{k,l=1}^m\delta_{kl}\EE[\bracket{\AS\activation_{w^{(k)}}}{\AS\activation_{w^{(l)}}}]=\frac{\cxh}m\sum_{k=1}^m\Xnorm{\AS\activation_{w^{(k)}}}^2.\end{equation}
Taking the expectation over $W$ yields $\EE_{W,a}[\Xnorm{\binaryAS f_{W,a}}^2]=
\cxh\:\EE[\Xnorm{\AS\activation_v}^2]
$.
\end{proof}
\thmspace
The next definition characterizes the weights of the first layer with high probability under the Xavier/He initializations.
\thmspace
\begin{definition}\label{def:typical}
Fix a constant $C>1$. We say that $w\in\RR^{nd}$ is \emph{typical} if $\cxh/2\le\|w\|^2\le2\cxh$ and $\|w\|_\infty:=\max_{ij}|w_{ij}|\le C\sqrt{\frac{\log(nd)}{nd}}$. 
\end{definition}
\thmspace
In particular a typical $w$ has $\|w\|=\Theta(1)$. We view $d$ as a constant, so $\|w\|_\infty=O(\sqrt{(\log n)/n})$ for typical $w$. For lower bounds we need an additional property of $w=w^{(k)}$ sampled from the Xavier/He initializations, namely that the $w_i\in\RR^d$ are sufficiently separated. To see why this is needed, take the example where $w_i=w_j$ for some $i\neq j$ which would imply $\AS\activation_w\equiv0$. We formalize the separation property using the following quantity:
\thmspace
\begin{definition}\label{def:delta}
For $w\in\RR^{nd}$ write $\delta_w=\frac12\min_{1\le i<j\le n}\|w_i\pm w_j\|$ where the minimum is over both choices of sign $\pm$.
\end{definition}
\thmspace
We then define weights with \emph{typical} separation:
\thmspace
\begin{definition}\label{def:typsep}
Fix a function $\delta(n)=o(n^{-(1/2+2/d)})$ as $n\to\infty$ (for concreteness we let $\delta(n)=n^{-(1/2+2/d)}/\sqrt{\log n}$).
We say that $v$ has typical separation if $\delta_v\ge\delta(n)$. We say that $W\in\RR^{m\times nd}$ is \emph{typical} if each $w^{(k)}$ is typical and at least half the $w^{(k)}$ have typical separation.
\end{definition}
\thmspace
In \cref{sec:typ} we show that He/Xavier initialized weights generically have typical separation:

\thmspace
\begin{restatable}{lemma}{WHPlemma}\label{lem:highprob}
Let $d$ be constant, let $m=O(n^{C'})$ for some constant $C'$, and let $f$ be sampled as in \cref{def:He}. Then $W$ is typical with probability $1-o(1)$ for some constant $C$ in \cref{def:typical} depending on $C'$. For such $W$, 
\begin{equation}\label{whp}
\frac{\cxh}2\inf_{w\in S'}\Xnorm{\AS\activation_w}^2\le\EE_{a|W}[\Xnorm{\AS f}^2]\le\cxh\sup_{w\in S}\Xnorm{\AS\activation_w}^2,\end{equation}
where $S$ is the set of typical $w\in\RR^{nd}$ and $S'\subset S$ is the set of typical $w$ which have typical separation. 
\end{restatable}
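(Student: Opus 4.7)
The plan is to split the claim into two parts. First, I would show that under the Xavier/He initialization $W$ is typical with probability $1-o(1)$; second, conditional on a typical $W$, I would deduce the sandwich bound \eqref{whp} directly from \cref{lem:singleneuron_E}.

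For the high-probability statement I would verify the three ingredients of typicality separately and then union-bound. The norm bound $\cxh/2\le\|w^{(k)}\|^2\le2\cxh$ follows from the Laurent--Massart tail bound applied to the scaled chi-squared $\|w^{(k)}\|^2=(\cxh/(nd))\chi^2_{nd}$, giving failure probability $e^{-\Omega(nd)}$ per $k$, which easily survives a union bound over $m=O(n^{C'})$ neurons. The $\ell^\infty$-bound is a standard Gaussian tail bound union-bounded over all $mnd=O(n^{C'+1})$ coordinates, each of variance $\cxh/(nd)$; choosing the constant $C$ in \cref{def:typical} so that $C^{2}/2>C'+1$ renders this total probability $o(1)$.

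The delicate step is the separation property. For fixed $k$ and $i<j$, the vectors $w^{(k)}_i-w^{(k)}_j$ and $w^{(k)}_i+w^{(k)}_j$ are independent (orthogonal Gaussian combinations), each distributed as $\cN(0,\,2\cxh/(nd)\,I_d)$ in $\RR^d$. Bounding the Gaussian density by its value at the origin, $\Pr(\|w^{(k)}_i\pm w^{(k)}_j\|<2\delta(n))=O(\delta(n)^d\, n^{d/2})$. Substituting $\delta(n)=n^{-(1/2+2/d)}/\sqrt{\log n}$ turns this into $O(n^{-2}/(\log n)^{d/2})$; a union bound over the $\binom n2$ pairs and the two signs yields $\Pr(\delta_{w^{(k)}}<\delta(n))=O(1/(\log n)^{d/2})$ for a single $k$. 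The exponent $1/2+2/d$ in $\delta(n)$ is tuned precisely to make the expected number of close pairs per neuron be $o(1)$. Because this probability is only $o(1)$ and not polynomially small, I cannot union-bound over the $m$ neurons to guarantee that \emph{every} $w^{(k)}$ is well-separated; instead I would apply Markov's inequality to the count $B=\#\{k:\delta_{w^{(k)}}<\delta(n)\}$, whose expectation is $o(m)$, so $\Pr(B\ge m/2)=o(1)$. Hence at least $m/2$ of the $w^{(k)}$ are typically separated with probability $1-o(1)$.

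Once $W$ is typical, \cref{lem:singleneuron_E} gives $\EE_{a|W}[\Xnorm{\AS f}^2]=(\cxh/m)\sum_{k=1}^m\Xnorm{\AS\activation_{w^{(k)}}}^2$. The upper bound is immediate since every $w^{(k)}\in S$, so each summand is $\le\sup_{w\in S}\Xnorm{\AS\activation_w}^2$. The lower bound follows by restricting the sum to the $\ge m/2$ indices with $w^{(k)}\in S'$. The main obstacle of the argument is the separation estimate: the $d$-dimensional Gaussian anti-concentration is only $o(1)$ per neuron, so one must bound the \emph{fraction} of bad neurons via Markov rather than ruling out any bad neuron at all; everything else is routine Gaussian concentration.
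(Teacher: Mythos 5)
Your proposal is correct and follows essentially the same route as the paper: chi-squared concentration for the norm, a Gaussian tail union bound for $\|w^{(k)}\|_\infty$, the anti-concentration estimate $\Pr(\delta_{w^{(k)}}<\delta)=O(n^{2+d/2}\delta^d)$ for the separation (the paper conditions on $w_j$ and bounds the density of $w_i$, you use the Gaussian law of $w_i\pm w_j$ directly—same computation), a Markov/counting argument to get at least half the neurons well-separated since the per-neuron failure probability is only $o(1)$, and finally the sandwich bound from \cref{lem:singleneuron_E}. Your observation that one must bound the fraction of badly separated neurons rather than union-bounding over all of them is exactly the point the paper's terser argument relies on.
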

\thmspace

\section{Overlap kernel induced by Fourier decomposition}
Consider the case when the activation function is a complex exponential function. 
Let ``$\opn{expression}(\xfunc)$'' denote the function $x\mapsto\opn{expression}(x)$.
Then $e^{iw\cdot \xfunc}=\otimes_{i=1}^ne^{iw_i\cdot \xfunc_i}$ is a product of single-particle functions, so antisymmetrizing it yields a Slater determinant, 
\begin{equation}\label{slater}
\AS(e^{iw\cdot \xfunc})=\AS(\otimes_{i=1}^ne^{iw_i\cdot \xfunc_i})=\wedge_{i=1}^ne^{iw_i\cdot \xfunc_i},\end{equation}
where the RHS is defined as $\frac1{\sqrt{n!}}\det((e^{iw_i\cdot\xfunc_j})_{ij})$. The overlap between two Slater determinants is the determinant of the \emph{overlap matrix} \cite{PhysRev.97.1474}, meaning that 
\begin{equation}\label{AeAe}
    \bracket{\AS e^{iv\cdot\xfunc}}{\AS e^{iw\cdot\xfunc}}=\bracket{\wedge_i e^{iv_i\cdot\xfunc}}{\wedge e^{iw_i\cdot\xfunc}}=\det B^{(v,w)},
\end{equation}
where $B^{(v,w)}\in\RR^{n\times n}$ is given by
$B^{(v,w)}_{ij}=\bracket{e^{iv_i\cdot\xfunc}}{e^{iw_j\cdot\xfunc}}$. We can evaluate this as
\begin{equation}\label{defB}
B^{(v,w)}_{ij}=\EE_{X\sim\Xdistr}[e^{-i(v_i-w_j)\cdot X}]=\nnf{\Xdistr}(v_i-w_j).
\end{equation}
Here we have defined the \emph{un-normalized} Fourier transform $\nnf{}$ (also denoted by $\nnF{}{d}$) on $\RR^d$ by $\nnf{\Xdistr}(v)=\int_{\RR^d} e^{-iv\cdot x}d\Xdistr(x)$. In particular $\nnf{\Xdistr}(0)=1$, and $\nnf{\Xdistr}=(2\pi)^{d/2}\isoF\Xdistr$.

By the Fourier inversion formula \eqref{Fdef} we have the identity of functions on $\RR^{nd}$
\begin{equation}\label{FdefRnd}\activation_w(x)=\frac1{\sqrt{2\pi}}\int_{|\theta|>\epsilon}\isoF\activation(\theta)e^{i\theta w\cdot x}d\theta
+p(w\cdot x)+C_\epsilon+O(\epsilon g(w\cdot x)),\end{equation}

We use the fact that low-degree polynomials vanish upon antisymmetrization:
\thmspace
\begin{lemma}[\cite{abrahamsen2023antisymmetric} lemma 7]\label{lem:polynomial}
If $f:\RR^{nd}\to\CC$ is a polynomial of degree $\deg f\le n-2$, then $\AS f\equiv 0$. In particular $\AS\activation_w\equiv 0$ if $\activation$ is an activation function which is a polynomial of degree $\deg\activation\le n-2$.
\end{lemma}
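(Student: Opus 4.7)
My plan is to linearize in monomials and exploit the determinantal identity that emerges when $\AS$ is applied to a product of single-particle factors.

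By linearity of $\AS$ it suffices to prove $\AS f \equiv 0$ for every monomial $f(x) = \prod_{i=1}^n x_i^{\alpha_i}$ with multi-index exponents $\alpha_i \in \NN^d$ (acting on the $d$ coordinates of $x_i \in \RR^d$) and total degree $\sum_i |\alpha_i| \le n-2$. Substituting such $f$ into \eqref{ASdef} and re-indexing via $\sigma = \pi^{-1}$ yields
\[\AS f(x) = \frac{1}{\sqrt{n!}} \sum_{\sigma \in S_n} (-1)^\sigma \prod_{j=1}^n x_j^{\alpha_{\sigma(j)}} = \frac{1}{\sqrt{n!}} \det\bigl(x_i^{\alpha_j}\bigr)_{i,j=1}^n,\]
so $\AS f \equiv 0$ whenever two of the $\alpha_j$ coincide, since the matrix then has two equal columns. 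It remains to rule out the case that the $\alpha_j$ are all distinct.

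This is a one-line pigeonhole: only the zero multi-index has norm zero, so among distinct $\alpha_1,\ldots,\alpha_n$ at most one can equal $0$, forcing $|\alpha_i| \ge 1$ for at least $n-1$ values of $i$ and hence $\sum_i |\alpha_i| \ge n-1$, contradicting $\sum_i|\alpha_i| \le n-2$. The ``in particular'' statement then follows because $x \mapsto \activation(w \cdot x)$ is a polynomial of degree at most $\deg\activation$ in the $nd$ entries of $x$, to which the first part of the lemma applies.

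There is no serious obstacle: the only step that requires any argument beyond the definition is the pigeonhole bound, and it is elementary. The purpose of this lemma in the paper is, as I read it, to justify discarding the polynomial correction $p$ and the constant $C_\epsilon$ in the Fourier decomposition \eqref{FdefRnd} once the antisymmetrizer is applied, since their degree $O(1)$ lies below the threshold $n-2$ for all sufficiently large $n$.
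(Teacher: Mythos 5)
Your proof is correct. The paper itself gives no proof of this lemma --- it is imported from \cite{abrahamsen2023antisymmetric} (Lemma 7) --- but your argument is the standard one and, as far as the cited source goes, essentially the same: reduce to monomials by linearity, observe that the antisymmetrization of $\prod_i x_i^{\alpha_i}$ is the generalized Slater determinant $\frac{1}{\sqrt{n!}}\det(x_i^{\alpha_j})_{ij}$, which vanishes when two multi-indices coincide, and note by pigeonhole that distinct multi-indices in $\NN^d$ force total degree at least $n-1$. Your reading of the lemma's role (annihilating the polynomial part $p$ and the constant $C_\epsilon$ of \eqref{FdefRnd} under $\AS$, which is how \eqref{FdefA} is obtained) is also accurate.
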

\thmspace

We antisymmetrize \eqref{FdefRnd} and apply \cref{lem:polynomial} which yields that for $n\ge\deg p+2$,
\begin{equation}\label{FdefA}\AS\activation_w=\lim_{\epsilon\to0}\AS\HP\activation\epsilon=\lim_{\epsilon\to0}\frac1{\sqrt{2\pi}}\int_{|\theta|>\epsilon}\isoF\activation(\theta)\AS(e^{i\theta w\cdot\xfunc})d\theta,\end{equation}
where convergence is in the $\Ltnd$-norm.
\thmspace
\begin{definition}[Overlap kernel]\label{def:Ddd}
Define $\Ddd:\RR^{nd}\times\RR^{nd}\to\CC$ by $\Ddd(v,w)=\det B^{(v,w)}$,
where $B^{(v,w)}\in\CC^{n\times n}$ is given by \eqref{defB}. 
Given a vector of weights $w\in\RR^{nd}$ define $\Dtt{w}:\RR^2\to\CC$ by $\Dtt{w}(\theta,\tilde\theta)=\Ddd(\theta w,\tilde\theta w)$.
\end{definition}
\thmspace
Note that $\Dtt{w}$ depends on the envelope function.
Eq.~\eqref{AeAe} shows that $\Dtt{w}(\theta,\tilde\theta)=\bracket{\AS e^{i\theta w\cdot\xfunc}}{\AS e^{i\tilde\theta w\cdot\xfunc}}$. The expansion \eqref{FdefA} then yields \cref{lem:Eiint}:

\thmspace
\begin{restatable}{lemma}{lemmaEiint}\label{lem:Eiint}
If $\iint_{\RR^2} |\isoF{\activation}(\theta)\isoF{\activation}(\tilde\theta)\Dtt{w}(\theta,\tilde\theta)|d\theta d\tilde\theta<\infty$, then
\begin{equation}
\label{Eiint}
\Xnorm{\AS\activation_w}^2=\frac1{2\pi}\iint_{\RR^2}\overline{ \isoF{\activation}(\theta)}{\isoF{\activation}(\tilde\theta)}\Dtt{w}(\theta,\tilde\theta)d\theta d\tilde\theta.
\end{equation}
\end{restatable}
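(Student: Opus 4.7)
The plan is to start from the Fourier representation of $\AS\activation_w$ given by \eqref{FdefA} and compute the $\Xdistr$-norm squared by expanding it as a double integral over frequencies, then identify the resulting integrand with $\Dtt{w}$ using the overlap-matrix formula \eqref{AeAe}, and finally pass to the $\epsilon\to0$ limit using the hypothesized absolute integrability.

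Concretely, I would first write
\[\AS\activation_w=\lim_{\epsilon\to0}\frac1{\sqrt{2\pi}}\int_{|\theta|>\epsilon}\isoF\activation(\theta)\,\AS(e^{i\theta w\cdot\xfunc})\,d\theta,\]
where the limit is in $\Ltnd$ by \eqref{FdefA} (this is where \cref{lem:polynomial} has already absorbed the polynomial piece $p(w\cdot x)$ and the constant $C_\epsilon$ and killed the $O(\epsilon g(w\cdot x))$ remainder in the limit, provided $n\ge\deg p+2$). Since the inner product $\bracket{\cdot}{\cdot}$ is continuous on $\Ltnd$, squaring the norm gives
\[\Xnorm{\AS\activation_w}^2=\lim_{\epsilon\to0}\frac1{2\pi}\bigg\langle\int_{|\theta|>\epsilon}\isoF\activation(\theta)\AS(e^{i\theta w\cdot\xfunc})d\theta\;\bigg|\;\int_{|\tilde\theta|>\epsilon}\isoF\activation(\tilde\theta)\AS(e^{i\tilde\theta w\cdot\xfunc})d\tilde\theta\bigg\rangle_{\Xdistr}.\]

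Next I would pull both frequency integrals outside the inner product. For fixed $\epsilon>0$, on the set $|\theta|>\epsilon$ the distribution $\isoF\activation$ is a locally integrable function and the vector-valued integral $\int_{|\theta|>\epsilon}\isoF\activation(\theta)\AS(e^{i\theta w\cdot\xfunc})d\theta$ is a Bochner integral in $\Ltnd$ (the integrand has $\Ltnd$-norm bounded by $|\isoF\activation(\theta)|$ times a finite constant, times $\sqrt{|\Dtt{w}(\theta,\theta)|}$, which can be handled by restricting to a compact frequency window and invoking Fubini, then taking a monotone limit). This yields
\[\Xnorm{\AS\activation_w}^2=\lim_{\epsilon\to0}\frac1{2\pi}\iint_{|\theta|,|\tilde\theta|>\epsilon}\overline{\isoF\activation(\theta)}\isoF\activation(\tilde\theta)\bracket{\AS e^{i\theta w\cdot\xfunc}}{\AS e^{i\tilde\theta w\cdot\xfunc}}d\theta d\tilde\theta.\]
Substituting the definition $\Dtt{w}(\theta,\tilde\theta)=\bracket{\AS e^{i\theta w\cdot\xfunc}}{\AS e^{i\tilde\theta w\cdot\xfunc}}$ from \cref{def:Ddd} and \eqref{AeAe} identifies the inner product with $\Dtt{w}(\theta,\tilde\theta)$.

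Finally I would remove the truncation using the hypothesis $\iint|\isoF\activation(\theta)\isoF\activation(\tilde\theta)\Dtt{w}(\theta,\tilde\theta)|d\theta d\tilde\theta<\infty$: the truncated integrands are dominated in absolute value by this integrable function, so dominated convergence gives
\[\Xnorm{\AS\activation_w}^2=\frac1{2\pi}\iint_{\RR^2}\overline{\isoF\activation(\theta)}\isoF\activation(\tilde\theta)\Dtt{w}(\theta,\tilde\theta)d\theta d\tilde\theta,\]
as desired. The main obstacle is the bookkeeping around \eqref{FdefA}: one must verify that the polynomial and constant pieces in \eqref{Fdef} genuinely contribute nothing after antisymmetrization (via \cref{lem:polynomial}) and that the $O(\epsilon g)$ error vanishes in $\Ltnd$ under $\Xdistr=\cN(0,I_{nd})$, so that interpreting $\isoF\activation$ as a tempered distribution never actually obstructs the computation once we are working on the region $|\theta|>\epsilon$ where it is an honest function.
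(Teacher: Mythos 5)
Your proposal is correct and follows essentially the same route as the paper, which presents \cref{lem:Eiint} as an immediate consequence of the expansion \eqref{FdefA} together with the identification $\Dtt{w}(\theta,\tilde\theta)=\bracket{\AS e^{i\theta w\cdot\xfunc}}{\AS e^{i\tilde\theta w\cdot\xfunc}}$ from \eqref{AeAe}, without writing out a separate proof. Your write-up simply makes explicit the steps the paper leaves implicit: continuity of the inner product under the $\Ltnd$-limit, Fubini on the truncated frequency region, and dominated convergence via the stated absolute-integrability hypothesis.
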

\thmspace

\section{Properties of the overlap kernel }
\label{sec:properties}
On the diagonal $v=w$ we have $0\le \Ddd(w,w)\le1$. Indeed, $B_{ij}^{(w,w)}$ is the Gram matrix of vectors $\AS e^{i w_i\cdot\xfunc}\in\Ltd$ so it satisfies $\det(B^{(w,w)})\le\prod_i B_{ii}^{(w,w)}=(\nnF{\Xdistr}{d}(0))^n=1$ by Hadamard's theorem for positive semidefinite matrices.
$\Ddd(v,w)$ is a positive semidefinite kernel because it is a (infinite) Gram matrix of vectors $\AS e^{iw\cdot\xfunc}\in\Ltnd$. In particular, $|\Ddd(v,w)|^2\le \Ddd(v,v)\Ddd(w,w)\le1$ for all $v,w\in\RR^{nd}$. Specializing these observations to $\Dtt{w}$ yields $0\le\Dtt{w}(\theta,\theta)\le1$ on the diagonal and $|\Dtt{w}(\theta,\tilde\theta)|^2\le1$ everywhere.

To approximate the behavior of $\Dtt{w}$ at large $\theta$ and $\tilde\theta$ we define a probability distribution $\Xdistr_w$ on $\RR$.
\thmspace
\begin{definition}\label{def:pw}
For any $w\in\RR^{nd}$, let $\Xdistr_w$ on $\RR$ be the distribution of $w\cdot X$ where $X\sim\Xdistr_n$.
\end{definition}
\thmspace

We expect $\AS e^{i\theta v\cdot\xfunc}$ to be roughly orthogonal to $\AS e^{i\tilde\theta v\cdot\xfunc}$ when the frequencies $\theta$ and $\tilde\theta$ are sufficiently different. We therefore expect $\Dtt{w}(\theta,\tilde\theta)=\bracket{\AS e^{i\theta w\cdot\xfunc}}{\AS e^{i\tilde\theta w\cdot\xfunc}}$ to vanish away from the diagonal $\tilde\theta=\theta$. Moreover, when $\theta$ is large we expect the $n!$ terms in \eqref{ASdef} to be roughly orthogonal, so by Pythagoras' theorem and considering the normalization factor $1/\sqrt{n!}$ we expect $\Dtt{w}(\theta,\theta)=\Xnorm{\AS e^{i\theta w\cdot\xfunc}}^2\approx\Xnorm{e^{i\theta w\cdot\xfunc}}^2=1$ for large $\theta$. We formalize this idea by approximating $\Dtt{w}(\theta,\tilde\theta)$ with a convolution kernel $\nnf{\rho_v}(\theta-\tilde\theta)$ when $\max\{|\theta|,|\tilde\theta|\}\ge T$ for sufficiently large $T>0$. We then apply the convolution theorem to obtain

\thmspace
\begin{restatable}{lemma}{lemmaAornot}\label{lem:highpass_A_or_not}
Let $\hp=\HP\activation{T}$ be the high-passed activation function at threshold $T>1$. Then,
\begin{equation}\label{A_or_not}
    \Xnorm{\AS\hp_w}^2=\Xnorm{\hp_w}^2+\varepsilon(T),
\end{equation}
with the error term $|\varepsilon(T)|=O(n!\delta_w^{-2}\int_{\delta_wT}^\infty\Xtail(t)tdt)$, $\Xtail(\theta)=\sup\{|\nnf{\Xdistr}(y)|:y\in\RR^d,\|y\|\ge\theta\}$.
\end{restatable}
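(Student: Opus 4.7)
The plan is to express both sides of \eqref{A_or_not} as Fourier integrals against explicit kernels on the frequency plane $(\theta,\tilde\theta)$ and then subtract them term by term. Applying \cref{lem:Eiint} to $\hp=\HP\activation T$ (whose Fourier transform is $\isoF\activation\mathbf{1}_{|\theta|>T}$) gives
\[
\Xnorm{\AS\hp_w}^2 = \frac{1}{2\pi}\iint_{|\theta|,|\tilde\theta|>T}\overline{\isoF\activation(\theta)}\isoF\activation(\tilde\theta)\,\Dtt{w}(\theta,\tilde\theta)\,d\theta\,d\tilde\theta,
\]
and writing $\Xnorm{\hp_w}^2=\int|\hp(y)|^2\,d\Xdistr_w(y)$ for the pushforward $\Xdistr_w$ of $\Xdistr_n$ under $w\cdot X$, and applying Plancherel, yields
\[
\Xnorm{\hp_w}^2 = \frac{1}{2\pi}\iint_{|\theta|,|\tilde\theta|>T}\overline{\isoF\activation(\theta)}\isoF\activation(\tilde\theta)\,\nnf{\Xdistr_w}(\theta-\tilde\theta)\,d\theta\,d\tilde\theta.
\]
Since $\nnf{\Xdistr_w}(\theta-\tilde\theta)=\prod_i\nnf\Xdistr((\theta-\tilde\theta)w_i)=\prod_i B^{(\theta w,\tilde\theta w)}_{ii}$ while $\Dtt{w}(\theta,\tilde\theta)=\det B^{(\theta w,\tilde\theta w)}$, the Leibniz expansion $\det B-\prod_i B_{ii}=\sum_{\pi\neq e}(-1)^\pi\prod_i B_{i\pi(i)}$ writes $\varepsilon(T)$ as a sum of $n!-1$ integrals $I_\pi$, one per non-identity permutation.

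To bound each $I_\pi$, I pick any $i_0$ with $\pi(i_0)\neq i_0$, use $|B_{i\pi(i)}|\le 1$ for the other $n-1$ factors in the product, and use $|B_{i_0\pi(i_0)}(\theta,\tilde\theta)|\le\Xtail(\|\theta w_{i_0}-\tilde\theta w_{\pi(i_0)}\|)$ from the definition of $\Xtail$. Since $\activation$ is (generalized) rough, $|\isoF\activation|$ is bounded on $|\theta|>1$, so its contribution is absorbed into the implicit $O$-constant. Next I change variables from $(\theta,\tilde\theta)$ to orthonormal coordinates $y\in\opn{span}(w_{i_0},w_{\pi(i_0)})\subset\RR^d$ via $y=\theta w_{i_0}-\tilde\theta w_{\pi(i_0)}$, whose Jacobian is the parallelogram area $A=|\det[w_{i_0}\,|\,w_{\pi(i_0)}]|$, and pass to polar coordinates in the $y$-plane:
\[
\iint_{|\theta|,|\tilde\theta|>T}\Xtail(\|\theta w_{i_0}-\tilde\theta w_{\pi(i_0)}\|)\,d\theta\,d\tilde\theta \;\le\; \frac{2\pi}{A}\int_{cT\delta_w}^\infty\Xtail(r)\,r\,dr,
\]
where both the lower cutoff $cT\delta_w$ and the estimate $A\gtrsim\delta_w^2$ come from the geometric bound below. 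Summing the $n!-1$ permutation contributions produces the claimed $O(n!\,\delta_w^{-2}\int_{\delta_wT}^\infty\Xtail(t)\,t\,dt)$.

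The technical heart of the argument, and the main obstacle, is the geometric lower bound
\[
\|\theta w_i-\tilde\theta w_j\|^2\gtrsim\delta_w^2(\theta^2+\tilde\theta^2)\qquad\text{for each }i\neq j,
\]
which simultaneously yields both $A\gtrsim\delta_w^2$ and $\|y\|\gtrsim T\delta_w$ on the integration domain. Substituting $\alpha=(\theta+\tilde\theta)/2$, $\beta=(\theta-\tilde\theta)/2$, $u=w_i-w_j$, $v=w_i+w_j$ reduces this to lower-bounding the smallest eigenvalue of the Gram matrix of $(u,v)$ by a multiple of $\delta_w^2$. That Gram matrix has determinant $\|u\|^2\|v\|^2-(u\cdot v)^2=4(\|w_i\|\|w_j\|\sin\angle(w_i,w_j))^2$ and trace $\|u\|^2+\|v\|^2$, so its smallest eigenvalue is at least $\det/\opn{tr}$; combining $\|u\|,\|v\|\ge 2\delta_w$ from the definition of $\delta_w$ with standard concentration on $\|w_i\|$ and $\sin\angle(w_i,w_j)$ for Xavier/He-initialized $w$ provides the required lower bound. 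This step is the delicate one because it must handle the near-degenerate case $w_i\approx\pm w_j$, in which both $\delta_w$ and the parallelogram area can shrink simultaneously and the two must be tracked against one another.
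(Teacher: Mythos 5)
Your setup — expressing both $\Xnorm{\AS\hp_w}^2$ and $\Xnorm{\hp_w}^2$ as double integrals against $\Dtt{w}(\theta,\tilde\theta)$ and $\nnf{\Xdistr_w}(\theta-\tilde\theta)$ respectively, and Leibniz-expanding $\det B-\prod_iB_{ii}$ into $n!-1$ permutation terms — is exactly the paper's route (it is the content of \cref{lem:Eiint}, the display \eqref{conv}, and the first half of \cref{lem:almost_fourier_X}). The gap is in how you bound each permutation term. You fix one index $i_0$ with $\pi(i_0)\neq i_0$, keep only the factor $\Xtail(\|\theta w_{i_0}-\tilde\theta w_{\pi(i_0)}\|)$, and rely on the geometric claim $\|\theta w_{i_0}-\tilde\theta w_{\pi(i_0)}\|^2\gtrsim\delta_w^2(\theta^2+\tilde\theta^2)$. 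That claim is false for general $w$ with $\delta_w>0$: take $w_{\pi(i_0)}=c\,w_{i_0}$ with $c\notin\{\pm1\}$, say $c=2$. Then $\delta_w$ restricted to this pair equals $\tfrac12\min\{|1-c|,|1+c|\}\|w_{i_0}\|>0$, yet $\theta w_{i_0}-\tilde\theta w_{\pi(i_0)}$ vanishes identically along the line $\theta=c\tilde\theta$, and your Gram determinant $\|u\|^2\|v\|^2-(u\cdot v)^2=4(\|w_{i_0}\|\|w_{\pi(i_0)}\|\sin\angle(w_{i_0},w_{\pi(i_0)}))^2$ is exactly $0$, so $\lambda_{\min}(G)=0$ and the Jacobian $A$ of your change of variables degenerates. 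The point is that $\delta_w$ controls only $\|w_i\pm w_j\|$ and carries no information about collinearity of $w_i,w_j$ with unequal norms; your proposed rescue via ``standard concentration on $\sin\angle(w_i,w_j)$'' would both (a) change the statement, since the lemma's error bound is deterministic in $w$ and depends only on $\delta_w$, and (b) smuggle a new quantity (minimum pairwise parallelogram area) into the constant.

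The paper's fix is to not commit to a single index: it bounds $\prod_i\Xtail(\|\theta w_i-\tilde\theta w_{\pi(i)}\|)\le\min_i\Xtail(\cdots)=\Xtail\big(\max_i\|\theta w_i-\tilde\theta w_{\pi(i)}\|\big)$ and proves (\cref{lem:delta}) that $\max_i\|w_i-s\,w_{\pi(i)}\|\ge\delta_w$ for every scalar $s$ and every $\pi\neq1$ — the maximizing index depends on $s=\tilde\theta/\theta$, and in the collinear example above it switches to the index of largest norm. Rescaling gives the pointwise bound $n!\,\Xtail(\delta_w\theta\submax)$ on the permutation error (\cref{lem:almost_fourier_X}), which is then integrated directly in $(\theta,\tilde\theta)$ coordinates via $\iint_{|\theta|,|\tilde\theta|\ge T}\Xtail(\delta_w\theta\submax)\,d\theta\,d\tilde\theta\le 8\int_T^\infty\Xtail(\delta_w\theta)\,\theta\,d\theta$, with no change of variables and no appeal to a parallelogram area. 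If you replace your fixed-$i_0$ step by this adaptive max (or, equivalently, retain at least two factors of the product per non-fixed cycle and handle the degenerate directions), the rest of your argument goes through.
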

\thmspace
\cref{lem:highpass_A_or_not} states that at sufficiently large threshold $T$ the magnitude of the antisymmetrization of the high-passed activation function is approximately the magnitude before antisymmetrization.

The $\delta_w$ in the error term was defined in \cref{def:delta}. The fast decay of $\Xtail(\theta)$ compensates for the growth of the factor $n!$ in \eqref{A_or_not}, making the approximation meaningful at polynomially large $T$.

\begin{figure}
    \centering
    \begin{minipage}[c]{.4\textwidth}
    \includegraphics[scale=.8]{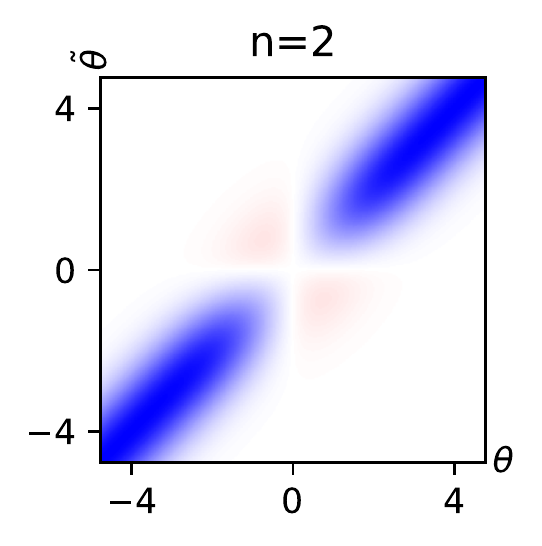}
    \end{minipage}
    \caption{
   Heatmap of $\EE_w \Dtt{w}(\theta,\tilde\theta)$ where $w\sim\cN(0,\frac2{nd}I_{nd})$ for $d=3$ and $n=2$. Positive values are blue and negative values red. 
   }\label{fig:Dtt}
\end{figure}

\subsection{Proof of \cref{lem:highpass_A_or_not}}
\label{sec:D_gen_prop}

\thmspace
\begin{restatable}{lemma}{Almostfourierlemma}\label{lem:almost_fourier_X}
Let $\delta_w=\frac12\min_{i\neq j}\|w_i\pm w_j\|$ and let $\Xdistr_w$ be as in \cref{def:pw}. Then,
\begin{align}\label{almostFourierX}
    \begin{split}
    |\Dtt{w}(\theta,\tilde\theta)-\nnf{\Xdistr_w}(\theta-\tilde\theta)|\le
n!\:\Xtail\big(\delta_w\theta\submax\big),
    \end{split}
\end{align}
    where $\theta_{\opn{max}}=\max\{|\theta|,|\tilde\theta|\}$ and $\Xtail(\theta)=\sup\{|\nnF{\Xdistr}{d}(y)|:y\in\RR^d,\|y\|\ge\theta\}$. $\nnf{\Xdistr_w}$ can be expressed in terms of the $nd$-dimensional Fourier transform as $\nnf{\Xdistr_w}(\theta)=\nnF{\Xdistr_n}{nd}(\theta w)$. 
    \end{restatable}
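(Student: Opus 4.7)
The plan is to expand $\Dtt{w}(\theta,\tilde\theta)=\det B$ (where $B_{ij}=\nnf\Xdistr(\theta w_i-\tilde\theta w_j)$ by~\eqref{AeAe}--\eqref{defB}) via the Leibniz formula, isolating the identity permutation as the main term and controlling all other contributions by the tail envelope $\Xtail$. The identity-permutation contribution is the diagonal product
\[\prod_{i=1}^n B_{ii}=\prod_{i=1}^n\nnf\Xdistr((\theta-\tilde\theta)w_i)=\nnF{\Xdistr_n}{nd}((\theta-\tilde\theta)w),\]
using the product structure $\Xdistr_n=\Xdistr^{\otimes n}$. Since $\Xdistr_w$ is by \cref{def:pw} the law of $w\cdot X$ for $X\sim\Xdistr_n$, the same factorization gives the identity $\nnf{\Xdistr_w}(\theta)=\nnF{\Xdistr_n}{nd}(\theta w)$ stated in the lemma, and therefore $\prod_iB_{ii}=\nnf{\Xdistr_w}(\theta-\tilde\theta)$.

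It then remains to bound the $n!-1$ non-identity contributions
\[\det B-\nnf{\Xdistr_w}(\theta-\tilde\theta)=\sum_{\pi\ne\mathrm{id}}\mathrm{sgn}(\pi)\prod_{i=1}^nB_{i,\pi(i)}.\]
For any $\pi\ne\mathrm{id}$ there exists $i$ with $\pi(i)\ne i$, and I would bound $|B_{i,\pi(i)}|\le\Xtail(\|\theta w_i-\tilde\theta w_{\pi(i)}\|)$ by the definition of $\Xtail$ as the tail envelope of $|\nnf\Xdistr|$, while every remaining factor is at most $1$ (again by $|\nnf\Xdistr|\le 1$). This reduces the lemma to the geometric separation estimate $\|\theta w_i-\tilde\theta w_j\|\ge\delta_w\theta\submax$ for $i\ne j$, which upgrades the entrywise bound to $\Xtail(\delta_w\theta\submax)$; summation over $\pi\ne\mathrm{id}$ then produces the $n!$ prefactor.

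The main obstacle is this separation estimate, which requires using the full content of $\delta_w$ (both signs $\pm$ in its definition). The cleanest route is to decompose
\[\theta w_i-\tilde\theta w_j=\tfrac{\theta+\tilde\theta}{2}(w_i-w_j)+\tfrac{\theta-\tilde\theta}{2}(w_i+w_j),\]
so that both lower bounds $\|w_i\pm w_j\|\ge 2\delta_w$ can be combined with the elementary inequality $\max\{|\theta+\tilde\theta|,|\theta-\tilde\theta|\}\ge\theta\submax$. In the typical Xavier/He regime of \cref{def:typical,def:typsep}, where $\|w_i\|$ is essentially independent of $i$ and $\delta_w\ll\|w_i\|$, the two summands above lie in nearly orthogonal directions and the required bound follows from the Pythagorean identity. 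Should the naive entrywise estimate prove insufficient for atypical $w$, a more refined route is to control the product $\prod_r|B_{i_r,i_{r+1}}|$ along each non-trivial cycle of $\pi$ via the identity
\[\sum_r\|\theta w_{i_r}-\tilde\theta w_{i_{r+1}}\|^2=(\theta-\tilde\theta)^2\sum_r\|w_{i_r}\|^2+\theta\tilde\theta\sum_r\|w_{i_r}-w_{i_{r+1}}\|^2\]
together with its sign-flipped analog using $w_i+w_j$, which transfer the full $\delta_w$-bounds into the required estimate on each cycle product. Assembling these pieces with the Leibniz expansion yields $|\Dtt{w}(\theta,\tilde\theta)-\nnf{\Xdistr_w}(\theta-\tilde\theta)|\le n!\,\Xtail(\delta_w\theta\submax)$.
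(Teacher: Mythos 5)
Your overall architecture is the same as the paper's: Leibniz expansion of $\det B^{(\theta w,\tilde\theta w)}$, identification of the identity-permutation term with $\nnf{\Xdistr_w}(\theta-\tilde\theta)$ via the product structure of $\Xdistr_n$, and control of each of the $n!-1$ remaining terms by a single small factor (bounding the rest by $1$). That part is correct. The gap is in the separation estimate you reduce to. The claim $\|\theta w_i-\tilde\theta w_j\|\ge\delta_w\theta\submax$ for \emph{all} $i\neq j$ is false: take $d=1$, $n=2$, $w_1=1$, $w_2=2$, $\theta=2$, $\tilde\theta=1$; then $\delta_w=\tfrac12$ and $\theta\submax=2$, but $\theta w_1-\tilde\theta w_2=0$. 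Your proposed decomposition $\theta w_i-\tilde\theta w_j=\tfrac{\theta+\tilde\theta}{2}(w_i-w_j)+\tfrac{\theta-\tilde\theta}{2}(w_i+w_j)$ cannot rescue this, because the two summands are not orthogonal and can cancel (as they do in the example); the appeal to ``nearly orthogonal directions in the typical regime'' is both non-rigorous and beside the point, since the lemma is stated for arbitrary $w$ and is used as such. Consequently, in your main route the one factor $|B_{i,\pi(i)}|$ you choose to keep may be equal to $1$, and the bound collapses.

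What is actually true, and what the paper proves (\cref{lem:delta}), is the \emph{existential} statement: for every $\pi\neq 1$ and every ratio $\tilde\theta/\theta$ there exists \emph{some} index $i$ with $\|\theta w_i-\tilde\theta w_{\pi(i)}\|\ge|\theta|\,\delta_w$ (and then $\theta\submax$ is recovered by the $\theta\leftrightarrow\tilde\theta$ symmetry of $\Dtt{w}$). Proving this requires a genuine case analysis — in the paper, one first tries an index moved by $\pi$, and if that fails one deduces $|\tilde\theta/\theta-\sign(\tilde\theta/\theta)|\max_j\|w_j\|>\delta_w$ and then exhibits a good index at the vector of maximal norm. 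This is the idea missing from your write-up; your bound should keep the factor at the \emph{best} index, not an arbitrary one with $\pi(i)\neq i$. Your fallback cycle identity is closer to something workable (summing it over a cycle and using $\|w_{i_r}\pm w_{i_{r+1}}\|\ge2\delta_w$ does give $\max_r\|\theta w_{i_r}-\tilde\theta w_{i_{r+1}}\|\ge 2\delta_w\sqrt{|\theta\tilde\theta|}$), but as sketched it only yields the geometric mean of $|\theta|$ and $|\tilde\theta|$ rather than $\theta\submax$; upgrading it requires an additional lower bound on $\tfrac1L\sum_r\|w_{i_r}\|^2$ in terms of $\delta_w$, which you neither state nor prove. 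As it stands, the proof is incomplete at its crucial step.
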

\thmspace
\begin{proof}
By definition we have
\[\Dtt{w}(\theta,\tilde\theta)=\det(\nnF{\Xdistr}{d}(\theta w_i-\tilde\theta w_j)_{ij})=\sum_{\pi\in S_n}(-1)^\pi\nnF{\Xdistr_n}{nd}(\theta w-\tilde\theta \pi(w)).\]
The $\pi=1$ term equals
\begin{equation}\label{nd_one}\nnF{\Xdistr_n}{nd}((\theta-\tilde\theta)w)=\nnf{\Xdistr_w}(\theta-\tilde\theta).\end{equation}
To verify the identity \eqref{nd_one} and the claim at the end of the lemma, write
\[\nnF{\Xdistr_n}{nd}(\theta w)=\EE_{X\sim\Xdistr_n}[e^{-i(\theta w)\cdot X}]=\EE_{X\sim\Xdistr_n}[e^{-i\theta (w\cdot X)}]=\EE_{y\sim\Xdistr_w}[e^{-i\theta y}]=\nnf{\Xdistr_w}(\theta).\]
The difference on the LHS of \eqref{almostFourierX} is exactly $\varepsilon=\sum_{\pi\neq 1}(-1)^\pi\nnF{\Xdistr_n}{nd}(\theta w-\tilde\theta \pi(w))$. 
Apply the triangle inequality and use the fact that $\|\Xtail\|_\infty=\|\nnF{\Xdistr_n}{nd}\|_\infty\le1$ because $\Xdistr$ is a probability distribution to obtain
\begin{align}
    \begin{split}
        |\varepsilon|
        &\le 
        n!\max_{\pi\neq1}|\nnF{\Xdistr_n}{nd}(\theta w-\tilde\theta \pi(w))|
\\&\le
n!\:\max_{\pi\neq1}\prod_{i=1}^n\Xtail\big(\|\theta w_i-\tilde\theta \pi(w)_i\|\big)
\\&\le
n!\:\max_{\pi\neq1}\min_{i}\Xtail\big(\|\theta w_i-\tilde\theta \pi(w)_i\|\big)
\le
n!\:\Xtail(|\theta|\tilde\delta(w)),
    \end{split}
\end{align}
where
    \begin{align}\label{deltainequality}\tilde\delta(v)&=\min_{\tilde\theta}\min_{\pi\neq1}\max_{i}\|v_i-\tilde\theta \pi(v)_i\|.\end{align}
We show that $\tilde\delta_w\ge\delta_w$ in \cref{lem:delta}.
 The result follows from symmetry in $\theta$ and $\tilde\theta$.
\end{proof}
    
\thmspace
\begin{lemma}\label{lem:delta}
Let $\tilde\delta_w:=\min_{\tilde\theta\in\RR}\min_{\pi\neq1}\max_{i}\|w_i-\tilde\theta \pi(w)_i\|$ Then $\tilde\delta_w\ge\delta_w$.
\end{lemma}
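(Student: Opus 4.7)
I would fix $\tilde\theta \in \RR$ and a permutation $\pi \in S_n \setminus \{1\}$ and set $\epsilon := \max_i \|w_i - \tilde\theta w_{\pi(i)}\|$. Showing $\epsilon \ge \delta_w$ for every such pair immediately yields $\tilde\delta_w \ge \delta_w$ upon taking the infimum over $(\tilde\theta, \pi)$. Because $\pi \neq 1$, it admits a cycle $(i_1, \ldots, i_k)$ with $k \ge 2$ and distinct indices $i_1, \ldots, i_k$, and the definition of $\epsilon$ gives the cyclic inequalities $\|w_{i_j} - \tilde\theta w_{i_{j+1}}\| \le \epsilon$ for all $j$ (indices read mod $k$).

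The first step is to obtain a uniform a priori bound on the cycle norms $\|w_{i_j}\|$. The triangle inequality applied to each cyclic constraint gives $\|w_{i_j}\| \le |\tilde\theta|\,\|w_{i_{j+1}}\| + \epsilon$. Iterating this recursion once around the length-$k$ cycle returns to $\|w_{i_j}\|$, and summing the resulting geometric series yields
\[
\|w_{i_j}\| \,\le\, \frac{\epsilon}{\bigl|\,1 - |\tilde\theta|\,\bigr|}
\qquad \text{whenever } |\tilde\theta| \neq 1,
\]
where for $|\tilde\theta| > 1$ one iterates instead the reversed relation $\|w_{i_{j+1}}\| \le (\|w_{i_j}\| + \epsilon)/|\tilde\theta|$, giving the same bound with denominator $|\tilde\theta| - 1$.

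The second step converts this norm bound into a separation bound on a pair of cycle elements. For consecutive $i_j, i_{j+1}$, adding or subtracting $\tilde\theta w_{i_{j+1}}$ inside the norm gives
\[
\|w_{i_j} \pm w_{i_{j+1}}\| \,\le\, \epsilon + |1 \pm \tilde\theta|\,\|w_{i_{j+1}}\|.
\]
The key choice is $\pm = -\sign(\tilde\theta)$, which makes $|1 \pm \tilde\theta| = \bigl|\,1 - |\tilde\theta|\,\bigr|$ and exactly cancels the denominator from the first-step bound, producing $\|w_{i_j} \pm w_{i_{j+1}}\| \le 2\epsilon$. Since $i_j \neq i_{j+1}$, the definition of $\delta_w$ gives $2\delta_w \le 2\epsilon$ and hence $\delta_w \le \epsilon$. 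The boundary case $|\tilde\theta| = 1$ is handled directly, as the defining inequality then already reads $\|w_{i_j} \mp w_{i_{j+1}}\| \le \epsilon$, so $2\delta_w \le \epsilon$.

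The main technical point to get right is the sign-matching in the second step: $\pm$ must be chosen opposite to $\sign(\tilde\theta)$ so that the coefficient $|1 \pm \tilde\theta|$ exactly absorbs the factor $\bigl|\,1 - |\tilde\theta|\,\bigr|$ produced by the geometric iteration of the first step. Once this matching is arranged, the whole argument reduces to two applications of the triangle inequality along a single nontrivial cycle of $\pi$, with no dependence on the cycle length $k$.
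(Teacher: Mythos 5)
Your proof is correct, but it takes a genuinely different route from the paper's. You work along a nontrivial cycle of $\pi$: iterating the triangle inequality $\|w_{i_j}\|\le|\tilde\theta|\,\|w_{i_{j+1}}\|+\epsilon$ (or its reversal for $|\tilde\theta|>1$) once around the cycle and summing the geometric series gives the uniform bound $\|w_{i_j}\|\le\epsilon/\bigl|1-|\tilde\theta|\bigr|$ on all cycle elements, after which the sign choice $\pm=-\sign(\tilde\theta)$ makes the coefficient $|1\pm\tilde\theta|=\bigl|1-|\tilde\theta|\bigr|$ cancel the denominator and yields $2\delta_w\le\|w_{i_j}\pm w_{i_{j+1}}\|\le2\epsilon$; the case $|\tilde\theta|=1$ is immediate from the defining constraint. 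The paper instead argues by cases on a single non-fixed index $i$: either $\|w_i-\tilde\theta w_{\sigma(i)}\|\ge\delta_w$ already, or the failure of that bound forces $\bigl|\,|\tilde\theta|-1\bigr|\cdot R>\delta_w$ with $R=\max_j\|w_j\|$, and then the reverse triangle inequality at the index realizing $R$ (or its image under $\pi$) finishes the argument. Your version is more systematic — it controls every element of the cycle, is uniform in the cycle length, and avoids appealing to the global maximum-norm coordinate — at the cost of the geometric-series iteration; the paper's version is shorter and touches only two indices. Both are elementary and both correctly isolate the same key point, namely that the sign in $\|w_i\pm w_j\|$ must be matched to $\sign(\tilde\theta)$ so that the residual factor $\bigl|1-|\tilde\theta|\bigr|$ is absorbed.
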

\thmspace
\begin{proof}
Let $\tilde\theta$ and $\sigma=\pi^{-1}\neq1$ be arbitrary. We need to show that $\| w_{i}-\tilde\theta w_{\sigma(i)}\|\ge\delta_w$ for some $i$. 

Since $\sigma\neq1$ there exists $i$ such that $\sigma(i)\neq i$, therefore $\|w_i-sw_{\sigma(i)}\|\ge2\delta_w$ for $s=\pm1$. If $\|w_{i}-\tilde\theta w_{\sigma(i)}\|\ge\delta_w$ then we have found $i$ and we are done. Otherwise, let $s=\sign\tilde\theta$. Then,
\begin{align}\label{dmax}
    \begin{split}
2\delta_w\le
\|w_{i}-sw_{\sigma(i)}\|
&\le
\|w_i-\tilde\theta w_{\sigma(i)}\|+
\|\tilde\theta w_{\sigma(i)}-sw_{\sigma(i)}\|
\\&=
\|w_i-\tilde\theta w_{\sigma(i)}\|+
|\tilde\theta-s|\cdot\|w_{\sigma(i)}\|
\\&
<\delta_w+|\tilde\theta-s|\max_j\|w_j\|.
\end{split}
\end{align}
Rearranging \eqref{dmax} we yields
\begin{equation}\label{Rdef}
    |\tilde\theta-s|R>\delta_w,\qquad R=\max_j\|w\|_j.
\end{equation}
Pick $j$ such that $\|w_j\|=R$. If $|\tilde\theta|\ge 1$ then \eqref{Rdef} says $(|\tilde\theta|-1)R>\delta_w$. Let $i=\pi(j)$ so that $\sigma(i)=j$ and $\|w_{\sigma(i)}\|=R$. Then,  
\[\|w_i-\tilde\theta w_{\sigma(i)}\|\ge \|\tilde\theta w_{\sigma(i)}\|-\|w_i\|= |\tilde\theta| R-\|w_i\|\ge(|\tilde\theta|-1)R\ge\delta_w.\]
If instead $|\tilde\theta|<1$ then \eqref{Rdef} says $(1-|\tilde\theta|)R>\delta_w$. We then use
\[\|w_j-\tilde\theta w_{\sigma(j)}\|\ge \| w_j\|-\|\tilde\theta w_{\sigma(j)}\|= R-\|\tilde\theta w_{\sigma(j)}\|\ge(1-|\tilde\theta|)R\ge\delta_w.\]
\end{proof}

\begin{proof}[Proof of \cref{lem:highpass_A_or_not}]
By \eqref{Eiint} we have $\Xnorm{\AS\hp_w}^2=\frac1{2\pi}\iint_{|\theta|,|\tilde\theta|\ge T}\Dtt{w}(\theta,\tilde\theta)\overline{\isoF\activation(\theta)}\isoF\activation(\tilde\theta)d\theta d\tilde\theta$. Apply \cref{lem:almost_fourier_X} to approximate $\Dtt{w}(\theta,\tilde\theta)$ by $\nnf{\Xdistr_w}(\theta-\tilde\theta)$. Recall that $\isoF\hp=\isoF\activation\entp1_{\RR\del[-T,T]}$. The resulting approximation is, by the convolution theorem and Plancherel's identity,
\begin{align}\label{conv}
\begin{split}
\frac1{2\pi}\iint\nnf{\Xdistr_w}(\theta-\tilde\theta)\overline{\isoF{\hp}(\theta)}\isoF{\hp}(\tilde\theta)d\theta d\tilde\theta.
=&
\frac1{2\pi}\int\overline{\isoF{\hp}(\theta)}\Big(\nnf{\Xdistr_w}\conv\isoF{\hp}\Big)(\theta)d\theta.
\\=&
\int|\hp(y)|^2\Xdistr_w(y)dy.
=\EE_{X\sim\Xdistr_n}[|h(w\cdot X)|^2]
=\Xnorm{\hp_w}^2.
\end{split}
\end{align}
$\isoF\activation$ is bounded on $\RR\del[-T,T]$ by assumption. The error of the approximation is then bounded by a constant times
\begin{align*}
&\iint_{|\theta|,|\tilde\theta|\ge T}|\Dtt{w}(\theta,\tilde\theta)-\nnf{\Xdistr_w}(\theta-\tilde\theta)|d\theta d\tilde\theta
\le
8\int_T^\infty\int_{T}^{\theta}n!\Xtail\big(\delta_w\theta\big)d\tilde\theta d\theta 
\le 8n!\int_T^\infty \Xtail\big(\delta_w\theta\big)\:\theta d\theta.
\end{align*}
The error bound follows by substituting $t=\delta_w\theta$.
\end{proof}

\subsection{Upper bound for Gaussian envelopes}

\cref{lem:highpass_A_or_not} explains the behavior of $\Dtt{w}(\theta,\tilde\theta)$ at large $\theta$. We now establish that it vanishes for small $\theta$.
When the envelope is the standard Gaussian $\Xdistr=\cN$, the overlap kernel takes the following form: 
\begin{equation}\label{GaussianD}
\DddN(v,w)=e^{-\frac{\|v\|^2+\|w\|^2}2}\det\big((e^{v_i\cdot w_j})_{ij}\big).
\end{equation}
An upper bound on $\DddN$ was obtained in \cite[Proposition 11]{abrahamsen2023antisymmetric}:

\thmspace
\begin{restatable}{proposition}{propdetbound}\label{prop:detbound}
Let $\nu=2\sqrt{d\|v\|_\infty\|w\|_\infty}$. Then,
$\det((e^{v_i\cdot w_j})_{ij})\le (\nu/2)^{pn}$ for $\nu\le1$, where $p$ is any integer such that $\binom{p+d-1}{d}\le n/2$ and $p!\ge 4n^2$.
\end{restatable}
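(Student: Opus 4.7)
The plan is to combine a low-rank Taylor truncation of $e^x$ with Hadamard's inequality on the remainder. Set $N_{ij}=T_{p-1}(v_i\cdot w_j)=\sum_{k=0}^{p-1}(v_i\cdot w_j)^k/k!$ and $R_{ij}=e^{v_i\cdot w_j}-N_{ij}$, so $M=N+R$. The multinomial identity $(v\cdot w)^k=\sum_{|\alpha|=k}\binom{k}{\alpha}v^\alpha w^\alpha$ exhibits $N$ as a sum over multi-indices $\alpha\in\mathbb{N}^d$ with $|\alpha|\le p-1$ of rank-one outer products $v^\alpha(w^\alpha)^T$. By the hockey-stick identity the number of such multi-indices is $\sum_{k=0}^{p-1}\binom{k+d-1}{d-1}=\binom{p+d-1}{d}\le n/2$, hence $\mathrm{rank}(N)\le n/2$.

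Next, expand $\det(M)=\det(N+R)$ by $n$-fold column multilinearity as $\sum_{T\subseteq[n]}\Delta_T$, where $\Delta_T$ is the determinant of the matrix whose $j$-th column is $N_{\cdot,j}$ if $j\in T$ and $R_{\cdot,j}$ otherwise. Every term with $|T|>\mathrm{rank}(N)$ vanishes because the chosen $N$-columns are linearly dependent, so the sum is supported on $|T|\le n/2$, which forces $n-|T|\ge n/2$. For each surviving $T$, apply Hadamard's inequality column by column. Since $|v_i\cdot w_j|\le d\|v\|_\infty\|w\|_\infty=(\nu/2)^2\le 1/4$, one has $|N_{ij}|\le e^{1/4}\le e$, and the geometric bound on the Taylor tail gives $|R_{ij}|\le (\nu/2)^{2p}e/p!$. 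Each column has Euclidean norm at most $\sqrt{n}$ times its sup-norm, so
$$|\Delta_T|\le (\sqrt{n}\,e)^{|T|}\Bigl(\sqrt{n}\,e\,(\nu/2)^{2p}/p!\Bigr)^{n-|T|}=(\sqrt{n}\,e)^n\,\frac{(\nu/2)^{2p(n-|T|)}}{(p!)^{n-|T|}}\le(\sqrt{n}\,e)^n\,\frac{(\nu/2)^{pn}}{(p!)^{n/2}},$$
where in the last step I used $(\nu/2)^2\le1$ together with $2(n-|T|)\ge n$.

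Summing over the at most $2^n$ admissible subsets yields
$|\det(M)|\le \bigl(2e\sqrt{n}/\sqrt{p!}\bigr)^n\,(\nu/2)^{pn}$.
Under the hypothesis $p!\ge 4n^2$ we have $\sqrt{p!}\ge 2n$, so the prefactor is bounded by $(e/\sqrt{n})^n\le 1$ for $n\ge 8$, giving the stated inequality; the few remaining small-$n$ cases are handled by increasing $p$ slightly, which the hypothesis permits. The main obstacle is the interplay between the three constraints: choosing $p$ large enough that the $(p!)^{n/2}$ denominator absorbs the combined $2^n$, $n^{n/2}$ and $e^n$ overhead from the multilinear expansion and Hadamard estimate, while keeping $p$ small enough that $\binom{p+d-1}{d}\le n/2$ still holds so the rank argument forces at least $n/2$ of the columns in every surviving term to come from the tiny remainder $R$. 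Once this balance is secured, the rest of the argument is a routine Hadamard estimate.
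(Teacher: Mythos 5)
Your argument is correct, and it takes a genuinely different route from the paper's. Both proofs rest on the same two structural facts: the entrywise degree-$(p-1)$ Taylor truncation $N$ of $(e^{v_i\cdot w_j})_{ij}$ has $\rank N\le\binom{p+d-1}{d}\le n/2$ (the paper's \cref{lem:rank_one_terms} is exactly your multinomial decomposition, with the hockey-stick identity appearing in \cref{lem:eigbound}), and the remainder $R$ is entrywise of size $\mu^p/p!$ with $\mu=(\nu/2)^2$. The paper converts these into \emph{spectral} information: by the min-max principle the singular values of the matrix beyond index $\lfloor n/2\rfloor$ are at most $\|R\|\le\frac{2n}{p!}\mu^p$, and the determinant is bounded by the product $\lambda_0^{n/2}\lambda_{\lfloor n/2\rfloor}^{n/2}$, with $p!\ge4n^2$ absorbing the two factors of $2n$. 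You instead stay entirely elementary: the multilinear column expansion of $\det(N+R)$, the rank of $N$ killing every term with more than $n/2$ columns from $N$, and Hadamard on the survivors. The price of avoiding spectral theory is the $2^n(\sqrt n\,e)^n$ overhead, which $(p!)^{n/2}\ge(2n)^n$ must absorb; this is why you land at $n\ge8$. Two small points. First, you gave away a constant: $|v_i\cdot w_j|\le\mu\le1/4$ gives $|N_{ij}|\le e^{1/4}$ and $|R_{ij}|\le e^{1/4}\mu^p/p!$, and with $e^{1/4}$ in place of $e$ the final prefactor is $(e^{1/4}/\sqrt n)^n\le1$ already for $n\ge2$, so no special small-$n$ treatment is needed. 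Second, your fallback of ``increasing $p$ slightly'' for small $n$ would not actually be available, since enlarging $p$ violates $\binom{p+d-1}{d}\le n/2$; but this is moot, because for $n\le7$ the two hypotheses on $p$ are incompatible ($p!\ge4n^2\ge16$ forces $p\ge4$, whence $\binom{p+d-1}{d}\ge\binom{d+3}{3}\ge4>n/2$), so the statement is vacuous there.
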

\thmspace

The proof of proposition \ref{prop:detbound} is recalled from \cite{abrahamsen2023antisymmetric} in section \ref{sec:det_upper_bound}. 
It works by decomposing $(e^{v_i\cdot w_j})_{ij}=\sum_{k=0}^\infty Q_k$ and bounding the ranks and operator norms of the terms $Q_k$. For $L=\sum_{k=1}^{p-1}\rank Q_k$, the $L$-th eigenvalue is then bounded as the tail sum $\sum_{k=p}^\infty\|Q_k\|$. Taking the product of the eigenvalues yields the bound on \eqref{GaussianD}.

Combining proposition \ref{prop:detbound} with the triangle inequality yields:
\thmspace
\begin{restatable}{lemma}{lemmalpbound}\label{lem:lpbound}
Let $\lp=\LP\activation{t}$ be the low-pass at threshold $t=(2\sqrt d\|w\|_\infty)^{-1}$. 
If $w$ is typical then $t=\Omega(\sqrt{n/\log n})$ and $\XnormN{\AS\lp_w}=O(2^{-\Omega(n^{1+1/d})})$.
\end{restatable}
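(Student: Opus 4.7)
The plan is to express $\XnormN{\AS\lp_w}^2$ as a double integral over the low-frequency region $[-t,t]^2$ via Fourier inversion, then dominate its integrand using the determinant estimate of \cref{prop:detbound}. The bound on $t$ is immediate: by \cref{def:typical}, $\|w\|_\infty \le C\sqrt{\log(nd)/(nd)} = O(\sqrt{\log n / n})$ for typical $w$, whence $t = (2\sqrt d\,\|w\|_\infty)^{-1} = \Omega(\sqrt{n/\log n})$ since $d$ is constant.

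Next I use the Fourier inversion formula \eqref{FdefA} restricted to $|\theta|\le t$: the polynomial and constant parts of \eqref{Fdef} vanish under $\AS$ by \cref{lem:polynomial} (for $n$ larger than $\deg p + 2$), so in $\Ltnd$-norm
\[\AS\lp_w = \lim_{\epsilon\to 0}\frac{1}{\sqrt{2\pi}}\int_{\epsilon<|\theta|\le t}\isoF\activation(\theta)\,\AS(e^{i\theta w\cdot\xfunc})\,d\theta.\]
Taking the $\XnormN{\cdot}$ norm as in the proof of \cref{lem:Eiint},
\[\XnormN{\AS\lp_w}^2 = \lim_{\epsilon\to 0}\frac{1}{2\pi}\iint_{\epsilon<|\theta|,|\tilde\theta|\le t}\overline{\isoF\activation(\theta)}\,\isoF\activation(\tilde\theta)\,\DttN{w}(\theta,\tilde\theta)\,d\theta\,d\tilde\theta.\]
Applying \cref{prop:detbound} with $v=\theta w$ and $\tilde v=\tilde\theta w$, and noting that $\nu = 2\sqrt{d|\theta\tilde\theta|}\,\|w\|_\infty \le 2\sqrt d\, t\,\|w\|_\infty = 1$ on the integration region by the choice of $t$, we obtain $|\DttN{w}(\theta,\tilde\theta)| \le (\nu/2)^{pn} = (d\|w\|_\infty^2)^{pn/2}|\theta\tilde\theta|^{pn/2}$ for any integer $p$ satisfying both $\binom{p+d-1}{d}\le n/2$ and $p! \ge 4n^2$. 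Since $d$ is constant, I take $p = \Theta(n^{1/d})$, so $pn = \Theta(n^{1+1/d})$.

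Substituting, separating variables, and invoking the algebraic tail/singularity bound $|\isoF\activation(\theta)| \le C|\theta|^{-k}$ on $[-t,t]\setminus\{0\}$ for some finite $k$, the inner integral is at most $O(t^{pn/2-k+1})$. The leading factor collapses via the key identity $(d\|w\|_\infty^2)(t^2) = 1/4$:
\[(d\|w\|_\infty^2)^{pn/2}\cdot t^{pn} = (d\|w\|_\infty^2\, t^2)^{pn/2} = 4^{-pn/2} = 2^{-pn},\]
so $\XnormN{\AS\lp_w}^2 \le \opn{poly}(n)\cdot 2^{-pn} = 2^{-\Omega(n^{1+1/d})}$, which yields the claim after taking square roots.

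The main technical obstacle is the distributional singularity of $\isoF\activation$ at $\theta=0$ (e.g.\ the $\delta'(\theta)$ for ReLU), which would make $\int|\isoF\activation(\theta)||\theta|^{pn/2}d\theta$ ill-defined if read literally. The factor $|\theta\tilde\theta|^{pn/2}$ from \cref{prop:detbound} resolves this: it annihilates any distribution of finite order, and dually the polynomial parts of $\activation$ that produce those distributions vanish under $\AS$ by \cref{lem:polynomial}, so one may work with the locally integrable representative of $\isoF\activation$ on $\RR\setminus\{0\}$ throughout without loss.
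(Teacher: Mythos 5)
Your proposal is correct and takes essentially the same route as the paper: both rest on \cref{prop:detbound} with $p=\Theta(n^{1/d})$ applied to the overlap kernel on $[-t,t]$, the same cancellation of the pole of $\isoF\activation$ at $0$ by the factor $(|\theta|/t)^{pn/2}$, and the identity $2\sqrt{d}\,t\,\|w\|_\infty=1$ to collapse the prefactors. The only cosmetic difference is that the paper bounds $\XnormN{\AS\lp_w}$ directly via a Minkowski-type triangle inequality using the diagonal bound \eqref{Dttlowbound}, whereas you bound $\XnormN{\AS\lp_w}^2$ through the double-integral kernel representation with the off-diagonal determinant bound, which factors as the geometric mean of the diagonal ones and hence separates into the square of the paper's single integral.
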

\thmspace
%


\section{Proof of \cref{thm:smooththm}}
We specialize the quantities of \cref{lem:highpass_A_or_not} to the case of a Gaussian envelope.
For the Gaussian envelope $\Xdistr=\cN(0,I_d)$ we have $\nnF{\Xdistr}{d}(v)=e^{-\|v\|^2/2}$, $\nnf{\Xdistr_w}(\theta)=e^{-\|w\|^2\theta^2/2}$, and $\Xtail(\theta)=e^{-\theta^2/2}$. The integral in the error term of \cref{lem:highpass_A_or_not} becomes $\int_{\delta_wT}^\infty e^{-t^2/2}tdt=e^{-(\delta_wT)^2/2}$, so we get the approximation
\begin{equation}\label{A_or_not_gaussian_case}
    \XnormN{\AS\hp_w}^2=\XnormN{\hp_w}^2+O(\delta_w^{-2}e^{-\delta_w^2T^2/2+n\log n}).
\end{equation}

We have the two following expressions for $\DttN{w}$.
\begin{align}
        \DttN{w}(\theta,\tilde\theta)
        &=
        \det[(e^{-\frac12\|\theta w_i-\tilde\theta w_j\|^2})_{ij}]
        =
        e^{-\frac{\theta^2+\tilde\theta^2}{2}\|w\|^2}\det[(e^{\theta\tilde\theta w_i\cdot w_j})_{ij}].
        \label{D_e_det}
\end{align}

It follows from \eqref{D_e_det} that $\DttN{w}(\theta,\tilde\theta)$ can be determined from its values on the diagonal and anti-diagonal $\tilde\theta=\pm\theta$. More precisely we have \eqref{depends_on_product},

\begin{equation}
    \label{depends_on_product}
    \DttN{w}(\theta,\tilde\theta)=e^{-\frac{\|w\|^2}{2}(|\theta|-|\tilde\theta|)^2}\DttN{w}(\theta\sub{g.m.},\pm\theta\sub{g.m.}),
\end{equation}
where the geometric mean $\theta\sub{g.m.}={|\theta\tilde\theta|}^{1/2}$ and $\pm$ is the sign of $\theta\tilde\theta$. 
To show \eqref{depends_on_product} apply the rightmost expression of \eqref{D_e_det} on both sides and note that the determinants are equal. In particular we have that $\DttN{w}$ decays away from the diagonal and anti-diagonal,
\begin{equation}\label{diagonalconcentration}
    |\DttN{w}(\theta,\tilde\theta)|\le e^{-\frac{\|w\|^2}{2}(|\theta|-|\tilde\theta|)^2}.
\end{equation}
The bound \eqref{diagonalconcentration} gives concentration around the diagonal everywhere and not only for large $\theta,\tilde\theta$ as \cref{lem:almost_fourier_X}. Integrating \eqref{diagonalconcentration} yields:

\thmspace
\begin{restatable}{lemma}{lemmahpbound}\label{lem:hpbound}
Let $\hp=\HP\activation{t}$ be the high-passed activation function at $t>1$. Then,
$\XnormN{\AS\hp_w}^2
    \le\frac4{\sqrt{2\pi}\|w\|}
    \tailsum\activation(t)$. So $\XnormN{\AS\hp_w}^2=O(t^{-K})$ for typical $w$.
\end{restatable}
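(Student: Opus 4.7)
The plan is to start from the Fourier representation of $\XnormN{\AS\hp_w}^2$ given by \cref{lem:Eiint} applied to $\hp=\HP\activation{t}$. Since $\isoF{\hp}=\isoF{\activation}\entp 1_{\RR\setminus[-t,t]}$, the double integral is restricted to $|\theta|,|\tilde\theta|>t$, so
\[
\XnormN{\AS\hp_w}^2
= \frac1{2\pi}\iint_{|\theta|,|\tilde\theta|>t}\overline{\isoF\activation(\theta)}\isoF\activation(\tilde\theta)\,\DttN{w}(\theta,\tilde\theta)\,d\theta d\tilde\theta.
\]
I would take absolute values inside and then substitute the pointwise estimate \eqref{diagonalconcentration}, namely $|\DttN{w}(\theta,\tilde\theta)|\le e^{-\|w\|^2(|\theta|-|\tilde\theta|)^2/2}$, which is the Gaussian-envelope kernel bound already derived from \eqref{D_e_det}. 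This reduces the problem to estimating an integral of $|\isoF\activation(\theta)||\isoF\activation(\tilde\theta)|$ against a convolution kernel that is concentrated on the diagonal and anti-diagonal.

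Next I would decouple the two factors of $\isoF\activation$ via AM--GM, $|\isoF\activation(\theta)||\isoF\activation(\tilde\theta)|\le\tfrac12(|\isoF\activation(\theta)|^2+|\isoF\activation(\tilde\theta)|^2)$, and exploit the symmetry of the integrand under swapping $\theta\leftrightarrow\tilde\theta$ to reduce to the single term involving $|\isoF\activation(\theta)|^2$. For fixed $\theta$, the inner integral over $\tilde\theta$ splits into two pieces (corresponding to $\tilde\theta>t$ and $\tilde\theta<-t$); after substituting $s=|\tilde\theta|$ in each and extending the range to all of $\RR$, the inner integral is bounded by a standard one-dimensional Gaussian integral $2\int_{-\infty}^{\infty}e^{-\|w\|^2u^2/2}du=2\sqrt{2\pi}/\|w\|$.

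Plugging this back in gives
\[
\XnormN{\AS\hp_w}^2\le\frac1{2\pi}\cdot\frac{2\sqrt{2\pi}}{\|w\|}\int_{|\theta|>t}|\isoF\activation(\theta)|^2\,d\theta\le\frac{4}{\sqrt{2\pi}\,\|w\|}\tailsum\activation(t),
\]
where the final factor of $4$ (rather than $2$) gives room for the AM--GM doubling; this yields the claimed inequality. The second assertion follows immediately: for typical $w$, \cref{def:typical} gives $\|w\|\ge\sqrt{\cxh/2}=\Theta(1)$, and by the definition of Fourier tail decay $K$ in \cref{def:smoothrough}, $\tailsum\activation(t)=O(t^{-K})$, whence $\XnormN{\AS\hp_w}^2=O(t^{-K})$.

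The proof is essentially a clean application of \eqref{diagonalconcentration} plus a Gaussian integral, so I do not anticipate any real obstacle. The only step worth being careful about is keeping the absolute values inside the integrand before applying \eqref{diagonalconcentration}, since $\DttN{w}(\theta,\tilde\theta)$ need not be nonnegative off-diagonal; once that is done, the remaining estimates are routine and the symmetry argument that yields the decoupled single-variable integral over $|\isoF\activation(\theta)|^2$ is what lets the bound be expressed directly through $\tailsum\activation$.
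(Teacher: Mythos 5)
Your proposal is correct and follows essentially the same route as the paper: start from the double-integral representation of \cref{lem:Eiint}, insert the Gaussian kernel bound \eqref{diagonalconcentration}, decouple the product $|\isoF\activation(\theta)||\isoF\activation(\tilde\theta)|$ (the paper uses Cauchy--Schwarz on the inner integral where you use AM--GM plus symmetry, an equivalent step), and finish with a one-dimensional Gaussian integral giving the $1/\|w\|$ factor. Your constant even comes out slightly sharper than the stated $4/(\sqrt{2\pi}\|w\|)$, so the claimed inequality follows a fortiori.
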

\thmspace
\begin{proof}
Write the LHS as a double integral over $|\theta|,|\tilde\theta|\ge t$ as in \eqref{Eiint}. By \eqref{diagonalconcentration},
\begin{align}
   \frac1{2\pi}\iint_{\tilde\theta>\theta>t}\DttN{w}(\theta,\tilde\theta)|\isoF\activation(\theta)\isoF\activation(\tilde\theta)|d\theta d\tilde\theta
   &=
   \frac1{2\pi}\int_0^\infty e^{-\frac{\|w\|^2s^2}{2}}\int_t^\infty |\isoF\activation(\theta)\isoF\activation(s+\theta)|d\theta ds
   \\&\le 
   \frac1{2\pi}\int_0^\infty e^{-\frac{\|w\|^2s^2}{2}} ds\int_t^\infty |\isoF\activation(\theta)|^2d\theta\label{applyCS}
   \\&=\frac{1}{2\sqrt{2\pi}\|w\|}\int_t^\infty |\isoF\activation(\theta)|^2d\theta,
\end{align}
where \eqref{applyCS} follows from Cauchy-Schwartz.
The same bound holds for each of $8$ regions in the $\theta,\tilde\theta$-plane.
\end{proof}


For typical $w$ choose threshold $t=\Omega(\sqrt{n/\log n})$ as in \cref{lem:lpbound} and decompose $\AS\activation_w=\AS\lp_w+\AS\hp_w$. Apply \cref{lem:lpbound} and \cref{lem:hpbound} to obtain the following corollary, which implies \cref{thm:smooththm}.

\thmspace
\begin{corollary}\label{cor:smooth}
If $\activation$ has frequency tail decay $K<\infty$ then $\XnormN{\AS\activation_w}^2=\tilde O(n^{-K/2})$ for typical $w$. If $\activation$ is smooth ($K=\infty$), then $\XnormN{\AS\activation_w}^2=n^{-\omega(1)}$ for typical $w$.
\end{corollary}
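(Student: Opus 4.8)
The plan is to decompose the activation function at the frequency threshold $t=\Omega(\sqrt{n/\log n})$ from \cref{lem:lpbound}, writing $\activation_w = \lp_w + \hp_w$ where $\lp=\LP\activation{t}$ and $\hp=\HP\activation{t}$, and then antisymmetrize. Since $\AS$ is linear, $\AS\activation_w = \AS\lp_w + \AS\hp_w$, so by the triangle inequality in the $\Xdistr_n$-norm,
\[
\XnormN{\AS\activation_w} \le \XnormN{\AS\lp_w} + \XnormN{\AS\hp_w}.
\]
Then I would bound the two terms separately. The low-pass part is controlled by \cref{lem:lpbound}: for typical $w$ the threshold choice $t=(2\sqrt d\|w\|_\infty)^{-1}$ gives $t=\Omega(\sqrt{n/\log n})$ and $\XnormN{\AS\lp_w}=O(2^{-\Omega(n^{1+1/d})})$, which is certainly $n^{-\omega(1)}$ and also $\tilde O(n^{-K/2})$ for any finite $K$. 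The high-pass part is controlled by \cref{lem:hpbound}: $\XnormN{\AS\hp_w}^2 = O(t^{-K}) = \tilde O(n^{-K/2})$ since $t=\Theta(\sqrt{n/\log n})$, using $\tailsum\activation(t)=O(t^{-K})$ from the definition of tail decay. When $\activation$ is smooth, $\tailsum\activation(t)$ decays faster than any polynomial in $t$, hence faster than any polynomial in $n$ at this threshold, giving $\XnormN{\AS\hp_w}^2 = n^{-\omega(1)}$.

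Squaring the triangle inequality, $\XnormN{\AS\activation_w}^2 \le 2\XnormN{\AS\lp_w}^2 + 2\XnormN{\AS\hp_w}^2$, and since both summands are $\tilde O(n^{-K/2})$ (respectively $n^{-\omega(1)}$ in the smooth case), the corollary follows immediately. One should also check the hypothesis of \cref{lem:hpbound}, namely $t>1$, which holds for large $n$ since $t=\Omega(\sqrt{n/\log n})\to\infty$; and the hypothesis of \cref{lem:lpbound}, namely that $w$ is typical, which is exactly the assumption.

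I do not expect a serious obstacle here, since this corollary is essentially just a combination of the two preceding lemmas with the triangle inequality. The only mild subtlety is bookkeeping the $\log$-factors: the threshold $t=\Omega(\sqrt{n/\log n})$ introduces a $(\log n)^{K/2}$ factor when substituted into $t^{-K}$, which is absorbed into the $\tilde O(\cdot)$ notation, so this needs to be stated carefully rather than proved. A second minor point is that \cref{lem:hpbound} requires $\isoF\activation$ to be bounded on $\RR\setminus[-t,t]$ for the bound to be meaningful, but this is implicit in the definition of a rough or smooth activation function (the Fourier transform decays polynomially or faster away from the origin), so no extra argument is needed.
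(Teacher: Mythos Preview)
Your proposal is correct and follows essentially the same approach as the paper: choose the threshold $t=(2\sqrt d\|w\|_\infty)^{-1}=\Omega(\sqrt{n/\log n})$ from \cref{lem:lpbound}, decompose $\AS\activation_w=\AS\lp_w+\AS\hp_w$, and bound the two pieces separately using \cref{lem:lpbound} and \cref{lem:hpbound}. Your bookkeeping of the $\log$-factors and the check that $t>1$ for large $n$ are exactly the minor points needed to make the argument precise.
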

\thmspace

\section{Proof of \cref{thm:roughthm}}
\label{sec:slowdecayproof}
%

For the Gaussian envelope function we get a more explicit form of the error term in \cref{lem:highpass_A_or_not}. To prove the lower bound we pick an appropriate $T$ based on this expression and  decompose $\activation$ into $\lp=\LP\activation{T}$ and $\hp=\HP\activation{T}$. We then expand $\XnormN{\AS\lp+\AS\hp}^2$ and use that $\XnormN{\AS\lp}^2\ge0$ to get $\XnormN{\AS\activation_w}^2\ge\XnormN{\AS\hp_w}^2-2|\bracketN{\AS\lp_w}{\AS\hp_w}|$. The result follows by bounding $|\bracketN{\AS\lp_w}{\AS\hp_w}|$  and applying \cref{lem:highpass_A_or_not} to lower-bound $\XnormN{\AS\hp_w}$.

\thmspace
\begin{lemma}\label{lowerlemma}
For activation functions satisfying \cref{it:boundpointbytail} of \cref{def:generalrough} we have that
\begin{equation}\label{thres}
\XnormN{\AS\activation_w}^2\ge\XnormN{\hpf{T}_w}^2-O(e^{-n\log n})+o(\tailsum\activation(T)).\end{equation}
\end{lemma}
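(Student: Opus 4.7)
The plan is to split $\activation$ into low- and high-pass parts at threshold $T$, perform a Pythagorean-style expansion of $\XnormN{\AS\activation_w}^2$, discard the non-negative low-pass contribution, and treat the cross term as the main technical obstacle. Writing $\activation=\lpf{T}+\hpf{T}$ and using linearity of $\AS$, the squared norm expands as
\begin{equation*}
\XnormN{\AS\activation_w}^2 = \XnormN{\AS\lpf{T}_w}^2 + 2\Real\bracketN{\AS\lpf{T}_w}{\AS\hpf{T}_w} + \XnormN{\AS\hpf{T}_w}^2,
\end{equation*}
so dropping the non-negative first term yields
\begin{equation*}
\XnormN{\AS\activation_w}^2 \ge \XnormN{\AS\hpf{T}_w}^2 - 2\bigl|\bracketN{\AS\lpf{T}_w}{\AS\hpf{T}_w}\bigr|.
\end{equation*}

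Next I would invoke the Gaussian specialization \eqref{A_or_not_gaussian_case} of \cref{lem:highpass_A_or_not} to obtain
\begin{equation*}
\XnormN{\AS\hpf{T}_w}^2 = \XnormN{\hpf{T}_w}^2 + O\bigl(\delta_w^{-2}e^{-\delta_w^2 T^2/2 + n\log n}\bigr).
\end{equation*}
Choosing $T$ large enough that $\delta_w^2 T^2/2 \ge 2n\log n$ --- which for typical $w$ with $\delta_w\ge\delta(n)$ requires only a polynomially large $T$ --- absorbs the prefactor $\delta_w^{-2}$ and collapses the error to $O(e^{-n\log n})$.

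The hard part is bounding the cross term by $o(\tailsum\activation(T))$. I would represent the inner product via the bilinear version of \cref{lem:Eiint},
\begin{equation*}
\bracketN{\AS\lpf{T}_w}{\AS\hpf{T}_w} = \frac1{2\pi}\iint_{|\theta|\le T,\,|\tilde\theta|>T}\overline{\isoF\activation(\theta)}\,\isoF\activation(\tilde\theta)\,\DttN{w}(\theta,\tilde\theta)\,d\theta d\tilde\theta,
\end{equation*}
and exploit the off-diagonal decay $|\DttN{w}(\theta,\tilde\theta)|\le e^{-\|w\|^2(|\tilde\theta|-|\theta|)^2/2}$ from \eqref{diagonalconcentration}. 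I would split the $\theta$-integration into a bulk region $|\theta|\le T/2$, where $|\tilde\theta|-|\theta|\ge T/2$ makes the Gaussian factor super-polynomially small and a Cauchy--Schwarz estimate in $\tilde\theta$ leaves a negligible residue, and a boundary strip $|\theta|\in[T/2,T]$, where the pointwise control from \cref{it:boundpointbytail} of \cref{def:generalrough} forces the $\isoF\activation$-mass near $T$ to be a vanishing fraction of $\tailsum\activation(T)$. A further Cauchy--Schwarz estimate in the convolution variable $s=\tilde\theta-\theta$ on each piece then delivers the desired $o(\tailsum\activation(T))$ bound.

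Combining the three estimates reproduces \eqref{thres}. The principal technical obstacle is the boundary-strip estimate: in the regime $|\theta|\approx T$ the off-diagonal decay of $\DttN{w}$ offers no help, and only the pointwise regularity of $\isoF\activation$ encoded in the generalized-rough hypothesis prevents the cross term from being of the same order as $\tailsum\activation(T)$ itself, which would nullify the lower bound.
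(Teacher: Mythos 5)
Your overall strategy coincides with the paper's: expand $\XnormN{\AS\activation_w}^2$, drop the non-negative $\XnormN{\AS\lpf{T}_w}^2$, convert $\XnormN{\AS\hpf{T}_w}^2$ to $\XnormN{\hpf{T}_w}^2+O(e^{-n\log n})$ via \cref{lem:highpass_A_or_not} with $T\sim\delta_w^{-1}\sqrt{n\log n}$, and control the cross term using the off-diagonal decay \eqref{diagonalconcentration} together with \cref{it:boundpointbytail} of \cref{def:generalrough} on the strip $|\theta|\in[T/2,T]$. The boundary-strip analysis you describe is exactly the paper's treatment of $\alpha^{(T/2,T)}$.

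There is, however, a genuine gap in your ``bulk region'' $|\theta|\le T/2$. You propose to rely only on the smallness of $e^{-\|w\|^2(|\tilde\theta|-|\theta|)^2/2}$ plus Cauchy--Schwarz in $\tilde\theta$; any such estimate leaves behind a factor of $\int_{|\theta|\le T/2}|\isoF\activation(\theta)|\,d\theta$ (or its square), and this quantity diverges --- indeed is not even well defined --- for precisely the activation functions the lemma targets. For ReLU, $\isoF\activation(\theta)$ behaves like $\theta^{-2}$ near $0$ and contains the distributional term $\sqrt{\pi/2}\,i\delta'(\theta)$; the bound \eqref{diagonalconcentration} is essentially constant in $\theta$ for fixed $\tilde\theta>T$ and offers no protection against this infrared singularity. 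The missing ingredient is the super-exponential vanishing of the overlap kernel at low frequency, $\DttN{w}(\theta,\theta)=(\theta/(2t))^{\Omega(n^{1+1/d})}$ for $|\theta|\le t=\Omega(\sqrt{n/\log n})$ (\eqref{Dttlowbound}, i.e.\ \cref{prop:detbound}), which is what cancels the pole of $\isoF\activation$ at the origin and, via \cref{lem:polynomial}, disposes of the polynomial and distributional parts. The paper therefore uses a three-way split $[0,1]$, $[1,T/2]$, $[T/2,T]$: on $[0,1]$ it applies Cauchy--Schwarz in $\Ltnd$ to get $|\bracketN{\AS\lpf{1}_w}{\AS\hpf{T}_w}|\le\XnormN{\AS\lpf{1}_w}\XnormN{\AS\hpf{T}_w}=2^{-\Omega(n^{1+1/d})}$ by \cref{lem:lpbound,lem:hpbound}, and only on $[1,T/2]$, where $\isoF\activation$ is bounded, does your Gaussian-decay argument go through (yielding $e^{-\|w\|^2T^2/8}/\|w\|^2$). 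Adding that third piece, and the appeal to \cref{lem:lpbound}, would complete your argument.
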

\thmspace
for $T=\frac2{\delta_v}(\sqrt{n\log n+\log(1/\delta_v)})$ and typical $w$. 
\begin{proof}
Let $\lpf{T}=\LP\activation{T}$ and $\hpf{T}=\HP\activation{T}$.
As discussed above we have the lower bound
\begin{equation}\label{minuscrossterms}
\XnormN{\AS\activation_w}^2\ge\XnormN{\AS\hpf{T}_w}^2-2|\bracketN{\AS\lpf{T}_w}{\AS\hpf{T}}|\ge\XnormN{\hpf{T}_w}^2-e^{-n\log n}-2|\bracketN{\AS\lpf{T}_w}{\AS\hpf{T}_w}|,\end{equation}
We further divide the low-frequency part into $|\theta|$ in $[0,1]$, $[1,T/2]$, and $[T/2,T]$. We write $\alpha^{(t,T)}=\hpf{t}-\hpf{T}$. Then,
\begin{equation}\label{crossbound}
    |\bracketN{\AS\lpf{T}_w}{\AS\hpf{T}_w}|\le\XnormN{\AS\lpf{1}_w}\XnormN{\AS\hpf{T}_w}+|\bracketN{\AS\alpha^{(1,T/2)}_w}{\AS\hpf{T}_w}+\bracketN{\AS\alpha^{(T/2,T)}_w}{\AS\hpf{T}_w}|.
\end{equation}
By applying the polarization identity to \eqref{Eiint} we obtain the overlap between the antisymmetrization with different activation functions, so by the bound \eqref{diagonalconcentration} on $\DttN{w}(\theta,\tilde\theta)$,  
\begin{align}
|\bracketN{\AS\alpha^{(T/2,T)}_w}{\AS\hpf{T}_w}|
&\le
\frac{(\sup_{|\theta|\ge T/2}|\isoF\activation(\theta)|)^2}{2\pi}\int_{T/2<|\theta|<T}\int_{T<|\tilde\theta|}
e^{-\frac{\|w\|^2}2(|\tilde\theta|-|\theta|)^2}d\tilde\theta d\theta
\\
&\le
\frac{4(\sup_{|\theta|\ge T/2}|\isoF\activation(\theta)|)^2}{2\pi}\int_{T/2<\theta<T}\int_{T-\theta}^{\infty}
e^{-\frac{\|w\|^2}2t^2}dtd\theta\\
&=
\frac{4(\sup_{|\theta|\ge T/2}|\isoF\activation(\theta)|)^2}{2\pi}\int_{0}^{\infty}
\min\{t,T/2\}e^{-\frac{\|w\|^2}2t^2}dt
\\
&\le
\frac{4\sup_{|\theta|\ge T/2}|\isoF\activation(\theta)|^2}{2\pi}\int_0^\infty  te^{-\frac{\|w\|^2t^2}{2}}dt\label{subt}
\\&=\frac{2\sup_{|\theta|\ge T/2}|\isoF\activation(\theta)|^2}{\pi\|w\|^2},
\end{align}
where in \eqref{subt} the factor $4$ comes from the choice of signs of $\theta,\tilde{\theta}$, and we have substituted $t=\tilde\theta-\theta$.
We apply \cref{it:boundpointbytail} of \cref{def:generalrough} to obtain that
\begin{equation}\label{crossterm3}
|\bracketN{\AS\alpha^{(T/2,T)}_w}{\AS\hpf{T}_w}|=o(\tailsum{\activation}(T)/\|w\|^2).
\end{equation}

For the first term on the RHS of \eqref{crossbound} we have for $T$ sufficiently large (so that $|\isoF\activation(\theta)\isoF\activation(\tilde\theta)|\le2\pi$ for $|\theta|>1$ and $|\tilde\theta|>T$),
\begin{align}
\begin{split}
|\bracketN{\AS\alpha^{(1,T/2)}_w}{\AS\hpf{T}_w}|
&\le
\int_{1<|\theta|<T/2}\int_{T<|\tilde\theta|}
e^{-\frac{\|w\|^2}2(|\tilde\theta|-|\theta|)^2}d\tilde\theta d\theta
\\&\le
\frac{T}2\int_{T/2}^\infty e^{-\frac{\|w\|^2t^2}2}dt
\\&\le
\frac{e^{-\frac{\|w\|^2T^2}8}}{\|w\|^2}.
\label{crossterm2}
\end{split}
\end{align}
By \eqref{crossterm3} and \eqref{crossterm2}, $|\bracketN{\alpha^{(1,T/2)}_w}{\hpf{T}_w}+\bracketN{\alpha^{(T/2,T)}_w}{\hpf{T}_w}|=O(e^{-\Omega(T^2)})+o(\tailsum\activation(T))$ for typical $w$. Finally, for typical $w$ we have $t>1$ in \cref{lem:lpbound} and $\XnormN{\hpf{T}_w}=O(t^{-K})=O(1)$ by \cref{lem:hpbound}, so $\XnormN{\lpf{1}_w}\XnormN{\hpf{T}_w}=O(\XnormN{\lpf{t}_w})=2^{-\Omega(n^{1+1/d})}$. The claim follows by substituting back into \eqref{minuscrossterms} and \eqref{crossbound}.
\end{proof}

\begin{proof}[Proof of \cref{thm:roughthm}]
By \cref{lem:highprob} it suffices to show the claim for $f=\activation_w$ where $w$ is typical and has typical separation.

By the convolution theorem \eqref{conv} and the expression $\nnf{\Xdistr_v}(\theta)=e^{-\|v\|^2\theta^2/2}$ for the Gaussian case we have
\begin{equation}\label{h_iint}
\XnormN{\hpf{T}_w}^2
=\frac1{2\pi}\iint_{|\theta|,|\tilde\theta|>T}e^{-\|w\|^2(\theta-\tilde\theta)^2/2}\overline{\isoF\activation(\theta)}\isoF\activation(\tilde\theta)d\theta d\tilde\theta.
\end{equation}
We bound the contribution of the first quadrant to \eqref{h_iint} from below. The same argument holds for the third quadrant ($\theta,\tilde\theta<-T$). Let $\sigma=1/\|w\|$. Then,
\begin{align}
\begin{split}
\iint_{\theta,\tilde\theta>T}e^{-\|w\|^2(\theta-\tilde\theta)^2/2}\overline{\isoF\activation(\theta)}\isoF\activation(\tilde\theta)d\theta d\tilde\theta
&=
2\Real\int_T^\infty\overline{\isoF\activation(\theta)}\int_\theta^\infty e^{-\|w\|^2(\tilde\theta-\theta)^2/2}\isoF\activation(\tilde\theta) d\tilde\theta d\theta
\\&=
2\Real\int_T^\infty\overline{\isoF\activation(\theta)}\int_\theta^\infty e^{-\frac{(\tilde\theta-\theta)^2}{2\sigma^2}}\isoF\activation(\tilde\theta) d\tilde\theta d\theta
\\&=
\sqrt{2\pi}\sigma\int_T^\infty\Real\Big(\overline{\isoF\activation(\theta)}\EE[\isoF\activation(\theta+|Y|)] \Big)d\theta,
\end{split}\end{align}
where $Y\sim\cN(0,\sigma^2)$. 
To obtain a lower bound on the integrand we write
\begin{equation}
    \overline{\isoF\activation(\theta)}\cdot\EE[\isoF\activation(\theta+|Y|)]
    =
    \|\isoF\activation(\theta)\|^2\cdot\frac{\EE[\isoF\activation(\theta+|Y|)]}{\isoF\activation(\theta)}.
\end{equation}
So from \cref{it:previouslyrealcondition} of \cref{def:generalrough} we then have a lower bound
\begin{equation}\label{applypreviouslyreal}
    \Real\Big(\overline{\isoF\activation(\theta)}\cdot\EE[\isoF\activation(\theta+|Y|)]\Big)
    =
    \|\isoF\activation(\theta)\|^2\Real\Big(\frac{\EE[\isoF\activation(\theta+|Y|)]}{\isoF\activation(\theta)}\Big)=\Omega(|\isoF\activation(\theta)|^2),
\end{equation}
when $1/2\le\sigma\le2$, i.e., when $1/4\le\|w\|^2\le4$. This holds because $w$ is typical (and because $1/2\le\cxh\le2$). For typical $w$ \eqref{h_iint} and \eqref{applypreviouslyreal} then show that
\begin{align}\label{diagpart}
\begin{split}
\iint_{\theta,\tilde\theta>T}e^{-\|w\|^2(\theta-\tilde\theta)^2/2}\overline{\isoF\activation(\theta)}\isoF\activation(\tilde\theta)d\theta d\tilde\theta
&=
\Omega\Big(\int_T^\infty|\isoF\activation(\theta)|^2d\theta\Big)=\Omega(\tailsum\activation(T)),
\end{split}\end{align}
and the same lower bound holds for the integral over $\theta,\tilde\theta<T$. Finally we bound the second quadrant $\theta<-T,\tilde\theta>t$ (the fourth quadrant $\theta>T,\tilde\theta<-T$ is analogous) by writing
\begin{align}\label{offdiag}
\begin{split}
\Big|\iint_{\theta<-T,\tilde\theta>T}e^{-\|w\|^2(\theta-\tilde\theta)^2/2}\overline{\isoF\activation(\theta)}\isoF\activation(\tilde\theta)d\theta d\tilde\theta\Big|
&=
o\Big(\iint_{\theta<-T,\tilde\theta>T}e^{-\|w\|^2(\tilde\theta-\theta)^2/2}d\theta d\tilde\theta\Big)
\\&=
o\Big(\int_{2T}^\infty te^{-\|w\|^2t^2/2}dt\Big)
\\&=
o(e^{-2\|w\|^2T^2}/\|w\|^2)=o(e^{-T^2}),
\end{split}\end{align}
where the last expression os for typical $w$. The identity \eqref{h_iint}, the lower bound for the diagonal part \eqref{diagpart}, and the bound on the magnitude of the off-diagonal part \eqref{offdiag} show that $\XnormN{\hpf{T}_w}^2=\Omega(\tailsum\activation(T))=\Omega(T^{-K})$. For typical $w$ with typical separation we have that $\delta_w\ge n^{-(1/2+2/d)}/\sqrt{\log n}$ and $T\sim2n^{1+2/d}\log n$, so $\XnormN{\hpf{T}_w}^2=\tilde\Omega(n^{-(1+2/d)})$.
\end{proof}

\section{Efficient algorithm}
Recall \eqref{slater} which gives the antisymmetrization with an exponential activation function as a determinant, $\AS(e^{iw\cdot \xfunc})=\frac1{\sqrt{n!}}\det((e^{iw_i\cdot\xfunc_j})_{ij})$. Let $\lpf{t}=\LP\activation{t}$ and $\hpf{T}=\HP\activation{T}$ be the low-pass and high-pass of $\activation$ at thresholds $t<T$. We approximate $\AS\activation_w$ by removing the low-passed and high-passed components: Apply \eqref{FdefA} to $\HP\activation{t}-\HP\activation{T}$ to obtain $\AS\activation_w(x)
=\alpha_w+\AS\lpf{t}_w+\AS\hpf{T}_w$, where
\begin{align}\label{cutoff}
\alpha_w(x)&=\frac1{\sqrt{2\pi n!}}\int_{[-T,T]\del[-t,t]} \isoF{\activation}(\theta)\det((e^{i\theta w_i\cdot x_j})_{ij})d\theta.
\end{align}
The integrand can be computed at a single $\theta$ in time $O(n^3)$. Apply \cref{lem:lpbound} and \cref{lem:hpbound} (with a different choice of threshold for the high-pass) to bound the truncation error.

\thmspace
\begin{lemma}[Truncation error bound]
\label{lem:truncerror}
Suppose $\isoF\activation$ has tail decay $K$. Let  $t=\max\{\frac1{2\sqrt{d}\|w\|_\infty},1\}$ and, given $\epsilon>0$, let $T=\epsilon^{-1/K}$. 
Then 
$\XnormN{\AS\activation_w-\alpha_w}^2= O(2^{-\Omega(n^{1+1/d})}+\epsilon)$ for typical $w$. 
\end{lemma}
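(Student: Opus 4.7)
The plan is to identify $\alpha_w$ with the antisymmetrization of a band-pass of $\activation$, and then split the remainder $\AS\activation_w-\alpha_w$ into a low-pass and a high-pass piece each controlled by an earlier lemma.

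First I would observe that by \eqref{slater}, $\AS(e^{i\theta w\cdot\xfunc})=\frac1{\sqrt{n!}}\det\big((e^{i\theta w_i\cdot\xfunc_j})_{ij}\big)$. Pulling $\AS$ inside the integral in \eqref{cutoff}, which is legitimate because $\isoF\activation$ is bounded on the truncated shell $[-T,T]\setminus[-t,t]$ and the representation \eqref{FdefA} is absolutely convergent there in $\Ltnd$, identifies $\alpha_w=\AS(\hpf{t}_w-\hpf{T}_w)$. Since $\activation=\lpf{t}+(\hpf{t}-\hpf{T})+\hpf{T}$, the truncation error collapses cleanly to
\[
\AS\activation_w-\alpha_w=\AS\lpf{t}_w+\AS\hpf{T}_w.
\]

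Next I would apply $\XnormN{a+b}^2\le 2\XnormN{a}^2+2\XnormN{b}^2$ and bound each piece by a prior lemma. For the low-pass part, note that the threshold $t=\max\{(2\sqrt d\|w\|_\infty)^{-1},1\}$ coincides with that of \cref{lem:lpbound} once $n$ is large, since $\|w\|_\infty=O(\sqrt{(\log n)/n})$ for typical $w$ forces $(2\sqrt d\|w\|_\infty)^{-1}=\Omega(\sqrt{n/\log n})\ge 1$; \cref{lem:lpbound} then yields $\XnormN{\AS\lpf{t}_w}^2=O(2^{-\Omega(n^{1+1/d})})$. For the high-pass part, \cref{lem:hpbound} applied with threshold $T=\epsilon^{-1/K}$ gives
\[
\XnormN{\AS\hpf{T}_w}^2\le\tfrac{4}{\sqrt{2\pi}\,\|w\|}\,\tailsum\activation(T)=O(T^{-K})=O(\epsilon),
\]
using $\|w\|=\Theta(1)$ for typical $w$ and the definition of tail decay $K$. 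Summing the two contributions delivers the stated bound.

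I do not anticipate a serious obstacle: after the clean splitting the result reduces to two plug-and-play applications of \cref{lem:lpbound} and \cref{lem:hpbound}. The only mildly delicate step is justifying the exchange of $\AS$ with the Fourier integral defining $\alpha_w$, which is routine given absolute convergence on the compact frequency shell $[-T,T]\setminus[-t,t]$ and the continuity of $\AS$ on $\Ltnd$.
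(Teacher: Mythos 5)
Your proposal is correct and follows exactly the route the paper takes: the paper obtains $\AS\activation_w=\alpha_w+\AS\lpf{t}_w+\AS\hpf{T}_w$ from \eqref{FdefA} applied to $\HP\activation{t}-\HP\activation{T}$ and then invokes \cref{lem:lpbound} for the low-pass piece and \cref{lem:hpbound} (at threshold $T=\epsilon^{-1/K}$) for the high-pass piece, which is precisely your two plug-and-play steps. The only thing you add is the explicit justification for exchanging $\AS$ with the integral and the observation that $t=(2\sqrt d\|w\|_\infty)^{-1}\ge1$ for typical $w$ at large $n$, both of which are consistent with the paper's (terser) treatment.
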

\thmspace
We approximate $\alpha_w(x)$ by a sum $\label{discsum}S_w(x)=\frac1{\sqrt{2\pi n!}}\sum_{p=\pm1,\ldots,\pm N}c_p\det((e^{i\theta_pw_i\cdot x_j})_{ij})$,
where $\theta_1,\ldots,\theta_N$ are a discretization of $[t,T]$. 

\subsection{Discretization error bound}
\label{sec:efficient}

Let $t=\Omega(1)$ and let $t=t_0<t_1<\cdots<t_N=T$ be evenly spaced and for each $p=1,\ldots,N$ write  \[I_{-p}=[-t_p,-t_{p-1}],\qquad I_p=[t_{p-1},t_p].\]
For $p=\pm1,\ldots,\pm N$, let $\theta_p\in I_p$ and define \[c_p:=\int_{I_p}\isoF\activation(\theta)d\theta.\]

For these evenly spaced $\theta_p$ and coefficients $c_p$, define $\|\partial_\theta\DttN{w}\|_\infty:=\sup_{\theta,\tilde\theta}|\frac{\partial}{\partial\tilde\theta}\DttN{w}(\theta,\tilde\theta)|$. We then have
\thmspace
\begin{restatable}{lemma}{lemmadiscerror}
\label{lem:discerror}
$\XnormN{S_w-\alpha_w}^2
\le
\frac2\pi\frac TN\|\partial_\theta\DttN{w}\|_\infty\big(\int_{t\le|\theta|\le T}|\isoF\activation(\theta)|\big)^2=O(\frac{T}N\|\partial_\theta\DttN{w}\|_\infty)$.
\end{restatable}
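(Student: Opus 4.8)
The plan is to express both $S_w$ and $\alpha_w$ through the overlap-kernel formula \eqref{Eiint}, so that $\XnormN{S_w - \alpha_w}^2$ becomes a double integral (or a mixed integral–sum) against $\DttN{w}$, and then to estimate the discretization error piece by piece on the grid. Concretely, recall $\alpha_w(x) = \tfrac1{\sqrt{2\pi n!}}\int_{t\le|\theta|\le T}\isoF\activation(\theta)\det((e^{i\theta w_i\cdot x_j})_{ij})d\theta = \tfrac1{\sqrt{2\pi}}\int_{t\le|\theta|\le T}\isoF\activation(\theta)\,\AS(e^{i\theta w\cdot\xfunc})\,d\theta$, while $S_w = \tfrac1{\sqrt{2\pi}}\sum_{p}c_p\,\AS(e^{i\theta_p w\cdot\xfunc})$ with $c_p = \int_{I_p}\isoF\activation(\theta)d\theta$. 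Writing $\mu = \sum_p c_p\,\delta_{\theta_p}$ for the discretized (signed, complex) measure and $\nu(\theta) = \isoF\activation(\theta)\mathbbm 1_{t\le|\theta|\le T}d\theta$ for the continuous one, the difference $S_w - \alpha_w = \tfrac1{\sqrt{2\pi}}\int \AS(e^{i\theta w\cdot\xfunc})\,d(\mu-\nu)(\theta)$, and expanding the squared $\Xdistr_n$-norm using $\bracketN{\AS e^{i\theta w\cdot\xfunc}}{\AS e^{i\tilde\theta w\cdot\xfunc}} = \DttN{w}(\theta,\tilde\theta)$ gives
\[
\XnormN{S_w-\alpha_w}^2 = \frac1{2\pi}\iint \overline{\DttN{w}(\theta,\tilde\theta)}\;d\overline{(\mu-\nu)}(\theta)\,d(\mu-\nu)(\tilde\theta).
\]

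The key observation is that on each interval $I_p$ the measure $\mu - \nu$ has total mass $c_p - \int_{I_p}\isoF\activation = 0$, i.e.\ $\mu-\nu$ integrates to zero against constants on $I_p$. So for a fixed $\tilde\theta$, the inner integral $\int (\overline{\DttN{w}(\theta,\tilde\theta)})\,d(\mu-\nu)(\theta)$ can be estimated interval-by-interval: on $I_p$ it equals $\int_{I_p}(\overline{\DttN{w}(\theta,\tilde\theta)} - \overline{\DttN{w}(\theta_p,\tilde\theta)})\,d(\mu-\nu)(\theta)$, whose magnitude is at most $\|\partial_\theta\DttN{w}\|_\infty\cdot|I_p|\cdot(|c_p| + \int_{I_p}|\isoF\activation|) \le 2\|\partial_\theta\DttN{w}\|_\infty\,\tfrac{T}{N}\int_{I_p}|\isoF\activation|$, since $|\theta-\theta_p|\le |I_p| = T/N$ (using $t=\Omega(1)$ so $T-t = \Theta(T)$) and $|c_p|\le\int_{I_p}|\isoF\activation|$. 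Summing over $p$ bounds the inner integral by $2\|\partial_\theta\DttN{w}\|_\infty\tfrac{T}{N}\int_{t\le|\theta|\le T}|\isoF\activation(\theta)|d\theta$ uniformly in $\tilde\theta$. Then the outer integration against $|d(\mu-\nu)(\tilde\theta)|$ contributes another factor of total variation $\le 2\int_{t\le|\tilde\theta|\le T}|\isoF\activation|$; combining and dividing by $2\pi$ yields the stated bound (absorbing the harmless numerical constant into the $\tfrac2\pi$ form, or one simply keeps track of constants to land exactly on $\tfrac2\pi\tfrac TN\|\partial_\theta\DttN{w}\|_\infty(\int|\isoF\activation|)^2$). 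The final $O(\cdot)$ form follows because $\int_{t\le|\theta|\le T}|\isoF\activation(\theta)|d\theta = O(1)$ for a rough activation function with $\int_{|\theta|>1}|\isoF\activation| < \infty$ and $t = \Omega(1)$.

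The main obstacle I anticipate is bookkeeping rather than conceptual: one must be careful that the zero-mass-per-interval cancellation is applied correctly (it is exactly this that converts a naive $O(T\cdot\text{stuff})$ bound into the $O(T/N)$ bound), and one must apply it in only one variable at a time — estimating $\DttN{w}$ by a first-order Taylor step in $\theta$ with $\tilde\theta$ frozen, using $\|\partial_\theta\DttN{w}\|_\infty := \sup_{\theta,\tilde\theta}|\partial_{\tilde\theta}\DttN{w}(\theta,\tilde\theta)|$ and the symmetry $\DttN{w}(\theta,\tilde\theta) = \DttN{w}(\tilde\theta,\theta)$ to handle whichever variable is discretized. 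A minor technical point is justifying that the mixed integral–sum expansion of $\XnormN{\cdot}^2$ is valid, which follows from \cref{lem:Eiint} together with finiteness of $\int_{|\theta|>t}|\isoF\activation|$ and the uniform bound $|\DttN{w}|\le 1$ from \cref{sec:properties}; I would state this as a one-line remark rather than belabor it.
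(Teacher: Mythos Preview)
Your proposal is correct and follows the same overall strategy as the paper: express $\XnormN{S_w-\alpha_w}^2$ as a double integral of $\DttN{w}$ against the signed measure $\mu-\nu$ and exploit that $\mu-\nu$ has zero mass on each interval $I_p$. The paper organizes the estimate slightly differently---working square-by-square on $I_p\times I_q$ and bounding the resulting four-term expression $\DttN{w}(\theta_p,\theta_q)+\DttN{w}(\theta,\tilde\theta)-\DttN{w}(\theta_p,\tilde\theta)-\DttN{w}(\theta,\theta_q)$ by $2\operatorname{diam}\DttN{w}(I_p,I_q)$---whereas you apply the zero-mass cancellation in only one variable and then use total variation in the other; both routes are valid and land on the identical constant $\tfrac{2}{\pi}$.
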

\begin{proof}
$S_w$
is exactly the antisymmetrization $\AS s_w$ where
\[s(t)=\frac1{\sqrt{2\pi}}\sum_{\pm q=1}^Nc_qe^{i\theta_qt}.\]
Passing from the truncated activation to its discretization incurs the error
\begin{equation}\XnormN{S_w-\tilde\alpha_w}^2=\XnormN{\AS(s-\activation\sub{trc})_w}^2=\frac1{2\pi}\iint_{\RR^2}\DttN{w}(\theta,\tilde\theta)d\bar\mu(\theta)d\mu(\tilde\theta),\label{signediint}\end{equation}
where $\activation\sub{trc}=\activation-\LP{\activation}{1}-\HP{\activation}{T}$, and $\mu$ is the complex-valued measure  
\[\mu(\theta)=\sum_qc_q\delta(\theta-\theta_q)-(\isoF\activation(\theta)d\theta).\]
For each square $S_{pq}=I_p\times I_q$,  
\begin{align}\label{boundonsquare}
\begin{split}
&\iint_{S_{pq}}\DttN{w}(\theta,\tilde\theta)d\bar\mu(\theta)d\mu(\tilde\theta)
\\[2em]=&
\overline{c_p}c_q\DttN{w}(\theta_p,\theta_q)+\iint_{S_{pq}}\DttN{w}(\theta,\tilde\theta)\overline{\isoF\activation(\theta)}\isoF\activation(\tilde\theta)d\theta d\tilde\theta
\\&-\overline{c_p}\int_{I_q}\DttN{w}(\theta_p,\tilde\theta)\isoF\activation(\tilde\theta)d\theta
-c_q\int_{I_p}\DttN{w}(\theta,\theta_q)\overline{\isoF\activation(\theta)}d\theta
\\[2em]=&
\iint_{S_{pq}}\Big[\DttN{w}(\theta_p,\theta_q)+\DttN{w}(\theta,\tilde\theta)-\DttN{w}(\theta_p,\tilde\theta)-\DttN{w}(\theta,\theta_q)\Big]\:\overline{\isoF\activation(\theta)}\isoF\activation(\tilde\theta)d\theta d\tilde\theta
\\[2em]\le&
\iint_{S_{pq}}\Big[2\max \DttN{w}(I_p,I_q)-2\min \DttN{w}(I_p,I_q)\Big]\:|\isoF\activation(\theta)\isoF\activation(\tilde\theta)|d\theta d\tilde\theta
\\[2em]=&2(\opn{diam} \DttN{w}(I_p,I_q))\int_{I_p}|\isoF\activation(\theta)|d\theta\int_{I_q}|\isoF\activation(\tilde\theta)|d\tilde\theta,
\end{split}
\end{align}
where $\opn{diam}(\DttN{w}(I_p,I_q))$ is the diameter of the set $D(I_p,I_q)=\{\DttN{w}(\theta,\tilde\theta)\:|\:(\theta,\tilde\theta)\in I_p\times I_q\}$.
Let $\|\partial \DttN{w}\|_\infty=\sup_{\theta,\tilde\theta}\max\{|\frac{\partial}{\partial\theta}\DttN{w}(\theta,\tilde\theta)|,|\frac{\partial}{\partial\tilde\theta}\DttN{w}(\theta,\tilde\theta)|\}$ and let $t_0=1,\ldots,t_N=T$ be evenly spaced so that $|I_p|\le T/N$. 
Then by \eqref{boundonsquare},
\begin{align}
\iint_{S_{pq}}\DttN{w}(\theta,\tilde\theta)d\mu(\theta)d\mu(\tilde\theta)
\le&
\frac{4T}N\|\partial \DttN{w}\|_\infty\int_{I_p}|\isoF\activation(\theta)|d\theta\int_{I_q}|\isoF\activation(\tilde\theta)|d\tilde\theta.
\end{align}
Summing over $p,q$ and applying \eqref{signediint} we get
\[\XnormN{S_w-\tilde\alpha_w}^2
\le
\frac2\pi\frac TN\|\partial \DttN{w}\|_\infty\Big(\int_{t\le|\theta|\le T}|\isoF\activation(\theta)|\Big)^2.\]
\end{proof}

\thmspace
\begin{lemma}\label{lem:gradbound}
Let $B^{(v,w)}_{ij}=\nnf{\Xdistr}(v_i-w_j)$ as in \cref{def:Ddd}. Then, $|\frac\partial{\partial B_{ij}}\det B|\le 1$ at any $B=B^{(v,w)}$.
\end{lemma}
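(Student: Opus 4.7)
The plan is to apply Jacobi's formula and then bound the resulting minor via a Hadamard-type inequality using the structure of $B^{(v,w)}$ as a cross-overlap matrix of unit vectors.

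First I would recall Jacobi's formula: for any $n \times n$ matrix $B$,
\[
\frac{\partial \det B}{\partial B_{ij}} = (-1)^{i+j} M_{ij}(B),
\]
where $M_{ij}(B)$ is the $(n-1) \times (n-1)$ minor obtained by deleting row $i$ and column $j$. Hence the claim reduces to showing $|M_{ij}(B)| \le 1$ for $B = B^{(v,w)}$.

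Next I would observe that $M_{ij}(B)$ is itself a determinant of a matrix of the same form. Indeed, for $B_{kl} = \nnf{\Xdistr}(v_k - w_l) = \bracket{e^{iv_k\cdot\xfunc}}{e^{iw_l\cdot\xfunc}}$, the minor obtained by deleting row $i$ and column $j$ is the overlap matrix of the two families $\{e^{iv_k\cdot\xfunc} : k \ne i\}$ and $\{e^{iw_l\cdot\xfunc} : l \ne j\}$ in the Hilbert space $\Lt(\RR^d;\Xdistr)$. Crucially, each of these functions has unit $\Xdistr$-norm since $|e^{iv_k\cdot x}|^2 = 1$ pointwise.

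The main step is then the Hadamard-type bound for a cross-Gram matrix of unit vectors. For any two families $\phi_1,\dots,\phi_{n-1}$ and $\psi_1,\dots,\psi_{n-1}$ of unit vectors in a Hilbert space $H$, one has
\[
|\det(\bracket{\phi_k}{\psi_l})_{kl}| = |\bracket{\phi_1\wedge\cdots\wedge\phi_{n-1}}{\psi_1\wedge\cdots\wedge\psi_{n-1}}| \le 1,
\]
since by Cauchy--Schwarz in $\Lambda^{n-1}H$ the right-hand side is at most $\|\phi_1\wedge\cdots\wedge\phi_{n-1}\|\,\|\psi_1\wedge\cdots\wedge\psi_{n-1}\|$, and each of these norms equals the square root of the determinant of a Gram matrix of unit vectors, which is bounded by $1$ by the standard Hadamard inequality for positive semidefinite matrices (as used already in \cref{sec:properties} to conclude $\Ddd(w,w)\le1$). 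Applying this to $\phi_k = e^{iv_k\cdot\xfunc}$ ($k\ne i$) and $\psi_l = e^{iw_l\cdot\xfunc}$ ($l\ne j$) gives $|M_{ij}(B)| \le 1$, and the lemma follows.

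There is no substantive obstacle here: the proof is a direct combination of Jacobi's formula with a bound already implicit in \cref{sec:properties}. The only minor point to get right is that the relevant Hadamard-type bound applies to cross-overlap matrices of two different unit families, not just to square Gram matrices of a single family; this is handled cleanly by the wedge-product/Cauchy--Schwarz argument above.
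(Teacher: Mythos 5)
Your proposal is correct and follows essentially the same route as the paper: Jacobi's formula reduces the claim to bounding the $(i,j)$ minor, which is itself a cross-overlap determinant $\det B^{(\tilde v,\tilde w)}$ of unit-norm complex exponentials, bounded by $1$ via Cauchy--Schwarz together with the Hadamard bound on the diagonal Gram determinants. The paper simply cites this bound as already established for $\Ddd$ in the properties section rather than re-deriving it through the wedge-product argument, but the underlying reasoning is identical.
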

\thmspace
\begin{proof}
$\frac\partial{\partial B_{ij}}\det B=(-1)^{i+j}m^B_{i,j}$ where $m^B_{i,j}$ is the $i,j$th minor of $B$.  
But $m^B_{ij}=\det B^{(\tilde v,\tilde w)}$ where  $\tilde v=(v_{i'})_{i'\neq i}$ and $\tilde w=(w_{j'})_{j'\neq j}$, and $|\det B^{(\tilde v,\tilde w)}|=|\DddN(\tilde v,\tilde w)|\le 1$ by the properties mentioned in \cref{sec:D_gen_prop}. 
\end{proof}
\begin{corollary}\label{cor:Dgradbound}
$|\frac\partial{\partial\theta}\DttN{w}(\theta,\tilde\theta)|\le n^{3/2}\|w\|/\sqrt e$.
\end{corollary}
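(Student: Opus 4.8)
The goal is to bound $|\frac{\partial}{\partial\theta}\DttN{w}(\theta,\tilde\theta)|$ by $n^{3/2}\|w\|/\sqrt{e}$. The plan is to differentiate through the determinant using the chain rule, combine the preceding Lemma~\ref{lem:gradbound} with a bound on the entrywise derivatives $\frac{\partial}{\partial\theta}B^{(\theta w,\tilde\theta w)}_{ij}$, and then sum the $n^2$ contributions while exploiting the Gaussian factor in $\nnf{\Xdistr}$.

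First I would write $\DttN{w}(\theta,\tilde\theta)=\det B$ with $B_{ij}=\nnF{\Xdistr}{d}(\theta w_i-\tilde\theta w_j)=e^{-\|\theta w_i-\tilde\theta w_j\|^2/2}$ for the Gaussian envelope. By the chain rule,
\begin{equation}
\frac{\partial}{\partial\theta}\det B=\sum_{i,j}\frac{\partial\det B}{\partial B_{ij}}\cdot\frac{\partial B_{ij}}{\partial\theta}.
\end{equation}
Lemma~\ref{lem:gradbound} gives $|\frac{\partial\det B}{\partial B_{ij}}|\le1$, so $|\frac{\partial}{\partial\theta}\DttN{w}(\theta,\tilde\theta)|\le\sum_{i,j}|\frac{\partial B_{ij}}{\partial\theta}|$. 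Now $\frac{\partial}{\partial\theta}e^{-\|\theta w_i-\tilde\theta w_j\|^2/2}=-(w_i\cdot(\theta w_i-\tilde\theta w_j))\,e^{-\|\theta w_i-\tilde\theta w_j\|^2/2}$, so by Cauchy--Schwarz $|\frac{\partial B_{ij}}{\partial\theta}|\le\|w_i\|\cdot\|\theta w_i-\tilde\theta w_j\|\,e^{-\|\theta w_i-\tilde\theta w_j\|^2/2}$. The elementary inequality $s\,e^{-s^2/2}\le1/\sqrt{e}$ for $s\ge0$ (maximized at $s=1$) then gives $|\frac{\partial B_{ij}}{\partial\theta}|\le\|w_i\|/\sqrt{e}$.

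Summing over $i,j$ yields $|\frac{\partial}{\partial\theta}\DttN{w}(\theta,\tilde\theta)|\le\frac{1}{\sqrt{e}}\sum_{i,j}\|w_i\|=\frac{n}{\sqrt{e}}\sum_{i=1}^n\|w_i\|$. Finally, by Cauchy--Schwarz on the $n$-vector $(\|w_i\|)_i$, $\sum_{i=1}^n\|w_i\|\le\sqrt{n}\,\big(\sum_i\|w_i\|^2\big)^{1/2}=\sqrt{n}\,\|w\|$, which gives the claimed bound $n^{3/2}\|w\|/\sqrt{e}$.

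I do not expect any serious obstacle here; the only mildly delicate points are getting the right scalar inequality $s\,e^{-s^2/2}\le1/\sqrt e$ and being careful that the determinant-gradient bound from Lemma~\ref{lem:gradbound} applies at the matrix $B^{(\theta w,\tilde\theta w)}$, which it does since that matrix is of the form $B^{(v,w')}$ with $v=\theta w$, $w'=\tilde\theta w$. A symmetric argument (or symmetry of $\DttN{w}$ in its two arguments) handles $\frac{\partial}{\partial\tilde\theta}$, so the same bound holds for $\|\partial_\theta\DttN{w}\|_\infty$ as used in Lemma~\ref{lem:discerror}.
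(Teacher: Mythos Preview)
Your proof is correct and essentially identical to the paper's: both apply the chain rule, invoke Lemma~\ref{lem:gradbound} to bound each $|\partial\det B/\partial B_{ij}|\le1$, bound $|\partial B_{ij}/\partial\theta|\le\|w_i\|/\sqrt{e}$, and finish with Cauchy--Schwarz on $\sum_i\|w_i\|$. The only cosmetic difference is that the paper phrases the entrywise bound as $\|w_i\|\cdot\|\nabla\nnF{\cN}{d}\|_\infty$ and then observes $\|\nabla\nnF{\cN}{d}\|_\infty=1/\sqrt{e}$, whereas you compute the Gaussian derivative directly and use the scalar inequality $s\,e^{-s^2/2}\le1/\sqrt{e}$; these are the same calculation.
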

\begin{proof}
By the chain rule,
\[\frac\partial{\partial\theta}\DttN{w}(\theta,\tilde\theta)=
\sum_{ij}\frac{\partial B_{ij}(\theta w,\tilde\theta w)}{\partial\theta}\frac{\partial\det B}{\partial B_{ij}},\]
where
\[\Big|\frac{\partial B_{ij}(\theta w,\tilde\theta w)}{\partial\theta}\Big|=|w_i\cdot(\nabla\nnF{\Xdistr}{d})(\theta w_i-\tilde\theta w_j)|\le \|w_i\|\cdot\|\nabla\nnF{\Xdistr}{d}\|_\infty\]
\cref{lem:gradbound} and the triangle inequality then imply that
\[\Big|\frac\partial{\partial\theta}\DttN{w}(\theta,\tilde\theta)\Big|\le n\|\nabla\nnF{\Xdistr}{d}\|_\infty\sum_i\|w_i\|\le n^{3/2}\|\nabla\nnF{\Xdistr}{d}\|_\infty\|w\|,\]
where the last inequality is by Cauchy-Schwarz. For $\Xdistr=\cN(0,I_d)$ we have $\nnF{\cN}{d}(w)=e^{-\|w\|^2/2}$ and $\|\nabla\nnF{\cN}{d}\|_\infty=1/\sqrt e$.
\end{proof}

In the presence of bias terms $b_k$ in \eqref{NN} we can efficiently compute an approximation $S_{w,b}$ to the function $\AS\activation(w^{(k)}\cdot\xfunc+b_k)$ with the same error bound. Indeed, the bias term results in a shift of the activation function which corresponds to multiplying the Fourier transform by an oscillating phase. Since the upper bounds do not depend on the phase of the Fourier transform, the same  truncation error bound applies. \cref{thm:smooththm} holds for arbitrary bias terms for the same reason.
\begin{proof}[Proof of \cref{thm:efficient_multiplicative_error}]
Because $\activation$ is rough we have $\EE[\XnormN{\AS f}|W]=n^{-O(1)}$ with probability $1-o(1)$. Given target relative error $\epsilon=n^{-O(1)}$ it suffices to achieve absolute error $\epsilon'=\epsilon\cdot\EE[\XnormN{\AS f}|W]=n^{-O(1)}$. $\XnormN{S_w-\AS\activation_w}^2=O(\epsilon'+n^{O(1)}\|\partial_\theta\DttN{w}\|_\infty/N)$ by \cref{lem:truncerror} and \cref{lem:discerror}. \cref{cor:Dgradbound} shows that $\|\partial_\theta\DttN{w}\|_\infty=n^{O(1)}$, so $\XnormN{S_w-\AS\activation_w}^2=O(\epsilon'+n^{O(1)}/N)$. It then suffices to pick $N=n^{O(1)}$. Let $S_{W,a,b}=\sum_{k=1}^m a_kS_{w^{(k)},b_k}$ and apply \cref{lem:singleneuron_E} to the difference $f_{W,a,b}-S_{W,a,b}$ to extend the termwise error bound to the sum \eqref{NN}. The computational cost of evaluating $S_w(x)$ is $O(n^3N)$.
\end{proof}


\begin{revs}
    
\subsection{Numerical demonstration of Theorem \ref{thm:efficient_multiplicative_error}}

We numerically demonstrate Theorem \ref{thm:efficient_multiplicative_error} by approximating the anti-symmetrization of a single neuron with the ReLU activation function (Figures \ref{fig:err1} and \ref{fig:err2}). We compare the approximation $S_w$ given by Theorem \ref{thm:efficient_multiplicative_error} with the explicit anti-symmetrization $\mathcal A\tau_w$. We use 100 sample points $x\sim\mathcal N(0,I)$ to estimate the norm of the anti-symmetrized function and the distance between the true anti-symmetrization and its efficient approximation. In our implementation we used Gauss-Legendre quadrature to estimate the integral in \eqref{cutoff}.
  
\begin{figure}[h]
    \centering
    \begin{minipage}[c]{.9\textwidth}
    \includegraphics[width=.9\textwidth]{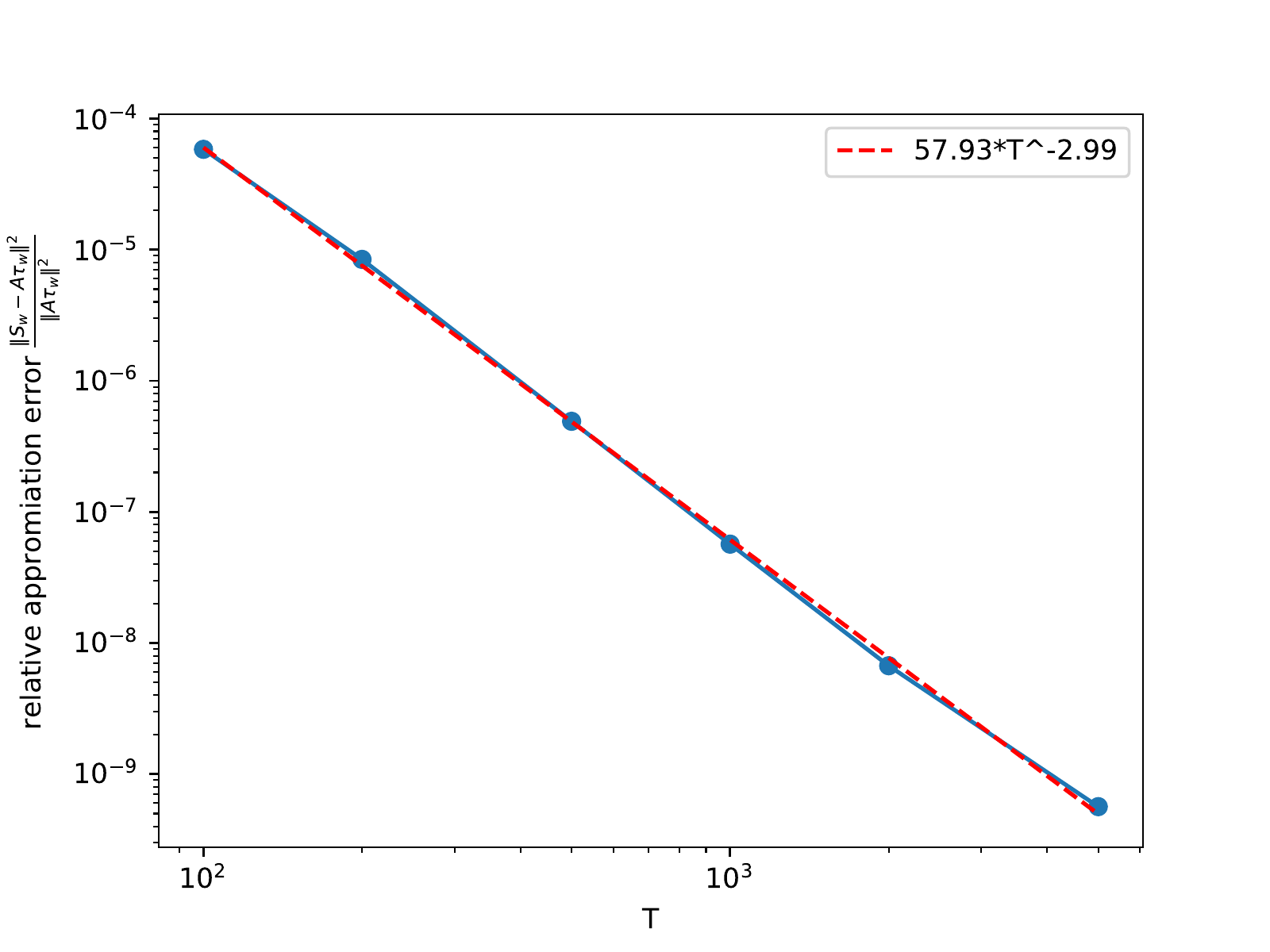}
    \end{minipage}
    \caption{
    \rev{
    Approximation error in Theorem \ref{thm:efficient_multiplicative_error} for a single ReLU neuron as a function of the ultraviolet truncation $T$. Values plotted are $T=100,200,500,1000,2000,5000$. Here, $n=8$ and $d=3$. Here, the number of quadrature points $N=10^4$ and the infra-red cutoff $t=0.1$ is kept constant.}
   }\label{fig:err1}
\end{figure}
\begin{figure}[h]
    \centering
    \begin{minipage}[c]{.9\textwidth}
    \includegraphics[width=.9\textwidth]{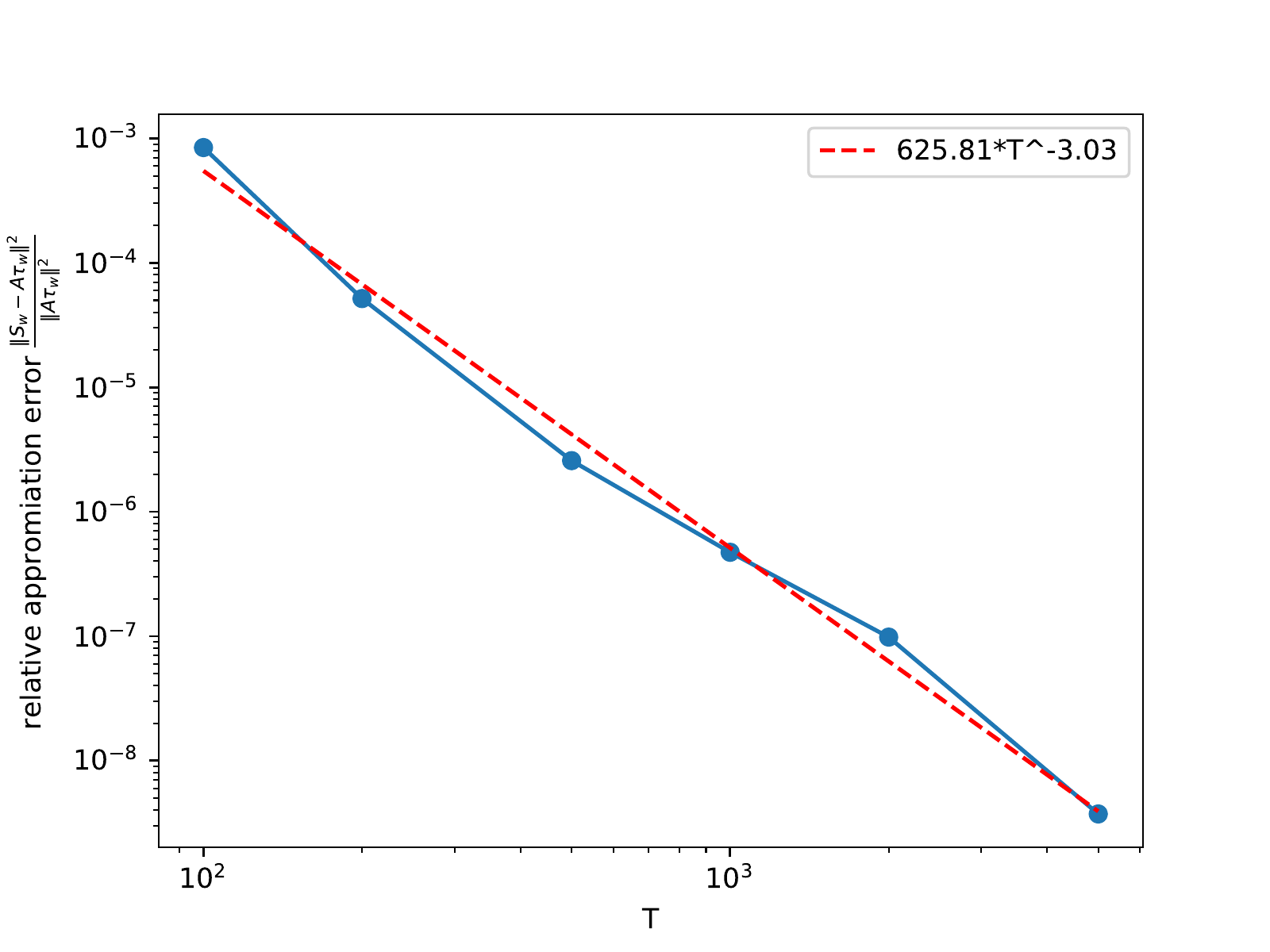}
    \end{minipage}
    \caption{
    \rev{
    The approximation error of Theorem \ref{thm:efficient_multiplicative_error} as in Figure \ref{fig:err1}, except the number of quadrature points $N$ is chosen to grow with the ultraviolet cutoff $T$ (specifically, we take $N=T$).
   }}
   \label{fig:err2}
\end{figure}

The cubic convergence with $T$ observed in Figures \ref{fig:err1} and \ref{fig:err2} is in accordance with our theory, because the anti-symmetrization operator $\mathcal A$ is approximately an isometry for highly oscillating functions, i.e., in the within the ultraviolet part. By Plancherel's equality we can approximate the squared error introduced by the tail truncation as the squared $L^2$ norm of the truncated tail, which is of order $\int_{|\theta|>T}|\hat\tau(\theta)|^2d\theta\propto\int_{|\theta|>T}|\theta|^{-4}\propto T^{-3}$ when $\tau$ is the ReLU activation.   
\end{revs}

\section{Empirical generalization to multi-layer networks}

It is natural to ask whether the advantage of rough activation functions against cancellations remains as the depth of the neural network grows. We consider networks of depth $L=3,4,5$ and compare $\XnormN{\AS f}^2$ between two choices of activation functions: The smooth $\tanh$ and the rough \emph{normalized double ReLU} (DReLU) $\activation_\kappa(y)=\kappa\max\{-1,\min\{1,y\}\}$ where $\kappa\approx0.875$ is chosen such that $\EE[|\activation_\kappa(Z)|^2]=\EE[|\tanh(Z)|^2]$ for standard-Gaussian $Z\sim\cN(0,1)$. 
\begin{figure}[H]
    \centering
    \includegraphics[scale=.8]{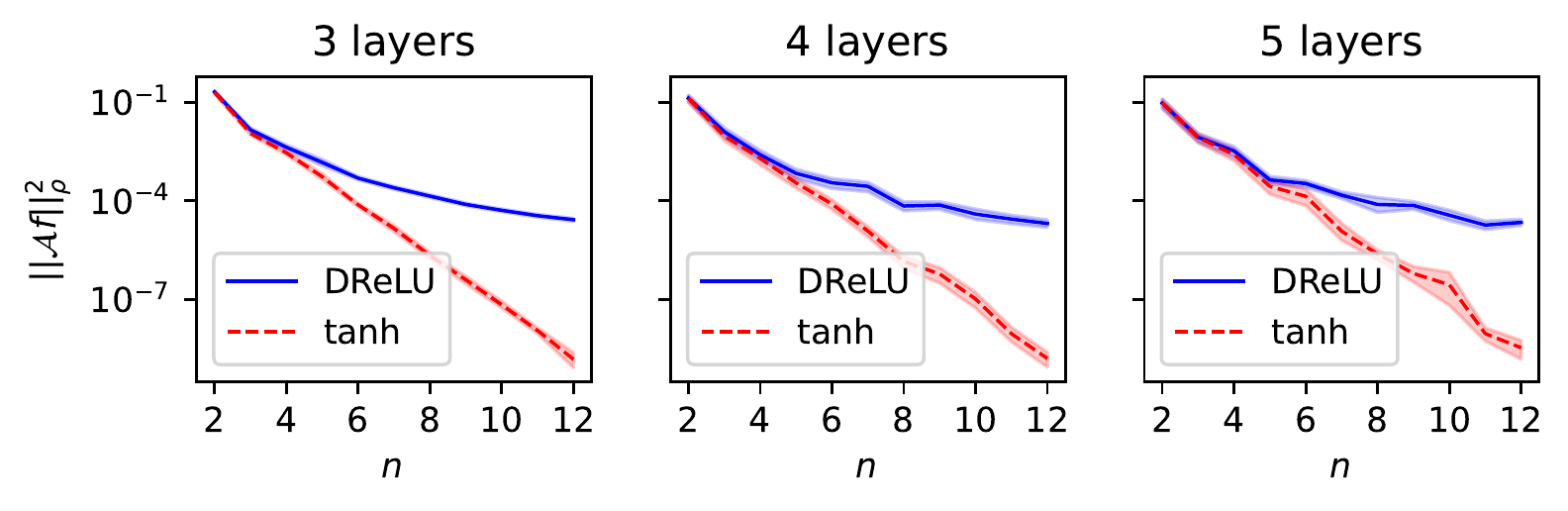}\hphantom{aaaaa}
    \vspace{-1em}\caption{Log-scale comparison between $\EE[\XnormN{\AS f}^2]$ for smooth and rough activation functions (tanh vs normalized DReLU) for antisymmetrized neural networks of different depths. Shaded areas show $90\%$ confidence regions.}\label{fig:depth}
\end{figure}

We take $d=3$, let all layers have width $m=3n$, and instantiate NNs from the Xavier initialization (independent Gaussian weights with variance $1/m$ where $m$ is the number of neurons in the preceding layer). \cref{fig:depth} shows that the rough activation function maintains its advantage for networks with more layers. 
\begin{figure}[H]
    \centering
    \includegraphics[scale=.65]{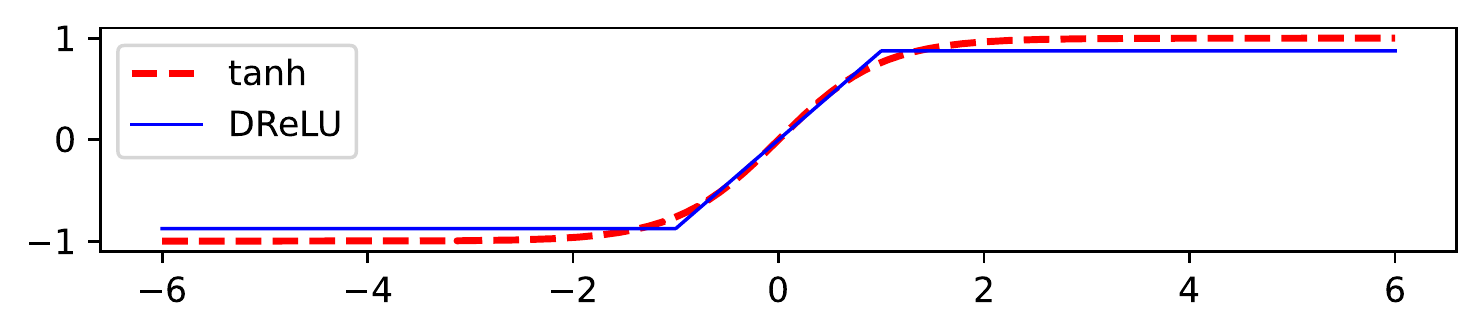}
    \caption{Tanh and the normalized DReLU. }
\end{figure}

\section{Conclusion}

Using the Fourier representation of the activation function, we observe that a rough activation function is necessary to tame the near-exact cancellations when antisymmetrizing two-layer NNs initialized with the standard initializations. Equivalently, an architecture based on a smooth activation function would require an initialization of the weights in the first layer distinct from the standard Xavier/He initializations to avoid the sign problem in the antisymmetric setting.
The Fourier perspective also provides a polynomial-time algorithm for approximately evaluating  explicitly antisymmetrized two-layer NNs. 
It may be possible that explicitly antisymmetrized two-layer NNs provides a path towards universal approximation of a class of antisymmetric functions without suffering from curse of dimensionality.
Our work also raises intriguing open questions about how the cancellations and efficient algorithms generalize to antisymmetrized multi-layer NNs \rev{as well as to the training regime.}

\section*{Acknowledgment}
This work was partially supported by the NSF Quantum Leap Challenge Institute (QLCI) program through grant number OMA-2016245 (N.A.), by the U.S. Department of Energy, Office of Science, Office of Advanced Scientific Computing Research and Office of Basic Energy Sciences, Scientific Discovery through Advanced Computing (SciDAC) program, and by the  Applied Mathematics Program of the US Department of Energy (DOE) Office of Advanced Scientific Computing Research under contract number DE-AC02-05CH1123 (L.L.). L.L. is a Simons Investigator. The authors thank Jeffmin Lin and Lexing Ying for helpful discussions.



\bibliographystyle{unsrt}
\bibliography{refs}



\appendix

\section{Lower-bounding the typical separation}
\label{sec:typ}


\thmspace
\begin{lemma}\label{lem:deltalowerbound}
For $w\sim\cN(0,\frac2{nd}I_{nd})$ sampled from the He initialization,
\begin{align}
\prob(\delta_w<\delta)
\le2\binom{n}{2}\Big(\frac{2nd\delta^2}{\cxh\pi}\Big)^{d/2}|B_d|,
\end{align}
where $|B_d|$ is the volume of the unit ball in $\RR^d$. For constant $d$ this is $O(n^{2+d/2}\delta^d)$.
\end{lemma}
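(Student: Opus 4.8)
The plan is to reduce to a single pair of weights via a union bound and then invoke a standard Gaussian anti-concentration estimate. First I would rewrite the bad event: by \cref{def:delta}, $\delta_w<\delta$ holds precisely when $\min_{1\le i<j\le n}\|w_i\pm w_j\|<2\delta$, i.e., when there exist $i<j$ and a sign $s\in\{+1,-1\}$ with $\|w_i-sw_j\|<2\delta$. A union bound over the $2\binom{n}{2}$ such (pair, sign) choices gives $\prob(\delta_w<\delta)\le 2\binom{n}{2}\,\prob(\|w_1-sw_2\|<2\delta)$, where by exchangeability of the $w_i$ and invariance under sign flips the single-pair probability is the same for every choice.

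Next I would identify the relevant low-dimensional reduction. Since the $2d$ coordinates of $(w_1,w_2)$ are i.i.d.\ $\cN(0,\cxh/(nd))$, the vector $w_1-sw_2$ is a centered Gaussian in $\RR^d$ with independent coordinates of variance $\sigma^2:=2\cxh/(nd)$, so $\prob(\|w_1-sw_2\|<2\delta)=\prob(\|Z\|<2\delta/\sigma)$ for $Z\sim\cN(0,I_d)$. Then comes the anti-concentration step: the density of $\cN(0,I_d)$ is pointwise bounded by $(2\pi)^{-d/2}$, hence the probability that $Z$ falls in the ball of radius $r=2\delta/\sigma$ is at most $(2\pi)^{-d/2}|B_d|\,r^d$. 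Substituting $r^d=(2\delta)^d\sigma^{-d}=(2\delta)^d\big(nd/(2\cxh)\big)^{d/2}$ and simplifying rewrites this as a constant multiple of $\big(nd\delta^2/(\cxh\pi)\big)^{d/2}|B_d|$; multiplying back by $2\binom{n}{2}$ yields the claimed inequality (in fact with a slightly smaller constant than $(2nd\delta^2/(\cxh\pi))^{d/2}$, so there is room to spare). The final $O(n^{2+d/2}\delta^d)$ bound for constant $d$ then follows from $2\binom{n}{2}=O(n^2)$ and $(nd\delta^2)^{d/2}=O(n^{d/2}\delta^d)$.

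I do not expect a genuine obstacle here: the argument is the textbook ``minimum gap among i.i.d.\ Gaussian points'' estimate, and the only content is the union bound combined with the uniform density bound for the Gaussian. The one place to be slightly careful is bookkeeping the constants --- in particular the factor $2$ from the two sign choices $w_i\pm w_j$ built into $\delta_w$, and the variance doubling $\var(w_1-sw_2)=2\var(w_1)$ --- and checking that the resulting constant is dominated by $(2nd\delta^2/(\cxh\pi))^{d/2}$.
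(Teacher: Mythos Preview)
Your proposal is correct and follows essentially the same route as the paper: a union bound over the $2\binom{n}{2}$ (pair, sign) events followed by a sup-density bound on the Gaussian to control the single-pair probability. The only cosmetic difference is that the paper conditions on $w_j=w'$ and bounds by $(2\delta)^d|B_d|\|\Wdistr\|_\infty$ for the single-particle density $\Wdistr$, whereas you compute the law of $w_1-sw_2$ directly; your version picks up the variance-doubling and hence the slightly sharper constant you noted.
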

\thmspace

\begin{proof}
If $w_i$ are sampled independently from a distribution $\Wdistr$ on $\RR^d$, then  
\begin{align}
\begin{split}
\prob(\delta_w<\delta)
&\le\sum_{i<j}\prob(\frac12\|w_i-w_j\|<\delta)+\prob(\frac12\|w_i+w_j\|<\delta)
\\&\le\sum_{i<j}\max_{w'\in\RR^d}\prob(\|w_i-w_j\|<2\delta\:|\:w_j=w')+\max_{w'\in\RR^d}\prob(\|w_i+w_j\|<2\delta\:|\:w_j=w')
\\&\le2\binom{n}{2}(2\delta)^d|B_d|\|\Wdistr\|_\infty,
\end{split}
\end{align}
where $\|\Wdistr\|_\infty$ is the supremum of the density.
For the He initialization $w_i\sim\cN(0,\frac2{nd}I_{d})$ we have $\|\Wdistr\|_\infty=(2\pi\cdot\frac{\cxh}{nd})^{-d/2}=(\frac{2\cxh\pi}{nd})^{-d/2}$.
\end{proof}

For fixed $d$, $v=w^{(k)}$ satisfies that $\prob(\delta_v<\delta)=O(n^{2+d/2}\delta^d)$. Then with probability $1-o(1)$, $v$ has \emph{typical separation} as defined in \cref{def:typsep}.

\begin{proof}[Proof of \cref{lem:highprob}]
Recall that by definition, $W$ is typical if  each $w^{(k)}$ is typical and at least half the $w^{(k)}$ have typical separation.
The distribution of
$\frac{nd}{\cxh}\|w^{(k)}\|^2\sim\chi^2(nd)$ implies that for each $k=1,\ldots,m$, $\prob(\|w^{(k)}\|^2<\cxh/2)\le (\sqrt e/2)^{nd/2}$ and $\prob(\|w^{(k)}\|^2>2\cxh)\le(2/e)^{nd/2}$ since $\EE[\|w^{(k)}\|^2]=\cxh$. By a union bound we have $\cxh/2\le\|w^{(k)}\|^2\le2\cxh$ for all $k=1,\ldots,m$ with probability $1-2m2^{-\Omega(nd)}$.
Furthermore,
$w^{(k)}_{ij}\sim\cN(0,\frac\cxh{nd})$ and a union bound imply that 
\[
\prob(|w^{(k)}_{ij}|\ge t\text{ for some }i,j,k)\le2mnde^{-\frac{ndt^2}{2\cxh}}=O\left(e^{-\frac{ndt^2}{2\cxh}+(C'+1)\log n+\log d}\right).
\] Given any $C''>0$ we may let $t=2\sqrt{C'+C''+1}\sqrt{\frac{\log(nd)}{nd}}$ and obtain that $\|w^{(k)}\|_\infty\le t$ for all $k$ with probability at least $1-n^{-C''}$.  This shows that each $w$ is typical with probability $1-1/n$ for appropriate $C$ in \cref{def:typical}.

If $\prob(\delta_{w^{(k)}}\ge\delta)\to1$ for each fixed $k$ then the probability that $\delta(w^{(k)}\ge\delta)$ for at least half the $k=1,\ldots,m$ also converges to $1$. Eq.~\eqref{whp} follows from \cref{lem:singleneuron_E}.
\end{proof}

\section{Bound on complex-exponential overlap kernel for the Gaussian envelope}
\label{sec:det_upper_bound}

In this section we recall the proof of the bound on a determinant of exponentials given in \cite{abrahamsen2023antisymmetric}.
\thmspace
\begin{restatable}{lemma}{lemmalowranksum}
    \label{lem:rank_one_terms}
Let $v=(v_1,\ldots,v_n)^T\in\RR^{n\times d}$ and $w=(w_1,\ldots,w_n)^T\in\RR^{n\times d}$. Then $(e^{v_i\cdot w_j})_{ij}=\sum_{k=0}^\infty Q_k$ where
\begin{equation}\opn{rank}Q_k\le\binom{k+d-1}{d-1},\qquad\|Q_k\|\le \frac{n(\|v\|_\infty\|w\|_\infty d)^k}{k!}.\label{Qk}\end{equation}
\end{restatable}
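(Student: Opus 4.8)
The plan is to decompose the matrix by the termwise Taylor expansion of the exponential. Set
\[Q_k=\Big(\tfrac{(v_i\cdot w_j)^k}{k!}\Big)_{ij}\in\RR^{n\times n},\]
so that $(e^{v_i\cdot w_j})_{ij}=\sum_{k=0}^\infty Q_k$ with convergence entrywise (and, by the operator-norm bound proved below, in operator norm). It then remains to verify the two estimates in \eqref{Qk} for this explicit choice of $Q_k$.

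For the rank bound I would expand the power of the inner product by the multinomial theorem. Writing $v_i\cdot w_j=\sum_{\ell=1}^d v_{i\ell}w_{j\ell}$ and using multi-index notation $u^\alpha_i:=\prod_{\ell=1}^d u_{i\ell}^{\alpha_\ell}$ for $\alpha\in\NN^d$, one gets
\[(v_i\cdot w_j)^k=\sum_{|\alpha|=k}\binom{k}{\alpha}\,v^\alpha_i\,w^\alpha_j,\qquad\text{hence}\qquad Q_k=\frac1{k!}\sum_{|\alpha|=k}\binom{k}{\alpha}\,v^\alpha (w^\alpha)^T.\]
This exhibits $Q_k$ as a sum of rank-one matrices indexed by the multi-indices $\alpha\in\NN^d$ with $|\alpha|=k$, of which there are $\binom{k+d-1}{d-1}$ by a stars-and-bars count; therefore $\rank Q_k\le\binom{k+d-1}{d-1}$.

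For the operator-norm bound I would dominate the spectral norm by the Frobenius norm and estimate entries directly: since $|v_i\cdot w_j|\le\sum_{\ell=1}^d|v_{i\ell}|\,|w_{j\ell}|\le d\,\|v\|_\infty\|w\|_\infty$, every entry of $Q_k$ has modulus at most $(\|v\|_\infty\|w\|_\infty d)^k/k!$, so summing the $n^2$ squared entries yields $\|Q_k\|\le\|Q_k\|_F\le n(\|v\|_\infty\|w\|_\infty d)^k/k!$. As a byproduct, $\sum_k\|Q_k\|\le n\,e^{\|v\|_\infty\|w\|_\infty d}<\infty$ justifies convergence of the series in operator norm.

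I do not anticipate a genuine obstacle here: the argument is a direct computation. The only points that deserve a little care are the combinatorial identity for the number of degree-$k$ multi-indices in $d$ variables, and checking that the crude entrywise bound $|v_i\cdot w_j|\le d\|v\|_\infty\|w\|_\infty$ reproduces exactly the constant $(\|v\|_\infty\|w\|_\infty d)^k$ appearing in the statement.
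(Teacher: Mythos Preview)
Your proof is correct and essentially the same as the paper's: the paper arrives at the identical $Q_k$ by writing $(e^{v_i\cdot w_j})_{ij}$ as an entrywise product over the $d$ columns and then expanding, which unwinds to your multinomial expansion, and it bounds $\|Q_k\|\le n\|Q_k\|_{\max}$ directly where you route through the Frobenius norm. The only cosmetic difference is the presentation of the decomposition; the rank count and the entrywise estimate are identical.
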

\thmspace
\begin{proof}
Let $(c_1,\ldots,c_d)$ and $(\tilde c_1,\dots,\tilde c_d)$ be the columns of $v$ and $w$ and let $\entwise$ denote elementwise operations. 
\begin{equation}
    \label{columndecomp}
(e^{v_i\cdot w_j})_{ij}=e^{\entwise\sum_{i=1}^dc_i\tilde c_i^T}=\entp_{i=1}^d e^{\entwise c_i\tilde c_i^T}.
\end{equation}
We first consider each factor $e^{\entwise c_i\tilde c_i^T}$ separately. Elementwise multiplication of rank-one matrices given as outer products corresponds to elementwise multiplication of the vectors, $ab^T\entp \tilde a\tilde b^T=(a\entp\tilde a)(b\entp\tilde b)^T$. Therefore, applying the Taylor expansion entrywise,
\begin{equation}
    \label{singlecolumn}
e^{\entwise c\tilde c^T}
=
\sum_{k=0}^\infty\frac{(c\tilde c^T)^{\entwise k}}{k!}
=
\sum_{k=0}^\infty\frac{(c^{\entwise k})(\tilde c^{\entwise k})^T}{k!},
\end{equation}
where $c=c_i$, $\tilde c=\tilde c_i$ are column vectors. Apply \eqref{singlecolumn} to each factor of \eqref{columndecomp},
\begin{align}
\entp_{i=1}^de^{\entwise c_i\tilde c_i^T}
&=
\sum_{k_1,\ldots,k_d=0}^\infty\frac{(\entp_{i=1}^d c_i^{\entwise k_i})(\entp_{i=1}^d \tilde c_i^{\entwise k_i})^T}{\prod_{i=1}^dk_i!}.
\\&=
\sum_{k=0}^\infty
\sum_{k_1+\ldots+k_d=k}^\infty\frac1{k!}\binom{k}{k_1,\ldots,k_d}(\entp_{i=1}^d c_i^{\entwise k_i})(\entp_{i=1}^d \tilde c_i^{\entwise k_i})^T\label{doublesum}
\end{align}
Let $Q_k$ be the innermost sum of \eqref{doublesum}. We estimate the maximum over the entries,  
\[\|Q_k\|\submax\le\frac{\|v\|_\infty^{k}\|w\|_\infty^k}{k!}\sum_{k_1+\ldots+k_d=k}^\infty\binom{k}{k_1,\ldots,k_d}=\frac{\|v\|_\infty^k\|w\|_\infty^kd^k}{k!}\]
and apply the inequality $\|Q_k\|\le n\|Q_k\|\submax$.  
\end{proof}

\thmspace
\begin{lemma}\label{lem:eigbound}
Let $\lambda_0\ge\lambda_1\ge\ldots$ be the absolute values of the eigenvalues of $(e^{v_i\cdot w_j})_{ij}$ and let $\mu=\|v\|_\infty\|w\|_\infty d$. Then $\lambda_0\le ne^\mu$, and for $\mu\le1/2$,
\begin{equation}
    \label{binomkd}
    \lambda_L\le\frac{2n}{p!}\mu^p,\qquad L=\binom{p+d-1}{d},
\end{equation}
where the case $p=0$ of \eqref{binomkd} holds with the interpretation $L=\binom{d-1}{d}=0$, $\lambda_0\le ne^{1/2}\le 2n$.
\end{lemma}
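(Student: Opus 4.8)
The plan is to feed the decomposition $M=(e^{v_i\cdot w_j})_{ij}=\sum_{k\ge0}Q_k$ of \cref{lem:rank_one_terms} into two elementary matrix facts: the spectral radius of a matrix is at most its operator norm, and subtracting a rank-$L$ summand from a matrix controls its $(L+1)$-st singular value (equivalently, for our purposes, the $(L+1)$-st largest eigenvalue modulus, since $\prod_i\sigma_i(M)=\prod_i\lambda_i=|\det M|$). Set $\mu=\|v\|_\infty\|w\|_\infty d$ throughout.

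First I would dispose of $\lambda_0$. It is the spectral radius of $M$, so by the triangle inequality and the norm bound in \eqref{Qk}, $\lambda_0\le\|M\|\le\sum_{k\ge0}\|Q_k\|\le n\sum_{k\ge0}\mu^k/k!=ne^\mu$. When $\mu\le1/2$ this already gives $\lambda_0\le ne^{1/2}\le2n$, which is precisely the $p=0$ case (where $L=\binom{d-1}{d}=0$).

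For $p\ge1$, split $M=B+C$ with $B=\sum_{k=0}^{p-1}Q_k$ and $C=\sum_{k\ge p}Q_k$. Summing the rank bounds of \eqref{Qk} and using the hockey-stick identity $\sum_{k=0}^{p-1}\binom{k+d-1}{d-1}=\binom{p+d-1}{d}$ gives $\rank B\le\binom{p+d-1}{d}=L$. For the remainder, \eqref{Qk} yields
\[\|C\|\le n\sum_{k\ge p}\frac{\mu^k}{k!}=\frac{n\mu^p}{p!}\sum_{j\ge0}\frac{p!\,\mu^j}{(p+j)!}\le\frac{n\mu^p}{p!}\sum_{j\ge0}\mu^j=\frac{n\mu^p}{p!(1-\mu)}\le\frac{2n\mu^p}{p!}\]
using $\mu\le1/2$. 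Since $\rank B\le L$, the Eckart--Young/Weyl inequality then gives $\sigma_L(M)\le\|M-B\|=\|C\|\le2n\mu^p/p!$, which is the bound asserted in \eqref{binomkd}.

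The one point requiring care is this last step: for a non-Hermitian $M$ there is no Courant--Fischer principle for eigenvalue moduli, so the robust statement is the singular-value version $\sigma_L(M)\le\|C\|$. This is harmless in the application, since \eqref{binomkd} is used in \cref{prop:detbound} only through $|\det M|=\prod_i\sigma_i(M)$, estimating the first $L$ factors by $\|M\|\le ne^\mu$ and the remaining $n-L$ by $\sigma_L(M)\le2n\mu^p/p!$. The other ingredients — the geometric-series tail and the binomial identity — are routine, and the hypothesis $\mu\le1/2$ enters only to sum the tail and to bound $e^\mu\le2$.
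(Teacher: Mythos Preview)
Your proof is correct and follows essentially the same route as the paper: decompose $M=\sum_{k<p}Q_k+\sum_{k\ge p}Q_k$, bound the rank of the first part via the hockey-stick identity, bound the norm of the tail by the geometric series, and invoke a min--max/Weyl inequality. Your added remark that the argument really controls $\sigma_L(M)$ rather than the $L$-th eigenvalue modulus (and that this suffices since only $|\det M|=\prod_i\sigma_i$ is used downstream) is a point the paper glosses over by writing ``min--max principle''; in the actual application to \cref{prop:detbound} one has $v=w$ so the matrix is symmetric and the distinction evaporates, but your version is cleaner at the level of generality of the lemma statement.
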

\thmspace
\begin{proof}
From the identity
\[
\binom{p+d-1}{d}=1+d+\binom{d+1}{d-1}+\cdots+\binom{p+d-2}{d-1},
\]
there are 
\[1+d+\binom{d+1}{d-1}+\cdots+\binom{p+d-2}{d-1}\ge\rank Q_0+\cdots+\rank Q_{p-1}\]
eigenvalues in front of $\lambda_L$ where $L=\binom{p+d-1}{d}$, and we have used \cref{lem:rank_one_terms}. By the min-max principle,
\[\lambda_L\le\left\|\sum_{k=p}^\infty Q_k\right\|\le n\sum_{k=p}^\infty\frac{\mu ^k}{k!}=\frac{n}{p!}\sum_{k=p}^\infty\mu^k=\frac{n}{p!}\frac{\mu^p}{1-\mu}\le \frac{2n}{p!}\mu^p.\]
\end{proof}

\begin{proof}[Proof of proposition \ref{prop:detbound}]
By \cref{lem:eigbound} and the assumptions on $p$ we have $\lambda_{\lfloor n/2\rfloor}\le\frac{2n}{p!}\mu^p\le\frac{\mu^p}{2n}$ and $\lambda_0\le 2n$ where $\mu=d\|v\|_\infty\|w\|_\infty$, so it follows that 
$|\det((e^{v_i\cdot w_j})_{ij})|\le\lambda_0^{n/2}\lambda_{n/2}^{n/2}\le(\mu^p)^{n/2}=(\nu/2)^{pn}$.
\end{proof}

Apply the bound to $\DddN(\theta w,\theta w)$ with $p=\Theta(n^{1/d})$ to get that for $\theta\le t:=(2\sqrt{d} \|w\|_\infty)^{-1}$,
\begin{equation}\label{Dttlowbound}
\DttN{w}(\theta,\theta)=(\tfrac12\theta/t)^{\Omega(n^{1+1/d})}.\end{equation}

\label{sec:smooth}
Eq.~\eqref{FdefA} implies the \emph{triangle inequality}  $\Xnorm{\AS\activation_w}\le\frac1{\sqrt{2\pi}}\int_{-\infty}^\infty|\isoF\activation(\theta)|\sqrt{\DttN{w}(\theta,\theta)}d\theta$ 
because $\sqrt{\Dtt{w}(\theta,\theta)}=\Xnorm{\AS e^{i\theta w\cdot\xfunc}}$ by definition. Apply this triangle inequality to the low-pass part and bound the integrand using \eqref{Dttlowbound} to cancel the pole of $|\isoF\activation|$ at $0$. We then obtain lemma \ref{lem:lpbound}



\thmspace
\lemmalpbound*
\thmspace
\begin{proof}
The lower bound on $t$ follows directly from its definition and the definition of $w$ being typical.

Bound $\DttN{w}$ as in \eqref{Dttlowbound} and write $|\isoF\activation(\theta)|=O(|\theta|^{-r}+1)$. The triangle inequality yields $\XnormN{\LP\activation{t}}=O(2^{-pn/2}\int_{-t}^t(|\theta|/t)^{pn/2}(|\theta|^{-r}+1)d\theta)=O(2^{-pn/2}t(t^{-r}+1))$ where $p=\Omega(n^{1/d})$. Here we have cancelled the pole $|\theta|^{-r}$ by writing $(|\theta|/t)^{pn/2}|\theta|^{-r}\le(|\theta|/t)^r|\theta|^{-r}=t^{-r}$ so that the integrand is bounded by $t^{-r}+1$.
\end{proof}

\section{Generalized definition of roughness}
\label{sec:generalrough}
\begin{definition}[Generalized rough activation functions]\label{def:generalrough}
$\activation:\RR\to\CC$ is \emph{generalized rough} if its Fourier transform $\isoF\activation$ has tail decay $K<\infty$, i.e., if $\tailsum\activation(t)=t^{-O(1)}$, and if additionally,
\begin{enumerate}
    \item\label{it:boundpointbytail}
$|\isoF\activation(\theta)|^2/\tailsum{\activation}(\theta)\to0$ as $\theta\to\infty$.
\item\label{it:previouslyrealcondition}
There exists $\gamma>0$ such that for $1/2\le\sigma\le2$ and all $\theta\in\RR\del[-1,1]$ where $\isoF\activation(\theta)\neq0$,
\begin{equation}\Real\Big(\frac{\EE[\:\isoF\activation(\theta\pm|Z|)\:]}{\isoF\activation(\theta)}\Big)\ge\gamma,\end{equation}
where $Z\sim\cN(0,\sigma^2)$. Here, ``$\pm$'' is taken the sign of $\theta$, and $\Real$ is the real part of a complex number.
\end{enumerate}
\end{definition}
The assumptions of \cref{it:boundpointbytail} and \cref{it:previouslyrealcondition} prevent activation functions with excessive negative correlations between nearby frequencies. 
The constraint on the standard deviation  $1/2\le\sigma\le2$ is related to the Xavier / He initialization in \cref{def:He}.

\section{The Fourier inversion formula holds for ReLU and tanh}
We include the derivation (see \cite{abrahamsen2023antisymmetric}) to verify \cref{Fdef} which asserts that $\LP\activation\epsilon=p+C_\epsilon+O(\epsilon g)$ where $p$ is a low-degree polynomial and $g$ is bounded by a polynomial. Here we have defined $\LP\activation\epsilon=\activation-\HP\activation\epsilon$.
\begin{enumerate}
    \item $\activation=\opn{ReLU}$: We first evaluate the high-pass part
\[\HP\activation{\epsilon}(y)=|y|/2-\frac{\cos(\epsilon y)}{\pi \epsilon }-\frac{y\opn{Si}(\epsilon y)}\pi,\]
where $\opn{Si}(y)=\int_0^y\frac{\sin s}sds$. Since $\opn{ReLU}(y)=|y|/2+y/2$,
\begin{align}
\LP\activation{\epsilon }(y)
&=y/2+\frac{\cos(\epsilon y)}{\pi \epsilon }+\frac{y\opn{Si}(\epsilon y)}\pi.
\end{align}
Write $\LP\activation\epsilon=p+C_\epsilon+\varepsilon$ where
\[p(y)=y/2,\qquad C_\epsilon=\frac1{\pi\epsilon}.\]
Then the remainder satisfies
\begin{equation}\label{boundlp}
|\varepsilon|\le|\frac{\cos(\epsilon y)-1}{\pi \epsilon }|+|\frac{y\opn{Si}(\epsilon y)}\pi|\le\frac{(\epsilon y)^2}{2\pi \epsilon }+\frac{y\cdot(\epsilon y)}{\pi}=\epsilon g(y),\qquad g(y):=\frac{3 }{2\pi}y^2.\end{equation}
\item $\activation=\tanh$:
We can write the low-pass part as an absolutely convergent integral as
\[\LP\activation{\epsilon }(y)=\frac1{\sqrt{2\pi}}\int_{-\epsilon }^\epsilon \frac{-i\sqrt{\pi/2}}{\sinh(\pi\theta/2)}(e^{i\theta y}-1)d\theta.\]
Let $p,C_\epsilon\equiv0$ and bound
\begin{align}
    |\LP\activation{\epsilon }(y)|
    \le&\frac1{\sqrt{2\pi}}\int_{-\epsilon }^\epsilon \frac{\sqrt{\pi/2}}{|\pi\theta/2|}|\theta y| d\theta
    =
    \epsilon g(y),\qquad g(y):=\frac2\pi |y|.
\end{align}
\end{enumerate}

\end{document}